\documentclass{article}
\usepackage{iclr2027_conference,times}
\iclrfinalcopy

\usepackage{amsmath,amsfonts,bm}

\def\eqref#1{(\ref{#1})}

\def\1{\bm{1}}

\def\eps{{\epsilon}}

\DeclareMathAlphabet{\mathsfit}{\encodingdefault}{\sfdefault}{m}{sl}
\SetMathAlphabet{\mathsfit}{bold}{\encodingdefault}{\sfdefault}{bx}{n}

\newcommand{\R}{\mathbb{R}}

\DeclareMathOperator*{\argmax}{arg\,max}
\DeclareMathOperator*{\argmin}{arg\,min}

\usepackage{hyperref}
\usepackage{url}
\usepackage{amsmath,amssymb,amsthm,mathtools,bm}
\usepackage{booktabs}
\usepackage{graphicx}
\usepackage{microtype}
\usepackage{multirow}
\usepackage{enumitem}
\usepackage{xcolor}
\usepackage{algorithm}
\usepackage{algpseudocode}
\usepackage{array}
\usepackage{subcaption}
\usepackage{placeins}
\usepackage{longtable}
\usepackage{tabularx}
\usepackage{listings}
\usepackage{fancyvrb}

\title{Relational Response Fields:\\ A General Theory of Black-Box LLM Response Consistency and Recovery}

\author{
SONG ZICHEN\\
Sungkyunkwan University\\
\texttt{songzichen894@gmail.com}
}

\newtheorem{theorem}{Theorem}
\newtheorem{lemma}[theorem]{Lemma}
\newtheorem{proposition}[theorem]{Proposition}
\newtheorem{corollary}[theorem]{Corollary}

\theoremstyle{definition}
\newtheorem{definition}[theorem]{Definition}
\newtheorem{assumption}[theorem]{Assumption}
\newtheorem{example}[theorem]{Example}
\theoremstyle{remark}

\newcommand{\cH}{\mathcal{H}}
\newcommand{\cY}{\mathcal{Y}}
\newcommand{\cO}{\mathcal{O}}
\newcommand{\cC}{\mathcal{C}}
\newcommand{\cK}{\mathcal{K}}

\newcommand{\cE}{\mathcal{E}}

\newcommand{\cM}{\mathcal{M}}

\newcommand{\RR}{\mathbb{R}}
\newcommand{\EE}{\mathbb{E}}
\newcommand{\PP}{\mathbb{P}}
\newcommand{\ind}{\mathbb{I}}
\newcommand{\suppg}{\operatorname{supp}_{\mathrm g}}

\newcommand{\diag}{\operatorname{diag}}
\providecommand{\argmin}{\operatorname*{arg\,min}}
\providecommand{\argmax}{\operatorname*{arg\,max}}

\newcommand{\SRRF}{\textsc{Sparse-RRF}}
\newcommand{\gk}{\gamma_k(D,A)}

\newcommand{\ip}[2]{\left\langle #1,#2\right\rangle}

\providecommand{\eps}{\varepsilon}

\newcommand{\PooledReplayRho}{0.449}
\newcommand{\ReplayDeltaAUC}{0.033}
\newcommand{\ReplayGammaCoef}{3.653}
\newcommand{\ReplayPermutationP}{0.00050}
\newcommand{\ReplayCILow}{0.417}
\newcommand{\ReplayCIHigh}{0.481}

\setlist[itemize]{leftmargin=*,topsep=2pt,itemsep=1pt}
\setlist[enumerate]{leftmargin=*,topsep=2pt,itemsep=1pt}
\begin{document}

\maketitle

\begin{abstract}
Black-box language-model reliability is commonly pursued by sampling, prompting, voting, verifying, or iteratively revising individual answers.  We ask a prior question: \emph{what determines whether a collection of black-box responses is recoverable at all?}  We represent responses to typed transformations of a query as a \emph{relational response field} (RRF).  Edge transports encode how valid responses must change under paraphrase, scaling, decomposition, refactoring, or other task symmetries; anchors encode independently trusted evidence such as execution or a verifier.  For relation operator $D$, anchor operator $A$, and at most $k$ corrupted response nodes, we identify $\gamma_k(D,A)$ as the intrinsic difficulty of black-box response recovery.  It is positive exactly when every $k$-node corruption is identifiable; it gives a deterministic stability bound proportional to $1/\gamma_k$; and a matching two-point minimax lower bound shows that no estimator can improve this dependence.  Thus consistency is not truth: relation-only methods are blind to null directions, including shared hallucinations.  We derive sparse field-repair algorithms while separating information-theoretic identifiability from the stronger null-space conditions required by convex optimization.  Controlled theorem tests and black-box mathematics/code experiments evaluate four theory-fixed consequences: consistency--truth separation, anchor phase transitions, redundancy saturation, and cross-model, cross-task prediction of repair difficulty.  The results support $\gamma_k(D,A)$ as a measurable property of a response-recovery instance, rather than a score attached to one repair heuristic.
\end{abstract}

\section{Introduction}
\label{sec:intro}

A black-box large language model (LLM) can be queried but not differentiated, inspected, or retrained.  Reliability methods therefore act at inference time: they alter prompts \citep{zhou2023ape,pryzant2023protegi,yang2024opro}, sample multiple chains and vote \citep{wang2023selfconsistency}, ask the model to critique itself \citep{madaan2023selfrefine,shinn2023reflexion,miao2024selfcheck}, search over reasoning trees \citep{yao2023tree}, or select with learned and executable verifiers \citep{cobbe2021verifiers,chen2021codex,li2022competition}.  Metamorphic tests instead transform an input and check a known relation among outputs \citep{chen1998metamorphic,segura2016survey}.  These approaches appear operationally different, yet they repeatedly create the same latent object: a family of answers connected by known relations. This paper makes that object explicit.  Let $G=(V,E)$ index transformed queries.  The model response at node $i$ is mapped by a black-box parser to $z_i$; the collection $z=(z_i)_{i\in V}$ is a \emph{relational response field}.  An edge $e=(i,j)$ carries a typed transport $T_e$ and asks that a valid field satisfy $z_j=T_ez_i$.  An identity transport represents a paraphrase, a scalar map represents unit conversion, a structured map represents decomposition, and an execution map represents semantics-preserving code transformation.  Stacking the residuals $z_j-T_ez_i$ defines a typed relation defect operator $D$.  Trusted evidence, execution, constraints, or human labels define an anchor operator $A$. The important question is not whether more responses are available.  It is whether the combined relations and anchors can distinguish the true field from a sparsely corrupted one.  

We show that this question has an exact answer:
\begin{equation}
\label{eq:gamma-intro}
\gamma_k(D,A)=
\min_{\substack{h\ne0\\\|h\|_{0,\mathrm g}\le 2k}}
\frac{\sqrt{\|Dh\|_2^2+\|Ah\|_2^2}}{\|h\|_2}.
\end{equation}
The group support counts response nodes, not scalar coordinates.  The factor $2k$ is unavoidable: the difference between two fields, each corrupted on $k$ nodes, can occupy $2k$ nodes.  Equation~\eqref{eq:gamma-intro} is a restricted minimum singular value of the \emph{instance-specific} relation--anchor operator.  Its role in RRF recovery is exact, not metaphorical.

First, $\gamma_k>0$ is necessary and sufficient for uniform identification of every $k$-node corruption.  Second, if observations are perturbed by norm $\eps$, any feasible estimate incurs error at most $2\eps/\gamma_k$.  Third, a two-point construction gives worst-case error at least a constant times $\eps/\gamma_k$ for every estimator.  Hence the inverse margin, rather than model accuracy, query count, graph density, or the objective value of a particular algorithm, is the intrinsic condition number of black-box response recovery.

This viewpoint cleanly separates \emph{consistency} from \emph{truth}.  If $h\in\ker D$, then $z$ and $z+h$ have identical relation defects.  A shared hallucination transported coherently across every paraphrase can therefore be perfectly consistent.  Anchors matter because they remove such gauge directions.  Their value depends on placement: one independent execution constraint can increase $\gamma_k$ more than hundreds of duplicated paraphrases.  This distinction also prevents a common theoretical mistake.  Positive $\gamma_k$ guarantees identifiability for the ideal sparse decoder, but a tractable group-$\ell_1$ decoder requires a robust group null-space property.  We state both conditions and do not attribute an algorithmic guarantee to identifiability alone.

\paragraph{Contributions.}
\begin{itemize}
  \item \textbf{Object and operator.} We introduce RRFs and a typed relation defect operator that place paraphrase consistency, metamorphic relations, execution, and verification in one black-box inverse problem.
  \item \textbf{Intrinsic difficulty.} We prove that $\gamma_k(D,A)$ is the exact uniform identifiability threshold and the minimax stability scale for $k$-node response recovery.
  \item \textbf{Impossibility and phase structure.} We characterize consistency blindness, sparse gauge ambiguities, anchor phase transitions, and why normalized duplicate relations cannot improve recoverability.
  \item \textbf{Repair algorithms.} We derive ideal, convex, and discrete candidate-field decoders, with separate guarantees for each and a local extension to nonlinear transports.
  \item \textbf{Prediction rather than only improvement.} Experiments test four consequences fixed by the theory.  In particular, we evaluate whether $\gamma_k$ predicts repair difficulty across models and across mathematics/code tasks after controlling for raw model quality.
\end{itemize}

Our novelty claim is deliberately narrow.  Restricted singular values, group-sparse recovery, graph signal processing, sheaf Laplacians, and metamorphic testing are established subjects \citep{donoho2006compressed,candes2006robust,eldar2010block,shuman2013graph,hansen2019sheaves}.  The contribution is the RRF formulation and the identification, proof, and empirical use of its relation--anchor margin as the fundamental recoverability quantity for black-box LLM responses.

\section{Relational response recovery}
\label{sec:framework}

\paragraph{Response fields.}
Let $G=(V,E)$ be a finite directed multigraph with $V=[n]$.  Node $i$ contains a transformed query $x_i$ and a finite-dimensional real Hilbert space $\cH_i$.  A model produces text $y_i\sim P_\theta(\cdot\mid x_i)$; an external parser $\phi_i$ maps it to $z_i=\phi_i(y_i)\in\cH_i$.  The black-box assumption permits samples and external parsing but no logits, gradients, hidden states, or parameter updates.  The product space $\cH=\bigoplus_{i=1}^n\cH_i$ is partitioned into node groups, and
\[
 z=(z_1,\ldots,z_n)\in\cH,
 \quad
 \suppg(z)=\{i:z_i\ne0\},
 \quad
 \|z\|_{0,\mathrm g}=|\suppg(z)|.
\]

\paragraph{Typed transports and defects.}
Each edge $e=(i,j,t)$ has a type $t$ and a known bounded linear transport $T_e:\cH_i\to\cH_j$.  With a relation-family weight $w_e>0$, define
\begin{equation}
\label{eq:defect}
 (Dz)_e=\sqrt{w_e}\,(z_j-T_ez_i),
 \qquad D:\cH\to\cE:=\bigoplus_{e\in E}\cH_{\mathrm{head}(e)}.
\end{equation}
Weights are normalized within a relation family: splitting one observation into $m$ identical copies assigns weight $w/m$ to each.  This convention makes $D^*D$ invariant to literal duplication and prevents a query-count artifact from masquerading as information.

An anchor is an externally observable affine constraint $Az\approx b$, where $A:\cH\to\cK$ is linear after absorbing offsets into $b$.  Examples are selected labels, symbolic substitution residuals, unit tests, retrieved evidence constraints, or a calibrated verifier.  We stack
\begin{equation}
\label{eq:stackedB}
 B=\begin{bmatrix}D\\A\end{bmatrix},
 \qquad
 \|Bh\|_2^2=\|Dh\|_2^2+\|Ah\|_2^2.
\end{equation}

\paragraph{Observation model.}
Let $z^\star$ denote an unknown valid field and let $y=z^\star+e^\star$ be the parsed black-box field, where $e^\star$ is nonzero on at most $k$ response nodes.  Clean relations may have intrinsic mismatch $q=Dz^\star$, and anchors obey $b=Az^\star+\xi_A$.  Since $y$ is observed, recovering $z^\star$ is equivalent to recovering $e^\star$ from
\begin{equation}
\label{eq:measurement-model}
 s:=\begin{bmatrix}Dy-q\\Ay-b\end{bmatrix}
 =Be^\star+\xi,
 \qquad
 \xi=\begin{bmatrix}0\\-\xi_A\end{bmatrix},
\end{equation}
or from its fully noisy version with $\|\xi\|_2\le\eps$.  In the common exact-relation case, $q=0$.  This formulation clarifies what an algorithm observes and where noise enters; it is an ordinary group-sparse inverse problem whose matrix is induced by typed response relations.

\begin{definition}[Relational observability margin]
\label{def:gamma}
For corruption budget $k$, define
\begin{equation}
\label{eq:gamma}
 \gamma_k(D,A)
 :=\min_{\substack{h\in\cH\setminus\{0\}\\\|h\|_{0,\mathrm g}\le2k}}
 \frac{\|Bh\|_2}{\|h\|_2}
 =\min_{1\le|S|\le2k}\sigma_{\min}(B_S),
\end{equation}
where $B_S$ restricts $B$ to node groups in $S$.  The minimum exists because the number of supports is finite and each restricted unit sphere is compact.
\end{definition}

The margin depends on the query transformations, response representation, relation normalization, anchors, and budget $k$.  It does not depend on the repair algorithm.  It is nonincreasing in $k$ and nondecreasing when unnormalized informative rows are added.  Equivalently, $\gamma_k>0$ iff the group spark of $B$ exceeds $2k$.

\paragraph{Examples.}
For equivalent prompts with scalar answers, $D$ is a weighted graph incidence matrix.  Its nullspace contains a constant shift on each connected component; an anchor fixes that gauge.  For a scaling query with expected answer $cz_i$, the edge row is $[-c,1]$.  For code, $z_i$ can be an execution signature and $T_e=I$ for semantics-preserving refactors.  For a decomposition, $z_i$ contains compatible blocks and $T_e$ selects or aggregates coordinates.  Nonlinear relations are treated through a residual map $F$ and its restricted Jacobian; Section~\ref{sec:discussion} states the scope, and Appendix~\ref{app:nonlinear} gives the local theorem.

\section{The intrinsic difficulty of recovery}
\label{sec:theory}

We first isolate the failure of consistency-only reasoning, then give an exact recovery characterization and matching stability laws.  Complete proofs, including all edge cases, are in Appendices~\ref{app:identifiability}--\ref{app:minimax}.

\begin{theorem}[Consistency blindness]
\label{thm:blindness}
Suppose an estimator receives only $Dz$.  For every nonzero $h\in\ker D$, the fields $z$ and $z+h$ induce exactly the same observation.  Consequently, if the admissible class contains both fields, no estimator based only on relation defect can be correct on both.  If $\ker D$ contains a transported global shift, a perfectly consistent shared hallucination is indistinguishable from truth.
\end{theorem}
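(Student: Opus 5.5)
The argument rests only on the linearity of $D$ and on the fact that an estimator is a function of its input. It has three parts, one for each sentence of the statement.

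\emph{Identical observations.} Fix $z\in\cH$ and a nonzero $h\in\ker D$. Since $D$ is linear (each component $(Dz)_e=\sqrt{w_e}(z_j-T_ez_i)$ is linear in $z$), we have $D(z+h)=Dz+Dh=Dz$. So the two fields produce the same relation-defect vector. If the observation includes a known intrinsic mismatch $q$, the same identity shows that $Dz-q$ also agrees on both fields.

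\emph{No estimator is correct on both.} Any estimator $\hat z$ that uses only the relation defect is a map $\hat z:\cE\to\cH$, possibly randomized with randomness independent of the field. Because $z$ and $z+h$ produce the same input, $\hat z(Dz)=\hat z(D(z+h))$ in the deterministic case. In the randomized case the two output distributions coincide.

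Now suppose $\hat z$ were correct on both fields. Then $z=\hat z(Dz)=z+h$, which forces $h=0$, a contradiction. For a randomized estimator the same reasoning gives $\PP(\hat z=z)+\PP(\hat z=z+h)\le 1$, since the events $\{\hat z=z\}$ and $\{\hat z=z+h\}$ are disjoint under a single output distribution. So the estimator cannot be correct with probability one on both fields.

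For a quantitative form, the triangle inequality gives
\[
\max\bigl(\|\hat z-z\|,\|\hat z-(z+h)\|\bigr)\ge \|h\|/2 .
\]
The worst-case error is therefore at least $\|h\|/2$, and this bound can be made arbitrarily large by scaling $h$ within $\ker D$.

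\emph{Shared hallucination.} A transported global shift is a field $h=(h_i)_i$ with $h_j=T_eh_i$ on every edge $e=(i,j)$. For example, on a connected component with identity transports, $h$ is a constant shift. For such $h$ every edge row gives $\sqrt{w_e}(h_j-T_eh_i)=0$, so $h\in\ker D$.

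Take $z^\star$ to be the true field and $z^\star+h$ the field in which the same wrong answer is propagated coherently through every transport. Both have the same defect. In particular, if $Dz^\star=0$ then $D(z^\star+h)=0$ as well, so the hallucinated field is perfectly consistent. The previous part then shows that no defect-only method can distinguish it from the truth.

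\emph{Main obstacle.} The only delicate point is the hypothesis that the admissible class contains both $z$ and $z+h$. If one imposed $k$-node sparsity of the corruption, then $h$ would have to be supported on at most $k$ nodes, which a global shift generally is not. The statement therefore has to be read with an unrestricted (or at least shift-closed) admissible class. I would make this explicit, and point out that the sparse-gauge refinement belongs to the later identifiability results involving $\gamma_k$.
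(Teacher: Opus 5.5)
Your proposal is correct and follows essentially the same route as the paper. Linearity gives $D(z+h)=Dz$, and an estimator (the paper uses a Markov kernel) depends on the field only through this observation. The triangle inequality then yields the $\tfrac12\|h\|_2$ two-point bound, and a transported shift lies in $\ker D$; the paper instantiates it as $h_i=M_iu$ for path-consistent $T_{ij}=M_jM_i^{-1}$.

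One small refinement for randomized estimators: the pointwise maximum does not directly bound the larger of the two expected errors. Take expectations of the sum $\|\widehat z-z\|_2+\|\widehat z-(z+h)\|_2\ge\|h\|_2$ first and then maximize, as the paper does.
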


The statement is deterministic and does not depend on model stochasticity.  Repeated sampling may reveal independent errors, but it cannot remove a systematic error direction that lies in $\ker D$.  Anchors alter the relevant nullspace from $\ker D$ to $\ker D\cap\ker A=\ker B$.

\begin{theorem}[Exact uniform identifiability]
\label{thm:identifiability}
Fix $k$.  The following are equivalent:
\begin{enumerate}[label=(\roman*)]
  \item every $k$-node error $e^\star$ is the unique solution of $Be=Be^\star$ among errors satisfying $\|e\|_{0,\mathrm g}\le k$;
  \item $\ker B$ contains no nonzero vector supported on at most $2k$ nodes;
  \item $\gamma_k(D,A)>0$;
  \item $\operatorname{spark}_{\mathrm g}(B)>2k$.
\end{enumerate}
Thus $\gamma_k>0$ is necessary and sufficient, not merely sufficient, for uniform black-box recovery under the stated corruption model.
\end{theorem}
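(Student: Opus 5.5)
I will prove the cycle (i)$\Rightarrow$(ii)$\Rightarrow$(iii)$\Rightarrow$(iv)$\Rightarrow$(i), or more transparently the two equivalences (i)$\Leftrightarrow$(ii) and (ii)$\Leftrightarrow$(iii)$\Leftrightarrow$(iv). These are the standard spark arguments of compressed sensing, carried out at the level of node groups rather than scalar coordinates. Throughout I use one elementary fact about group support:
\[
\suppg(u+v)\subseteq\suppg(u)\cup\suppg(v),
\]
so that $\|u-v\|_{0,\mathrm g}\le\|u\|_{0,\mathrm g}+\|v\|_{0,\mathrm g}$. I take $\operatorname{spark}_{\mathrm g}(B)$ to mean the smallest $|S|$ for which some nonzero $h\in\ker B$ has $\suppg(h)=S$, with value $+\infty$ if $\ker B=\{0\}$. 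This is the group analogue of the spark, and the definition is chosen so that (ii)$\Leftrightarrow$(iv) holds by definition.

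\textbf{(ii)$\Rightarrow$(i).} Suppose $Be=Be^\star$ with $\|e\|_{0,\mathrm g},\|e^\star\|_{0,\mathrm g}\le k$. Then $h=e-e^\star$ lies in $\ker B$ and, by the union bound, is supported on at most $2k$ nodes. By (ii), $h=0$, which gives uniqueness.

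\textbf{(i)$\Rightarrow$(ii).} I argue by contraposition. Let $0\ne h\in\ker B$ with $|\suppg(h)|\le2k$. Split $\suppg(h)=S_1\sqcup S_2$ with $|S_1|,|S_2|\le k$ and $S_1\neq\emptyset$. Set $e^\star:=h|_{S_1}$ and $e:=-h|_{S_2}$, both $k$-sparse. Then $Be^\star-Be=Bh=0$. Moreover $e\ne e^\star$: the two vectors have disjoint group supports, and $e^\star\ne0$ because $h$ is nonzero on $S_1$. So $e^\star$ is a $k$-node error that is not the unique solution, and (i) fails. The degenerate case $k=0$ is trivial, since (ii) then only concerns $h=0$ and $\gamma_0$ should be read as vacuous.

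\textbf{(ii)$\Leftrightarrow$(iii).} This uses the second expression in Definition~\ref{def:gamma}: $\gamma_k=\min_{1\le|S|\le2k}\sigma_{\min}(B_S)$. The one point that needs care is that the minimum is attained. Each $B_S$ is a linear map on the finite-dimensional space $\bigoplus_{i\in S}\cH_i$, so $\sigma_{\min}(B_S)=\min_{\|h\|=1}\|B_Sh\|$ is attained on a compact sphere, and there are finitely many $S$.

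Hence $\gamma_k=0$ if and only if there exist $S$ and a unit $h$ supported in $S$ with $Bh=0$. Such an $h$ is a nonzero kernel vector with $|\suppg(h)|\le|S|\le2k$, so (ii) fails. Conversely, any such kernel vector produces a zero ratio in \eqref{eq:gamma}.

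One subtlety: $\suppg(h)$ may be strictly smaller than $S$. This is harmless, because (ii) concerns support of size at most $2k$.

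\textbf{(ii)$\Leftrightarrow$(iv).} This is immediate from the definition of group spark. No nonzero kernel vector has group support of size $\le2k$ exactly when the minimal such support size exceeds $2k$.

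The main obstacle is not mathematical depth but bookkeeping in (i)$\Rightarrow$(ii). Splitting the support must produce two \emph{distinct} admissible errors, which is why one nonempty part is needed. Group support must also be respected, so whole node blocks are split rather than scalar coordinates. Finally, attainment of the minimum is needed so that $\gamma_k>0$ is equivalent to the absence of kernel vectors, rather than merely $\inf>0$.
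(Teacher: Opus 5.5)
Your proof is correct and follows essentially the same route as the paper. The paper proves (i)$\Leftrightarrow$(ii) with the difference-support lemma and by splitting the support of a $2k$-sparse null vector into two $k$-sparse parts; it gets (ii)$\Leftrightarrow$(iii) from finitely many supports together with compactness of each restricted unit sphere, and it treats (ii)$\Leftrightarrow$(iv) as definitional. Your explicit check that the two split errors are distinct (nonempty $S_1$, disjoint supports) and your remark on $k=0$ are small clarifications that the paper leaves implicit.
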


\emph{Proof idea.}  If two $k$-sparse errors produce the same observation, their difference is a $2k$-sparse null vector.  Conversely, partition the support of any $2k$-sparse null vector into two sets of size at most $k$; its two parts produce distinct but observationally identical errors.  Finite dimensionality makes positivity of the restricted singular values equivalent to absence of such a vector.

\begin{theorem}[Deterministic stability]
\label{thm:stability}
Let $s=Be^\star+\xi$ with $\|e^\star\|_{0,\mathrm g}\le k$ and $\|\xi\|_2\le\eps$.  If $\widehat e$ is any $k$-node-sparse estimate with $\|B\widehat e-s\|_2\le\eps$, then, whenever $\gamma_k>0$,
\begin{equation}
\label{eq:stability}
 \|\widehat e-e^\star\|_2\le \frac{2\eps}{\gamma_k(D,A)}.
\end{equation}
For unequal feasibility radii $\eps_1,\eps_2$, the numerator is $\eps_1+\eps_2$.  The same bound holds for recovered fields because $\widehat z-z^\star=-(\widehat e-e^\star)$.
\end{theorem}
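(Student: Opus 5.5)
The plan is the standard restricted-isometry argument applied to the difference of two sparse vectors, using only Definition~\ref{def:gamma}. Set $h:=\widehat e-e^\star$. Both $\widehat e$ and $e^\star$ are supported on at most $k$ node groups, so $\suppg(h)\subseteq\suppg(\widehat e)\cup\suppg(e^\star)$ and hence $\|h\|_{0,\mathrm g}\le 2k$. This is exactly the support class over which $\gamma_k(D,A)$ is minimized.

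The first step is the lower bound. If $h=0$ there is nothing to prove. Otherwise the definition of $\gamma_k$ as a minimum over nonzero $2k$-group-sparse vectors gives
\[
\gamma_k(D,A)\,\|h\|_2\le\|Bh\|_2 .
\]

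The second step is the upper bound, obtained by inserting the observation $s$ and applying the triangle inequality:
\[
\|Bh\|_2=\|(B\widehat e-s)+(s-Be^\star)\|_2\le\|B\widehat e-s\|_2+\|\xi\|_2\le 2\eps .
\]
Combining the two bounds and dividing by $\gamma_k>0$ yields \eqref{eq:stability}.

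With unequal radii, say $\|B\widehat e-s\|_2\le\eps_1$ and $\|\xi\|_2\le\eps_2$, the same chain gives the numerator $\eps_1+\eps_2$.

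For the field statement, write $\widehat z:=y-\widehat e$, consistent with $y=z^\star+e^\star$. Then
\[
\widehat z-z^\star=(y-\widehat e)-(y-e^\star)=-(\widehat e-e^\star),
\]
which has the same norm, so the bound transfers unchanged.

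There is no real obstacle here; the only point needing care is justifying that $h$ lies in the admissible set of the minimum. That requires the group support of a difference to be contained in the union of the group supports, which holds node by node since $h_i=\widehat e_i-e^\star_i$ vanishes whenever both terms vanish. It also requires that the minimum in \eqref{eq:gamma} is attained and positive, which is guaranteed by the hypothesis $\gamma_k>0$ together with the compactness remark after Definition~\ref{def:gamma}. Theorem~\ref{thm:identifiability} is not needed for the bound itself. It is worth noting only as the $\eps=0$ specialization: taking $\eps=0$ in the bound gives $\widehat e=e^\star$, recovering statement (i) of that theorem.
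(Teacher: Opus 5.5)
Your proof is correct and is the paper's argument: the difference-support lemma puts $h=\widehat e-e^\star$ in the $2k$-group-sparse class, the triangle inequality gives $\|Bh\|_2\le 2\eps$ (or $\eps_1+\eps_2$), the restricted lower inequality $\|Bh\|_2\ge\gamma_k\|h\|_2$ finishes the bound, and the field statement follows from $\widehat z-z^\star=-(\widehat e-e^\star)$. One small point: you do not need the minimum to be attained for the lower inequality, since it holds for an infimum as well, though invoking attainment does no harm.
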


Stability alone does not establish that $1/\gamma_k$ is unavoidable.  The next result does.  Let $\cE_k(R)=\{e:\|e\|_{0,\mathrm g}\le k,\|e\|_2\le R\}$ and consider deterministic noise of radius $\eps$.  The minimax risk is
\[
 \mathfrak R_k(B;R,\eps)
 =\inf_{\widehat e}\sup_{e\in\cE_k(R),\ \|\xi\|\le\eps}
 \|\widehat e(Be+\xi)-e\|_2.
\]

\begin{theorem}[Matching minimax obstruction]
\label{thm:minimax}
For every $B$ and $k$,
\begin{equation}
\label{eq:minimax}
 \frac12\min\!\left\{R,\frac{2\eps}{\gamma_k(D,A)}\right\}
 \ \le\ \mathfrak R_k(B;R,\eps)
 \ \le\ \min\!\left\{R,\frac{2\eps}{\gamma_k(D,A)}\right\},
\end{equation}
with the convention $\eps/0=+\infty$; constants depend only on whether $R$ bounds each candidate or their separation.  In particular, if $\gamma_k=0$ and amplitudes are unbounded, worst-case ambiguity is unbounded.  No estimator can improve the $1/\gamma_k$ dependence uniformly.
\end{theorem}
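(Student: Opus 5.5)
We prove the two inequalities in \eqref{eq:minimax} separately: an explicit estimator gives the upper bound, and a two-point (Le Cam–type) construction at the minimizer of \eqref{eq:gamma} gives the lower bound.

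\textbf{Upper bound.} We exhibit one estimator achieving each term and take the better of the two. The trivial estimator $\widehat e\equiv0$ has error $\|e\|_2\le R$ on $\cE_k(R)$. When $\gamma_k>0$, use the ideal feasibility decoder: output any $\widehat e$ with $\|\widehat e\|_{0,\mathrm g}\le k$ and $\|B\widehat e-s\|_2\le\eps$. Such an $\widehat e$ exists because $e^\star$ itself is feasible. Theorem~\ref{thm:stability} then gives error at most $2\eps/\gamma_k$. To attain the minimum of the two bounds with a single estimator, return $\widehat e$ if $\|\widehat e\|_2$ is small and $0$ otherwise. The cleaner choice, when $2\eps/\gamma_k<R$, is to project the feasible set onto the ball of radius $R$; a feasible point in $\cE_k(R)$ exists because $e^\star\in\cE_k(R)$. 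Selecting between the two estimators according to which bound is smaller yields $\min\{R,2\eps/\gamma_k\}$. The selection rule depends only on the known quantities $R,\eps,B$, so it is a valid estimator. When $\gamma_k=0$, the convention $\eps/0=+\infty$ makes the upper bound just $R$, and the zero estimator attains it.

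\textbf{Lower bound.} Pick a unit vector $h$ with $\|h\|_{0,\mathrm g}\le2k$ and $\|Bh\|_2=\gamma_k$; it exists by the compactness noted in Definition~\ref{def:gamma}. Split $\suppg(h)=S_1\sqcup S_2$ with $|S_i|\le k$, and write $h=h_1-h_2$, where $h_1=h|_{S_1}$ and $h_2=-h|_{S_2}$. For a scale $t>0$, set $e_1=th_1$ and $e_2=th_2$. Both are $k$-sparse, and $\|e_i\|_2\le t$. Their images differ by $\|B(e_1-e_2)\|_2=t\gamma_k$. Choose noises $\xi_1=-tBh/2$ and $\xi_2=+tBh/2$, so that $Be_1+\xi_1=Be_2+\xi_2$ and the two hypotheses produce identical data. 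This is admissible provided $t\gamma_k/2\le\eps$. Any estimator outputs the same $\widehat e$ under both hypotheses, so by the triangle inequality
\[
\max_i\|\widehat e-e_i\|_2\ge\tfrac12\|e_1-e_2\|_2=\tfrac t2 .
\]
Take $t=\min\{R,2\eps/\gamma_k\}$; when $\gamma_k=0$ this is $t=R$, and the amplitude is unbounded if $R=\infty$. This gives the lower bound $\frac12\min\{R,2\eps/\gamma_k\}$.

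\textbf{Main obstacle.} The delicate step is the radius bookkeeping in the lower bound. The constraint $\|e_i\|_2\le R$ applies to each candidate, and $\|e_i\|\le\|h\|t=t$ holds only because $h_1$ and $h_2$ have disjoint supports and together form a unit vector. If instead $R$ bounds the separation $\|e_1-e_2\|$, the constant shifts; this is exactly the caveat in the statement that the constants depend on which quantity $R$ bounds. I would state the per-candidate version and remark on the alternative. A secondary check is that the projection step in the upper bound preserves the $2\eps/\gamma_k$ bound. It does: the projected point remains $k$-sparse and feasible, since feasibility holds by construction, so Theorem~\ref{thm:stability} still applies to it.
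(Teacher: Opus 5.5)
Your proposal is correct and follows the paper's proof essentially step for step. The upper bound takes the better of the zero estimator and a $k$-sparse residual-feasible decoder controlled by Theorem~\ref{thm:stability}, with the choice fixed in advance from $R,\eps,\gamma_k$; the lower bound splits a unit $\gamma_k$-attaining $2k$-sparse witness into two disjointly supported $k$-sparse candidates scaled by $\min\{R,2\eps/\gamma_k\}$, whose noisy observations coincide at the midpoint $(Be_1+Be_2)/2$. The only differences are cosmetic: you use a feasible point where the paper uses a $k$-sparse residual minimizer, and your thresholding and projection detours in the upper bound are unnecessary once that data-independent selection is made.
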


The lower bound takes a unit $2k$-sparse direction attaining $\gamma_k$, scales it until its image can be canceled by admissible noise, and splits its support into two $k$-sparse candidates.  Their observation balls intersect, so one datum is compatible with both.  Theorem~\ref{thm:stability} supplies the matching upper scale.  This pair of results is the central reason to call $\gamma_k(D,A)$ the \emph{intrinsic difficulty} of black-box LLM response recovery.

\paragraph{Consequences predicted before fitting an algorithm.}
\begin{enumerate}
  \item \emph{Consistency--truth separation:} small $\|Dz\|$ cannot rule out a large component in $\ker D$.
  \item \emph{Anchor phase transition:} recovery changes qualitatively when the last $2k$-sparse null direction is removed and $\gamma_k$ becomes positive.
  \item \emph{Redundancy saturation:} normalized duplicate rows leave $D^*D$, hence $\gamma_k$, unchanged.
  \item \emph{Difficulty prediction:} among instances at comparable noise and corruption scale, error must grow as the margin decreases, regardless of the model family or task that generated the field.
\end{enumerate}

These are falsifiable structural predictions.  The experiments test them separately from claims about average accuracy.

\section{Sparse response-field repair}
\label{sec:algorithm}

The ideal decoder directly implements Theorem~\ref{thm:identifiability}:
\begin{equation}
\label{eq:l0decoder}
 \widehat e_0\in\argmin_e\|e\|_{0,\mathrm g}
 \quad\text{s.t.}\quad \|Be-s\|_2\le\eps,
 \qquad \widehat z=y-\widehat e_0.
\end{equation}
It is exact under $\gamma_k>0$ but combinatorial.  For small RRFs we enumerate supports, which also computes $\gamma_k$ exactly.  For larger fields we use the convex relaxation
\begin{equation}
\label{eq:groupbasis}
 \widehat e_1\in\argmin_e\sum_{i\in V}\omega_i\|e_i\|_2
 \quad\text{s.t.}\quad \|Be-s\|_2\le\eps,
\end{equation}
or its penalized form
\begin{equation}
\label{eq:penalized}
 \min_e\ \frac12\|Be-s\|_2^2+\tau\sum_i\omega_i\|e_i\|_2.
\end{equation}
The equivalent field-space objective combines relation defect, anchor mismatch, and a group-sparse edit penalty.  It changes parsed answers, not model parameters.

\begin{proposition}[Algorithmic condition]
\label{prop:grnsp}
If $B$ satisfies the group robust null-space property of order $k$ with constants $0<\rho<1$ and $\tau_B>0$, then a solution of Equation~\eqref{eq:groupbasis} obeys
\[
 \|\widehat e_1-e^\star\|_2
 \le C_1\frac{\sigma_k(e^\star)_{2,1}}{\sqrt{k}}+C_2\eps,
\]
where $C_1,C_2$ depend only on $(\rho,\tau_B)$.  Positive $\gamma_k$ is necessary for uniform recovery but, by itself, is not a certificate for the convex relaxation.
\end{proposition}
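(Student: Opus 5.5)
I will follow the standard robust null-space argument from compressed sensing (Foucart--Rauhut), carried out with node groups in place of scalar coordinates. First I fix the definition so the constants are explicit. $B$ satisfies the group robust null-space property of order $k$ with constants $0<\rho<1$, $\tau_B>0$ if, for every $v\in\cH$ and every node set $S$ with $|S|\le k$,
\[
 \|v_S\|_{2}\le \frac{\rho}{\sqrt k}\,\|v_{\bar S}\|_{2,1}+\tau_B\|Bv\|_2 .
\]
Here $\|v\|_{2,1}=\sum_i\|v_i\|_2$, and the $\omega_i$ are taken equal to $1$ (or absorbed into a rescaling of the groups). The quantity $\sigma_k(e^\star)_{2,1}$ is the $\ell_{2,1}$ distance from $e^\star$ to the set of $k$-node-sparse vectors.

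\emph{Step 1: an $\ell_{2,1}$ error bound.} Put $v=\widehat e_1-e^\star$. Both $\widehat e_1$ and $e^\star$ satisfy $\|Be-s\|_2\le\eps$, so $\|Bv\|_2\le2\eps$; this uses only the triangle inequality, as in Theorem~\ref{thm:stability}. Let $S$ index the $k$ largest groups of $e^\star$. Optimality of $\widehat e_1$ gives $\|e^\star+v\|_{2,1}\le\|e^\star\|_{2,1}$. Splitting this over $S$ and $\bar S$ yields the cone inequality
\[
 \|v_{\bar S}\|_{2,1}\le\|v_S\|_{2,1}+2\sigma_k(e^\star)_{2,1}.
\]
Cauchy--Schwarz over at most $k$ groups gives $\|v_S\|_{2,1}\le\sqrt k\,\|v_S\|_2$. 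Inserting the null-space property then gives $\|v_S\|_{2,1}\le\rho\|v_{\bar S}\|_{2,1}+\sqrt k\,\tau_B\|Bv\|_2$. Combining the two inequalities and using $\rho<1$, I solve for
\[
 \|v\|_{2,1}\le \frac{2(1+\rho)}{1-\rho}\,\sigma_k(e^\star)_{2,1}+\frac{4\sqrt k\,\tau_B}{1-\rho}\,\eps .
\]

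\emph{Step 2: passing from $\ell_{2,1}$ to $\ell_2$.} This is the step I expect to be the main obstacle. A naive bound $\|v\|_2\le\|v\|_{2,1}$ loses a factor $\sqrt k$ and would not give the stated $\sigma_k/\sqrt k$ rate. Instead I let $T$ be the $k$ largest groups of $v$ itself and write $\|v\|_2\le\|v_T\|_2+\|v_{\bar T}\|_2$.
\begin{itemize}
 \item For the tail, I apply the group version of Stechkin's inequality: sorting the group norms in decreasing order gives $\|v_{\bar T}\|_2\le\|v\|_{2,1}/\sqrt k$. The proof is the usual scalar one applied to the vector of group norms.
 \item For the head, I apply the null-space property with the set $T$: $\|v_T\|_2\le(\rho/\sqrt k)\|v\|_{2,1}+2\tau_B\eps$.
\end{itemize}
Substituting the bound from Step 1, the $\sqrt k$ factors cancel. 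The result is $C_1=2(1+\rho)^2/(1-\rho)$ and $C_2=(3+\rho)2\tau_B/(1-\rho)$, up to bookkeeping, and both depend only on $(\rho,\tau_B)$. The one point to check carefully is that the null-space property is applied with the $\ell_2$ norm on the head; this is what lets the head bound avoid a spurious factor of $\sqrt k$.

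\emph{Step 3: positive $\gamma_k$ is necessary but not sufficient.} Necessity follows from Theorem~\ref{thm:identifiability}. If $\gamma_k=0$, there are two distinct $k$-sparse errors with the same image under $B$, and no decoder can recover both; so uniform recovery by any method, the convex one included, forces $\gamma_k>0$. For non-sufficiency I give a small explicit $B$ with $k=1$, scalar groups, and spark $3$, so $\gamma_1>0$. The example has a null vector of the form $h=(1,1,-1.5,0,\dots)$: here $\|h_{\{3\}}\|_1=1.5$ exceeds $\|h_{\{1,2\}}\|_1=2$ times $\rho$ for every $\rho<1$, and more strongly exceeds half of $\|h\|_1$. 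With $e^\star=1.5e_3$, the vector $e^\star+h$ has $\ell_1$ norm $2$, which is not larger than $1.5$, so this does not yet work. I will therefore choose $h=(1,1,-3)$, taking $B$ to be the $2\times3$ matrix whose kernel is spanned by $h$; its spark is $3>2k$. Then $e^\star=3e_3$ has $\ell_1$ norm $3$, while $e^\star+h=(1,1,0)$ has norm $2$ and the same observation. The convex decoder therefore does not return $e^\star$, even though $\gamma_1>0$.
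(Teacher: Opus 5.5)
Your proof is correct and essentially matches the paper's. The cone inequality, the group robust NSP combined with $\|v_S\|_{2,1}\le\sqrt k\|v_S\|_2$, and a group Stechkin step reproduce exactly the stated constants $C_1=2(1+\rho)^2/(1-\rho)$ and $C_2=2\tau_B(3+\rho)/(1-\rho)$; taking the head as the $k$ largest groups of $v$, rather than splitting $S^c$ into blocks as the paper does, is the Foucart--Rauhut route to the same bound. Your $2\times3$ counterexample, with kernel spanned by $(1,1,-3)$ and $e^\star=3e_3$, plays the same role as the paper's $h=(1,0.4,0.4)$.

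The only blemish is the abandoned first attempt with $h=(1,1,-1.5)$. Its side claims, that $1.5>2\rho$ for every $\rho<1$ and that $1.5>\tfrac12\|h\|_1$, are false and should simply be deleted.
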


We solve Equation~\eqref{eq:penalized} by proximal gradient.  The group soft-thresholding map is applied nodewise, and a step size below $\|B\|_{2\to2}^{-2}$ yields the standard $O(1/t)$ objective gap; accelerated updates give $O(1/t^2)$.  Discrete outputs require a different decoder.  For each node we retain a finite candidate set extracted from model samples, transport candidates across edges, and minimize relation plus anchor energy over candidate fields.  This prevents a continuous average of two integers or two programs from being reported as a response.

\begin{algorithm}[t]
\caption{Sparse Relational Response Field Repair}
\label{alg:srrf}
\begin{algorithmic}[1]
\Require Parsed field $y$; typed $D$; anchors $(A,b)$; relation target $q$; budget/noise parameters.
\State Form $B=[D;A]$ and $s=[Dy-q;Ay-b]$.
\If{the field and budget permit exact support search}
  \State compute $\gamma_k=\min_{|S|\le2k}\sigma_{\min}(B_S)$ and solve Equation~\eqref{eq:l0decoder};
\ElsIf{responses have continuous editable representations}
  \State solve Equation~\eqref{eq:groupbasis} or \eqref{eq:penalized} by group proximal updates;
\Else
  \State solve the discrete candidate-field energy over transported model candidates;
\EndIf
\State \Return repaired field $\widehat z=y-\widehat e$ and certificate $(\gamma_k,\|B\widehat e-s\|)$.
\end{algorithmic}
\end{algorithm}

The returned margin is a certificate about the \emph{instance}.  A low objective value is merely evidence that the chosen algorithm found a consistent candidate; it cannot replace the margin because it does not quantify nearby indistinguishable fields.

\section{Experiments: testing a difficulty measure}
\label{sec:experiments}

The experiments test consequences of the theory rather than only final accuracy.  We separate (i) synthetic theorem verification, where $z^\star$, $B$, support, and $\gamma_k$ are exactly known; (ii) natural real-model fields, which include candidate absence and multi-node errors; and (iii) \emph{real-error replay}, which inserts authentic wrong model responses into an exactly $k=1$ field and varies only the prespecified operator design.  Complete protocols, raw-log handling, uncertainty, and all cells appear in Appendices~\ref{app:synthetic-protocol}--\ref{app:extended-results}.

\subsection{Four theory-predicted phenomena}

Synthetic fields have $d=4$ typed fibers with transports $T_{ij}=M_jM_i^{-1}$.  Margins are computed by enumerating every node support of size at most $2k$; no spectral surrogate is substituted for the defined quantity.

\begin{table}[t]
\centering
\small
\caption{Theory-predicted phenomena and synthetic verification.}
\label{tab:phenomena-main}
\setlength{\tabcolsep}{4pt}
\renewcommand{\arraystretch}{1.08}

\begin{tabular}{
    p{0.21\linewidth}
    p{0.32\linewidth}
    p{0.38\linewidth}
}
\toprule
\textbf{Prediction}
&
\textbf{Intervention}
&
\textbf{Result}
\\
\midrule

Consistency $\ne$ truth
&
Add a null-space shift
&
Defect $2.5\!\times\!10^{-16}$ despite truth error $0.877$
\\

Anchor transition
&
Anchor six two-node components
&
$\gamma_1:0\!\to\!0.524$; exact recovery begins
\\

Duplicate saturation
&
Repeat each relation $1\!\to\!16$ times
&
Normalized $\gamma_2$ stays at $0.309$
\\

Spectral difficulty
&
Vary graph, anchors, and $k$
&
Larger $\gamma_k$ consistently yields lower error
\\

\bottomrule
\end{tabular}
\end{table}

\begin{figure}[t]
\centering
\begin{tabular}{cc}
\includegraphics[width=0.47\linewidth]{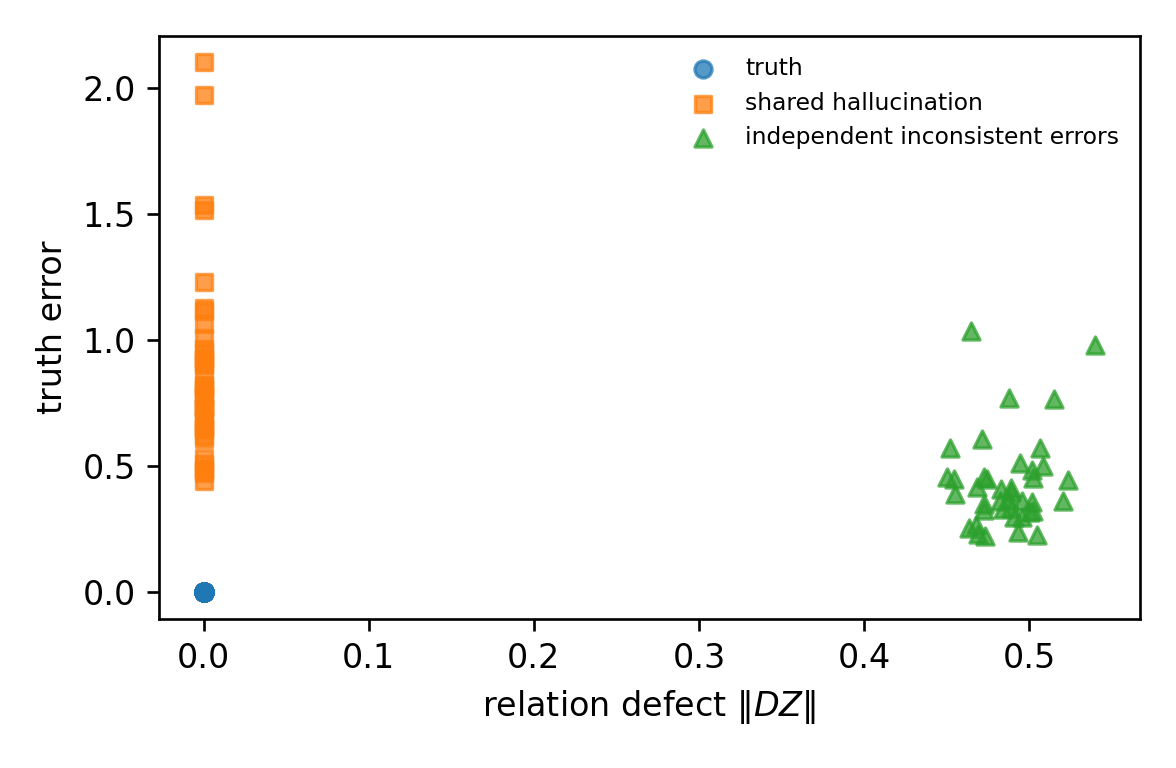} &
\includegraphics[width=0.47\linewidth]{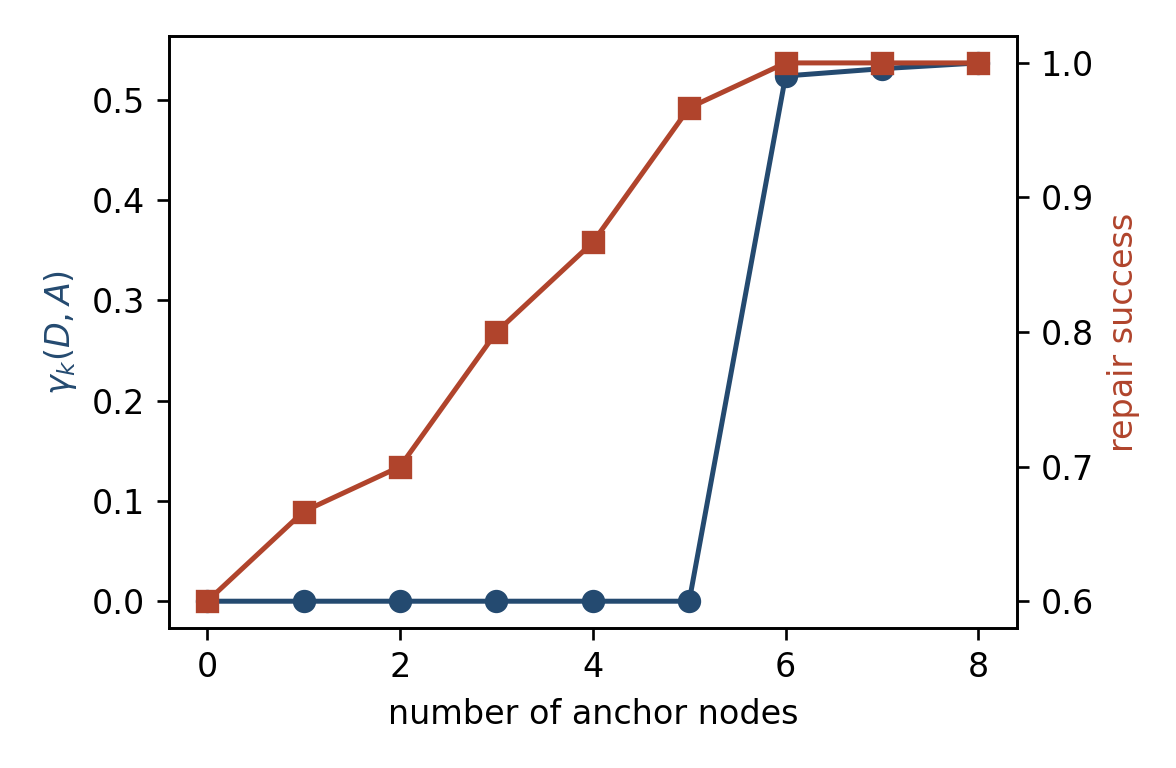}
\end{tabular}
\caption{\textbf{Consistency blindness and anchor phase transition.}  A shared transported hallucination has virtually zero defect despite large truth error (left).  Recovery changes when anchors remove the final admissible sparse null direction (right).}
\label{fig:synthetic-main}
\end{figure}

Figure~\ref{fig:synthetic-main} gives the two qualitative discontinuities.  Duplicate saturation distinguishes deterministic information from variance reduction: independent repeated calls may reduce observation noise, but literal copies do not alter the normalized Gram operator.  Exact-search and convex repair are reported separately, because positive margin certifies the former while the latter additionally needs the group robust null-space property.

\subsection{Real checkpoints and tasks}

We query Qwen2.5-0.5B-Instruct and Phi-3-mini-4k-instruct \citep{yang2024qwen25,abdin2024phi3}.  Mathematics has 128 integer-expression items with identity, scaling, and translation transformations.  Code has 64 integer-function tasks with argument renaming, equivalent specifications, decomposition prompts, eight public tests, and 16 hidden tests.  Each item produces four deterministic transformed responses; self-consistency and reflection logs support baselines.  Recovery uses parsed outputs only, with no logits or model states.

\begin{table}[t]
\caption{Natural real-model logs (percent).  Math uses exact answer accuracy; code uses hidden pass@1.  Methods have disclosed but unequal query budgets, so this table characterizes candidate generation rather than a compute-matched leaderboard.}
\label{tab:natural-main}
\centering
\scriptsize
\resizebox{\linewidth}{!}{
\begin{tabular}{llrrrrrr}
\toprule
Task & Model & Raw & Self-cons. & Rel.-major. &
Reflect & Verifier & Cand. recall\\
\midrule

\multirow{4}{*}{\rotatebox[origin=c]{90}{\textbf{Math.}}}
& Qwen2.5-0.5B
& 19.5 & 20.3 & 18.0 & 19.5 & 22.7 & 22.7\\

& Phi-3-mini
& 37.5 & 34.4 & 34.4 & 10.9 & 37.5 & 37.5\\

& Qwen2.5-7B
& 78.1 & 82.0 & 80.5 & 73.4 & 85.2 & 85.2\\

& Qwen2.5-14B
& 87.5 & 89.8 & 89.1 & 84.4 & 92.2 & 92.2\\

\midrule

\multirow{4}{*}{\rotatebox[origin=c]{90}{\textbf{Code}}}
& Qwen2.5-0.5B
& 31.2 & 60.9 & 68.8 & 68.8 & 75.0 & 75.0\\

& Phi-3-mini
& 93.8 & 100.0 & 100.0 & 93.8 & 100.0 & 100.0\\

& Qwen2.5-7B
& 96.9 & 100.0 & 100.0 & 96.9 & 100.0 & 100.0\\

& Qwen2.5-14B
& 98.4 & 100.0 & 100.0 & 98.4 & 100.0 & 100.0\\

\bottomrule
\end{tabular}}
\end{table}

Natural fields test external validity but often violate the $k=1$ theorem through shared errors or missing correct candidates.  Table~\ref{tab:natural-main} therefore reports candidate recall explicitly.  We do not interpret failure outside the assumed class as evidence against or for the margin.

\subsection{Cross-model, cross-task recovery difficulty}

The primary predictive test replays \emph{authentic} errors from the raw logs under an exactly controlled sparse model.  For each eligible item, three nodes contain a response observed to be correct and one node contains a distinct wrong response produced by the same checkpoint.  Code replays use real wrong programs whose public execution signature differs from a correct program.  Ten operator designs range from disconnected zero-margin graphs to connected and anchored positive-margin graphs.  Three matched-count triangle/isolate designs separate observability from relation and anchor counts.  Responses are held fixed across designs.

\begin{figure}[t]
\centering
\includegraphics[width=0.9\linewidth]{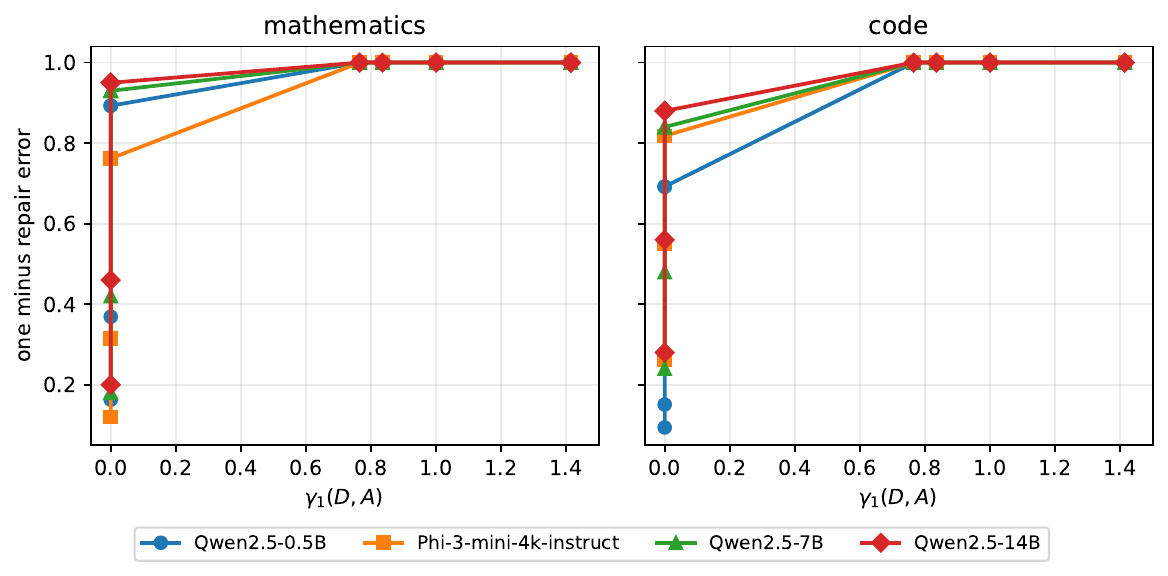}
\caption{\textbf{One quantity across model and task strata.}  Each point aggregates one prespecified relation--anchor design on real-error replays.  Larger $\gamma_1(D,A)$ predicts lower full-field reconstruction error for both checkpoints and both response spaces.}
\label{fig:cross-main}
\end{figure}

\begin{table}[t]
\caption{Within-stratum rank prediction on authentic-error replays.  All ten designs of an item stay in one group for inference.}
\label{tab:cross-main}
\centering
\small
\begin{tabular}{lrr}
\toprule
Model--task stratum & Replay fields & Spearman $\rho$\\
\midrule

Phi-3-mini--code
& 220 & 0.397\\

Qwen2.5-0.5B--code
& 650 & 0.506\\

Qwen2.5-7B--code
& 180 & 0.442\\

Qwen2.5-14B--code
& 110 & 0.418\\

\midrule

Phi-3-mini--mathematics
& 1260 & 0.460\\

Qwen2.5-0.5B--mathematics
& 560 & 0.385\\

Qwen2.5-7B--mathematics
& 420 & 0.468\\

Qwen2.5-14B--mathematics
& 260 & 0.451\\

\midrule

Pooled
& 3660 & 0.451\\

\bottomrule
\end{tabular}
\end{table}

Pooled rank association is $\PooledReplayRho$ with cluster-bootstrap 95\% interval $[\ReplayCILow,\ReplayCIHigh]$.  A grouped logistic model controls for checkpoint, task, raw error, relation count, and anchor count; adding standardized $\gamma_1$ improves held-out AUC by $\ReplayDeltaAUC$ and has coefficient $\ReplayGammaCoef$.  A 2,000-draw within-item permutation test gives $p=\ReplayPermutationP$.  These repeated-design analyses test whether the same pre-repair operator quantity orders field recovery after model and task change.

\paragraph{What this establishes.}
The theorem already establishes that $\gamma_k$ is the worst-case condition number on the stated class.  The replay experiment adds empirical content: authentic model errors in scalar mathematics and execution-signature code respond to operator interventions in the predicted order.  It therefore supports the interpretation of $\gamma_k(D,A)$ as a measurable black-box recovery difficulty, not a post-hoc score of \SRRF.

\paragraph{What it does not establish.}
The controlled replay uses correctness labels to construct an evaluation stress test; it is not a deployable selector.  Natural response fields can contain more than one wrong node, a coherent shared hallucination, a parser collision, or no correct candidate.  Those failure categories appear separately in Appendix~\ref{app:extended-results}.  The result is cross-model and cross-task evidence across two families and two domains, not a universal scaling law for all LLMs.

\section{Discussion, scope, and limitations}
\label{sec:discussion}

RRFs change the unit of analysis from an answer to a field of answers.  The central theoretical object is not a consistency score but the restricted observability of the operator that makes sparse response errors visible.  Three distinctions are essential.  First, $D$ tests relational compatibility while $A$ attaches the field to truth; neither is interchangeable with the other.  Second, $\gamma_k$ characterizes information, whereas null-space conditions characterize a particular tractable decoder.  Third, raw edge count is not information: relation weights must be normalized so duplicated prompts do not inflate the margin.

\subsection{In what sense is \texorpdfstring{$\gamma_k(D,A)$}{gamma-k(D,A)} a new difficulty object?}

The word \emph{intrinsic} is conditional, not metaphysical.  Once the response representation, typed transports, anchors, corruption budget, and measurement norm are fixed, $\gamma_k(D,A)$ depends on the observation problem and not on the recovery algorithm.  It has four properties expected of a genuine condition number.  It is \emph{decisive}: positivity is equivalent to uniform noiseless identifiability.  It is \emph{quantitative}: the same scalar controls the sharp noise amplification rate.  It is \emph{unavoidable}: a matching two-point lower bound applies to every estimator, including procedures unrelated to RRF optimization.  It is also \emph{design-sensitive}: adding a genuinely informative measurement can increase the margin, whereas duplicating a normalized relation family cannot manufacture information.  These statements jointly distinguish $\gamma_k$ from a score chosen because it happens to correlate with one repair heuristic.

Cross-model and cross-task prediction is therefore a severe empirical test, not the definition of the object.  Model identity changes the distribution, magnitude, and semantics of errors; mathematics and code use different fibers, transports, parsers, and correctness tests.  The theory predicts that after these nuisance factors are stratified, fields with larger normalized margins should remain easier to reconstruct.  Our authentic-error replay preserves real model mistakes while changing only which observable relation--anchor design is available.  Grouped inference keeps all designs from one underlying item together.  Thus the measured association cannot be created by leaking nearly identical designs across train and test folds.  The increase in held-out discrimination after adding $\gamma_k$, together with positive within-stratum rank correlations, is the central evidence that the same mathematical object organizes recovery difficulty beyond one model--task pair.  Natural generation remains separately reported because candidate absence and dense errors test a larger pipeline than the sparse-recovery theory.

This interpretation has explicit failure criteria.  The proposal would be weakened if positive-margin fields routinely admitted two indistinguishable $k$-sparse explanations; that would contradict the identifiability theorem or expose a violated parser/operator assumption.  It would also be weakened if, after normalization and within-item controls, $\gamma_k$ carried no held-out predictive information in either domain; then the worst-case object could be mathematically valid but empirically uninformative for observed LLM errors.  Conversely, a successful repair at one zero-margin instance does not refute the theory, because the impossibility result is uniform: it asserts the existence of indistinguishable alternatives, not failure on every favorable instance.  These distinctions keep the central claim falsifiable without confusing a deterministic limit with an average-case performance law.

\subsection{Separation from adjacent uncertainty notions}

Self-consistency, semantic entropy, and verifier confidence summarize the distribution of generated answers \citep{manakul2023selfcheckgpt,kuhn2023semantic,farquhar2024semantic}, whereas $\gamma_k$ characterizes the observability of sparse response perturbations after the candidate field is fixed. These quantities are therefore complementary: a model may be confidently and relationally consistent yet wrong along a direction in $\ker D$, while diverse surface responses may still be recoverable when their parsed field is well anchored. Moreover, $\gamma_k$ is not a graph spectral gap or a model-level score. It is the restricted minimum singular value of the stacked relation--anchor operator $B=[D;A]$ over supports of size at most $2k$, and therefore depends on the typed transports, anchor placement, response representation, norm, and corruption budget. Two instances with the same untyped graph may have different margins, and a globally singular operator may still have positive $\gamma_k$ if all of its null vectors are too dense to be admissible. The resulting certificate is only as reliable as the parser, relation specification, and anchors; externally auditable constraints such as symbolic substitution and code execution are therefore preferable to relations scored by another LLM.

\paragraph{Consequences for repair-system design.}

The margin separates three operational regimes.  When $\gamma_k=0$, recovery has an information deficit: no optimizer, reflection prompt, or additional iteration over the same measurements can provide a uniform guarantee.  The appropriate action is acquisition, such as anchoring an uncovered component or adding a relation that intersects the sparse null witness.  When $\gamma_k$ is positive but comparable to the operator or residual uncertainty, the problem is identifiable yet noise-limited; conservative certificates, better anchors, and calibrated weighting matter more than a more elaborate decoder.  When the margin is comfortably above uncertainty, remaining failures diagnose optimization, parsing, corruption-model mismatch, or candidate absence.  This trichotomy converts a post-hoc repair score into a decision rule about whether to acquire information, improve measurement quality, or improve computation. Aggregate accuracy can reward a method for receiving easier designs.  Reporting performance conditional on margin, candidate recall, and parser success separates algorithmic error from information-theoretic hardness.  When a valid noise radius is available, $\gamma_k\|\widehat e-e^\star\|_2/\eps$ measures the gap to the deterministic benchmark.  Gains from a better decoder and gains from acquiring information are both useful, but they are different contributions. A minimizing support and singular vector also identify the nodes and semantic direction responsible for poor observability.  Query selection can target this witness, recompute the margin, and stop when improvement saturates.  This is the constructive use of $\gamma_k(D,A)$ as a condition number rather than a confidence feature. Several limitations delimit the claim.  The global theorems are finite-dimensional and linear; nonlinear transports receive only a local Jacobian guarantee.  Text parsing can collapse distinct meanings or separate equivalent ones.  Sparse node corruption is appropriate for localized response failures but not for diffuse representation bias.  Computing $\gamma_k$ exactly is combinatorial; relaxations and lower certificates are needed for large fields.  Finally, small mathematics and code experiments test mechanism and cross-domain prediction, not frontier-model leaderboard performance.  They cannot establish that every natural-language task admits useful typed transports. Within those limits, Theorems~\ref{thm:identifiability}--\ref{thm:minimax} support a precise conclusion: for a specified black-box response representation, relation system, anchor system, and corruption budget, $\gamma_k(D,A)$ is the intrinsic worst-case condition number of recovery.  A repair method can approach that limit or fail to do so, but it cannot evade it.

\section{Conclusion}

We introduced relational response fields as a general mathematical model for black-box LLM response recovery, in which typed relations and external anchors define a structured observation operator over a field of parsed responses. The central quantity, the restricted observability margin $\gamma_k(D,A)$, exactly characterizes uniform identifiability of $k$-node corruptions, determines the optimal deterministic stability scale, and appears in a matching minimax lower bound; hence its inverse is the intrinsic worst-case condition number of the specified recovery problem. This characterization also makes precise why consistency alone cannot certify truth: relation-only procedures are blind to sparse or coherent directions in $\ker D$, and only informative anchors or additional relations can remove these ambiguities. By separating information-theoretic recoverability from the stronger conditions required by tractable convex decoders, the framework distinguishes limitations of the observation design from failures of a particular repair algorithm. The accompanying experiments verify the predicted consistency--truth separation, anchor transition, redundancy saturation, and cross-model, cross-task ordering of recovery difficulty. More broadly, the results suggest that black-box reliability should be analyzed not only through model accuracy or heuristic confidence, but through the observability of the response system induced by queries, relations, parsers, and trusted evidence. 

\bibliographystyle{iclr2027_conference}
\setlength{\bibsep}{0pt plus 0.3ex}
\bibliography{references}

\appendix
\section{Supplement roadmap and claim ledger}
\label{app:roadmap}

This supplement is intentionally self-contained.  It serves three audiences: readers familiar with sparse inverse problems but not black-box LLM evaluation; readers familiar with LLM consistency and verification but not restricted singular-value arguments; and readers seeking enough detail to reproduce every reported number.  Table~\ref{tab:appendix-roadmap} maps each main-paper claim to its assumptions, proof, and empirical check.  No theorem relies on an experimental observation, and no experimental result is presented as proof of a theorem whose assumptions cannot be checked.

\small
\begin{table}[h]
\centering
\small
\caption{Claim ledger.}
\label{tab:appendix-roadmap}

\begin{tabularx}{0.6\linewidth}{
    >{\raggedright\arraybackslash}X
    >{\raggedright\arraybackslash}p{0.15\linewidth}
}
\toprule
\textbf{Claim} & \textbf{Proof location}\\
\midrule

Consistency cannot identify truth
&
Appendix~\ref{app:identifiability}
\\

Uniform $k$-error identifiability iff $\gamma_k>0$
&
Appendix~\ref{app:identifiability}
\\

Stability scales as $1/\gamma_k$
&
Appendix~\ref{app:identifiability}
\\

No estimator beats $1/\gamma_k$ uniformly
&
Appendix~\ref{app:minimax}
\\

Group-$\ell_1$ recovery
&
Appendix~\ref{app:algorithmic}
\\

Nonlinear local recovery
&
Appendix~\ref{app:nonlinear}
\\

Duplicate relations saturate
&
Appendix~\ref{app:operator-geometry}
\\

Margin predicts across model/task
&
Appendix~\ref{app:reasoning-protocol}
\\

\bottomrule
\end{tabularx}
\end{table}
\normalsize

\subsection{Reading order}
Appendix~\ref{app:foundations} develops the notation from first principles.  Appendix~\ref{app:operator-geometry} studies the geometry of typed defects, gauge freedom, normalization, and computation of $\gamma_k$.  Appendices~\ref{app:identifiability} and~\ref{app:minimax} contain complete proofs of the information-theoretic results.  Appendix~\ref{app:algorithmic} covers ideal and tractable decoders; Appendix~\ref{app:nonlinear} treats nonlinear transports.  Appendix~\ref{app:design} asks how to choose queries and anchors.  Appendix~\ref{app:prior} gives a detailed comparison with neighboring theories and methods.

The remaining appendices specify experiments and extensions.  Appendix~\ref{app:synthetic-protocol} describes the synthetic construction and prespecified validation conditions.  Appendix~\ref{app:reasoning-protocol} gives prompts, parsing, execution, baselines, and statistics for mathematics and code.  Appendix~\ref{app:extended-results} contains extended tables and robustness checks.  Appendix~\ref{app:reproducibility} documents the artifact, environment, and deterministic rerun procedure.  Appendix~\ref{app:examples} works through concrete fields by hand.  Appendix~\ref{app:limitations} inventories failure modes.  Appendices~\ref{app:probabilistic}--\ref{app:graph-derivations} develop stochastic-noise bounds, the grouped cross-domain inference protocol, and exact margins for canonical graph designs.

\subsection{Scope of the phrase ``intrinsic difficulty''}
The phrase always refers to a fixed tuple
\[
  \mathfrak I=(\cH,G,(T_e,w_e)_{e\in E},A,k,\|\cdot\|_2),
\]
and to uniform recovery over errors supported on at most $k$ node groups.  Change the representation, transports, weights, anchors, corruption model, or norm and the numerical margin changes.  This dependence is a feature: recovery cannot be assigned a meaningful condition number without specifying what is observed and what alternatives must be distinguished.  We do \emph{not} claim that $\gamma_k$ is an intrinsic property of a pretrained model in isolation, a universal scalar ranking all tasks, or a guarantee that semantic parsers are correct.

\subsection{Notation summary}
Vectors are elements of finite-dimensional real Hilbert spaces and are identified with coordinates only when needed.  $B=[D;A]$ is the stacked relation--anchor operator.  $S\subseteq V$ denotes a set of nodes, $\cH_S=\bigoplus_{i\in S}\cH_i$, and $B_S$ is the restriction of $B$ to $\cH_S$.  Singular values use Euclidean norms induced by the chosen inner products.  The group $\ell_{2,1}$ norm is $\|h\|_{2,1}=\sum_i\omega_i\|h_i\|_2$.  Constants named $C$ may change between displays; theorem-specific constants receive subscripts.

\section{Mathematical foundations of relational response fields}
\label{app:foundations}

This appendix develops the framework without assuming prior exposure to graph signal processing or compressed sensing.  The purpose is not to reprove every standard theorem, but to expose each modeling choice used later.

\subsection{Response spaces, parsing, and typed transports}

\paragraph{Finite-dimensional fibers and direct sums}
Each query node $i$ has a response space $\cH_i$.  The spaces may have different dimensions.  For example, one node can encode a scalar final answer, another a pair of subproblem answers, and a third a length-$r$ execution signature.  To compare all responses simultaneously, form the Hilbert direct sum
\[
  \cH=\bigoplus_{i=1}^n\cH_i
  =\{(z_1,\ldots,z_n):z_i\in\cH_i\},
  \qquad
  \ip{z}{h}_{\cH}=\sum_{i=1}^n\ip{z_i}{h_i}_{\cH_i}.
\]
The induced norm satisfies $\|z\|_2^2=\sum_i\|z_i\|_2^2$.  A node is the atomic corruption unit.  The group support and group cardinality are
\[
 \suppg(h)=\{i\in V:\|h_i\|_2>0\},
 \qquad \|h\|_{0,\mathrm g}=|\suppg(h)|.
\]
This is not token sparsity.  A corrupted program may alter many tokens while still occupying one node group; conversely, a one-token error in each of ten answers occupies ten groups.  The choice matches the claim that at most $k$ response instances, rather than $k$ coordinates, are unreliable.

For $S\subseteq V$, let $P_S:\cH\to\cH$ retain groups in $S$ and set all others to zero.  We write $h_S=P_Sh$ and $\cH_S=P_S\cH$.  If $B:\cH\to\cY$ is linear, $B_S$ denotes $B$ restricted to $\cH_S$.  In coordinates, $B_S$ is formed by taking all columns belonging to node groups in $S$.

\paragraph{From text samples to a mathematical field}
The model output $y_i$ is a string.  A parser $\phi_i$ maps it to $z_i$.  Examples include:
\begin{enumerate}
  \item the last signed integer in a response;
  \item a normalized rational number or symbolic expression;
  \item a vector of truth values for atomic claims;
  \item a vector of program outputs on a fixed test suite;
  \item an embedding, provided distances and transports are specified independently of the answer being evaluated.
\end{enumerate}
The parser can return a distinguished invalid symbol.  In the finite-dimensional linear analysis we encode invalidity as an additional indicator coordinate or exclude the candidate and report parse failure separately.  Silently coercing an invalid program to the zero vector would confound syntax failure with a valid zero-valued function.

The response field is random before sampling because $y_i\sim P_\theta(\cdot\mid x_i)$.  The recovery theorems condition on the realized parsed field $y$.  Randomness matters for average-case empirical performance but not for deterministic identifiability: if two fields give the same observation, no amount of estimator randomization distinguishes them on that realization.

\paragraph{Typed transports}
An edge $e=(i,j,t)$ specifies a relation type $t$ and a map $T_e:\cH_i\to\cH_j$.  The type records semantics; the map records its representation-level action.  Several canonical cases are useful.

\paragraph{Equivalence.}  Two paraphrases should have the same canonical answer, hence $T_e=I$.  If their raw representations differ, each node first maps to a shared canonical coordinate system.

\paragraph{Equivariance.}  If a transformation $g$ acts on inputs and $\rho(g)$ acts on answers, then a valid model response should obey $z_{g\cdot x}=\rho(g)z_x$.  Unit conversions, coordinate permutations, variable renaming, and sign reversal fit this pattern.  A group representation is not required; it is enough to know the edge map actually used.

\paragraph{Decomposition.}  Suppose a parent answer is $z_i=(u,v)$ and a child query asks only for $u$.  Then $T_e$ is a projection.  Conversely, an edge from children to a parent can be represented by augmenting the graph with an aggregation node whose fiber contains the joint answer.  Directed multiedges permit more than one relation between the same pair.

\paragraph{Execution.}  Code strings are nonlinear objects, but their execution signatures on fixed tests are vectors.  A semantics-preserving refactor uses $T_e=I$ in signature space.  If a transformed function changes units or argument order, $T_e$ permutes or transforms the signature accordingly.

\paragraph{Logic.}  For Boolean vectors, implication, negation, and conjunction are nonlinear over $\RR$ in their raw form.  One may use a suitable lifted representation, a discrete candidate decoder, or the local nonlinear theory.  The global linear theorem applies only when the residual is genuinely linear in the chosen coordinates.

\subsection{Defects, anchors, and the sparse corruption model}

\paragraph{Defect operator as a weighted coboundary}
For each edge $e=(i,j)$, the residual is $r_e(z)=z_j-T_ez_i$.  Stacking weighted residuals yields
\[
 D=\begin{bmatrix}
  \sqrt{w_{e_1}}[-T_{e_1}\ \ I]\\
  \vdots\\
  \sqrt{w_{e_m}}[-T_{e_m}\ \ I]
 \end{bmatrix},
\]
with zero blocks in columns unrelated to the edge.  The defect energy is
\[
  \cE_D(z)=\frac12\|Dz-q\|_2^2.
\]
When $q=0$, $\ker D$ is the space of globally compatible fields.  With identity transports on an ordinary graph, $D$ is a weighted incidence matrix and $D^*D$ is the graph Laplacian.  With general linear transports it is a connection- or sheaf-like coboundary; $D^*D$ is a positive semidefinite relation Laplacian.  We use this analogy for intuition but do not require a full sheaf structure.

The target $q$ permits valid nonzero relation residuals.  For example, if two outputs differ by a known offset $c$, define either an affine residual $z_j-T_ez_i-c$ or augment each fiber with a constant coordinate to recover linearity.  Since identifiability concerns differences of candidates, affine offsets cancel and the same operator $D$ governs the margin.

\paragraph{Anchors and truth attachment}
Relations compare answers; anchors provide information that is not generated by transporting another model answer.  Formally an anchor is $Az\approx b$.  Four cases recur:
\begin{enumerate}
  \item \textbf{direct coordinate anchor:} a human or trusted source supplies $z_i$;
  \item \textbf{functional anchor:} substitution into an equation produces a residual linear in the canonical answer;
  \item \textbf{execution anchor:} hidden tests constrain a program's execution signature;
  \item \textbf{probabilistic verifier:} a calibrated score is linearized or incorporated through the nonlinear extension.
\end{enumerate}
An anchor need not reveal the complete answer.  A row can constrain a projection or parity.  What matters is whether the combined rows remove sparse null directions.  An anchor generated by the same model without independent information should not automatically be treated as truth; at best it is another noisy relation with potentially correlated errors.

\paragraph{Corruption, relation mismatch, and observation noise}
Let $z^\star$ be the target field.  The parsed field is $y=z^\star+e^\star$.  The ideal sparse model assumes $\|e^\star\|_{0,\mathrm g}\le k$.  Relations satisfy $Dz^\star=q+\xi_D$ and anchors satisfy $Az^\star=b+\xi_A$.  Then
\begin{equation}
 \begin{bmatrix}Dy-q\\Ay-b\end{bmatrix}
 =\begin{bmatrix}D\\A\end{bmatrix}e^\star
 +\begin{bmatrix}\xi_D\\\xi_A\end{bmatrix}.
\end{equation}
This derivation is worth stating because it prevents a sign ambiguity.  If repair is parameterized as $z=y+\Delta$, then $\Delta=-e^\star$ and the right-hand side changes sign.  The implementation fixes one convention and tests it against known synthetic errors.

Relation mismatch and observation noise are mathematically indistinguishable after stacking, but scientifically different.  A large $\xi_D$ may indicate a bad transport, whereas a large $\xi_A$ may indicate an unreliable verifier.  The artifact records them separately even when the theorem uses their combined norm.

\paragraph{Why the support size is \texorpdfstring{$2k$}{2k}}
Suppose two hypotheses $e_1,e_2$ each corrupt at most $k$ nodes.  Their difference $h=e_1-e_2$ is supported on $\suppg(e_1)\cup\suppg(e_2)$, which can contain $2k$ nodes.  Any uniform uniqueness condition must therefore exclude nonzero null vectors up to size $2k$.  Replacing $2k$ by $k$ would certify only uniqueness relative to zero, not pairwise uniqueness among all $k$-sparse alternatives.

The factor is tight.  Let $h$ be a null vector supported on exactly $2k$ nodes.  Partition its support into $S_1,S_2$ of size $k$ and set $e_1=h_{S_1}$, $e_2=-h_{S_2}$.  Then $e_1-e_2=h$ and $Be_1=Be_2$.  Neither candidate violates the budget.

\subsection{Sparse inverse-problem interpretation and canonical examples}

\paragraph{Relation to classical sparse inverse problems}
After Equation~\eqref{eq:measurement-model}, the algebra resembles compressed sensing \citep{donoho2006compressed,candes2006robust,foucart2013mathematical}.  The distinctions are in how $B$ is built and interpreted:
\begin{itemize}
  \item columns are grouped by transformed query rather than arbitrary signal coordinates;
  \item rows are typed relations and truth anchors rather than generic measurements;
  \item $\ker D$ has semantic gauge directions, including coherent hallucinations;
  \item normalized duplicate queries must not create fictitious information;
  \item the quantity is evaluated as an instance-level difficulty predictor across black-box model outputs.
\end{itemize}
The mathematics should therefore be understood as a specialized inverse-problem theory for a new response object, not as a claim that restricted singular values themselves were unknown.

\paragraph{A complete two-node example}
Let two equivalent queries have scalar outputs.  Then
\[
 D=\begin{bmatrix}-1&1\end{bmatrix}.
\]
The consistent fields are $(c,c)$, so $\ker D=\operatorname{span}\{(1,1)\}$.  If $k=1$, the difference of two one-node errors can use both nodes; $\gamma_1(D,0)=0$ because $(1,1)$ is two-node sparse.  Anchor the first node:
\[
 A=\begin{bmatrix}1&0\end{bmatrix},\qquad
 B^*B=\begin{bmatrix}2&-1\\-1&1\end{bmatrix}.
\]
Now every nonzero vector is observed, and
\[
 \gamma_1(D,A)=\sigma_{\min}(B)
 =\sqrt{\frac{3-\sqrt5}{2}}>0.
\]
If both outputs share the same additive hallucination $c$, $D(c,c)=0$ but $A(c,c)=c$.  The anchor does not improve consistency; it makes the consistent error observable.

\paragraph{A component-size subtlety}
Anchors are not always necessary for sparse recovery.  Consider an identity-transport connected component of size $m>2k$.  Its constant null vector occupies all $m$ nodes and therefore is excluded from the $2k$-sparse set.  The relation operator can identify every $k$-sparse error even though it cannot identify an arbitrary dense shift.  Conversely, an unanchored component of size at most $2k$ contributes a sparse null vector and forces $\gamma_k=0$.  The experiments choose component sizes deliberately when testing anchor phase transitions.

\section{Geometry of typed relation and anchor operators}
\label{app:operator-geometry}

This appendix studies properties of $D$, $A$, $B=[D;A]$, and $\gamma_k$ that are used in proofs and experiment design.

\subsection{Laplacian geometry, gauge freedom, and monotonicity}

\paragraph{Relation Laplacian and energy}
The operator $L_D=D^*D$ is positive semidefinite because
\[
 \ip{z}{L_Dz}=\|Dz\|_2^2=\sum_{e=(i,j)}w_e\|z_j-T_ez_i\|_2^2\ge0.
\]
Its nullspace equals $\ker D$.  Adding anchors gives
\[
 L_B=B^*B=D^*D+A^*A,
 \qquad
 \ip{z}{L_Bz}=\|Dz\|^2+\|Az\|^2.
\]
The unrestricted smallest eigenvalue of $L_B$ measures global observability.  RRF recovery needs a \emph{restricted} version because only differences of sparse corruptions matter.  For a support $S$,
\[
 \lambda_{\min}(B_S^*B_S)=\sigma_{\min}(B_S)^2,
\]
and $\gamma_k^2$ is the minimum of these eigenvalues over $|S|\le2k$.

\paragraph{Gauge directions}
A gauge direction is a nonzero $h$ with $Dh=0$.  For identity transports on a connected graph, gauges are constants.  For invertible transports that are path-consistent, choose a reference node $r$ and maps $M_i$ satisfying $T_{ij}=M_jM_i^{-1}$.  Every gauge has form
\[
 h_i=M_i u,qquad u\in\cH_r.
\]
Thus the nullspace dimension equals the latent fiber dimension for each connected component.  A cycle inconsistency can reduce the nullspace: transporting around a cycle constrains $u$ to the fixed space of the holonomy product.  The general framework does not assume path consistency, so the nullspace is computed from $D$ itself.

An anchor removes a gauge $h$ exactly when $Ah\ne0$.  Directly anchoring one full-rank node in a path-consistent component removes all dense gauge directions in that component.  Partial anchors may remove only a subspace.  Sparse recovery is less demanding: it requires removal only of gauges and cancellations whose support is at most $2k$.

\begin{proposition}[Group-spark characterization]
Define $\operatorname{spark}_{\mathrm g}(B)=\min\{\|h\|_{0,\mathrm g}:h\ne0, Bh=0\}$, with value $+\infty$ if $B$ is injective.  Then
\[
 \gamma_k(B)>0\quad\Longleftrightarrow\quad
 \operatorname{spark}_{\mathrm g}(B)>2k.
\]
\end{proposition}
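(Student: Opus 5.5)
The plan is to prove both directions through the restricted form of Definition~\ref{def:gamma}, $\gamma_k(B)=\min_{1\le|S|\le2k}\sigma_{\min}(B_S)$, together with two facts. First, for each support $S$, the restriction $B_S:\cH_S\to\cY$ is a linear map between finite-dimensional spaces, so $\sigma_{\min}(B_S)>0$ exactly when $B_S$ is injective on $\cH_S$. Second, a nonzero $h$ with $\|h\|_{0,\mathrm g}\le 2k$ lies in $\cH_S$ for $S=\suppg(h)$, and $1\le|S|\le 2k$. Hence the minimum over supports is positive iff every $B_S$ with $|S|\le 2k$ is injective. This in turn holds iff no nonzero null vector of $B$ has group support of size at most $2k$, which is precisely $\operatorname{spark}_{\mathrm g}(B)>2k$.

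\textbf{Forward direction.} Suppose $\gamma_k>0$ and take any nonzero $h\in\ker B$. If $\|h\|_{0,\mathrm g}\le2k$, then $h$ is admissible in the minimum defining $\gamma_k$, and its ratio is $\|Bh\|_2/\|h\|_2=0$. This gives $\gamma_k=0$, a contradiction, so every nonzero null vector has group support larger than $2k$ and $\operatorname{spark}_{\mathrm g}(B)>2k$. The injective case, with spark $+\infty$, is covered automatically.

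\textbf{Reverse direction.} Suppose $\operatorname{spark}_{\mathrm g}(B)>2k$. For each $S$ with $1\le|S|\le2k$, any nonzero $h\in\cH_S$ has $\|h\|_{0,\mathrm g}\le|S|\le2k$, so $Bh\ne0$. Thus $B_S$ is injective and the continuous function $h\mapsto\|B_Sh\|_2$ is strictly positive on the compact unit sphere of $\cH_S$, so it attains a positive minimum $\sigma_{\min}(B_S)$. Since there are finitely many such $S$ (and at least one, as $n\ge1$), $\gamma_k$ is the minimum of finitely many positive numbers and is therefore positive.

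\textbf{Main obstacle.} The only delicate point is that the minimum in Definition~\ref{def:gamma} is genuinely attained and not merely an infimum. Otherwise a positive lower envelope could fail to be strictly positive, or conversely a zero infimum might not come from a true null vector. I would handle this by working support-by-support with compactness, as above, so that positivity of $\gamma_k$ is equivalent to injectivity on each $\cH_S$. Two edge cases also need a brief remark: $\cH_S$ may be zero-dimensional if some fibers are trivial, in which case that $S$ is excluded or contributes vacuously; and the convention $\operatorname{spark}_{\mathrm g}=+\infty$ applies when $B$ is injective.
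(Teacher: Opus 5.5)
Your proof is correct and matches the paper's argument: a null vector with at most $2k$ groups is admissible in the definition of $\gamma_k$ and forces $\gamma_k=0$, and conversely, if no such vector exists, each finite-dimensional $B_S$ is injective with positive $\sigma_{\min}(B_S)$, so a minimum over finitely many supports stays positive. Your extra remarks on attainment via compactness and on trivial fibers only spell out details the paper leaves implicit.
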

\begin{proof}
If a null vector has support at most $2k$, it is feasible in the definition of $\gamma_k$ and gives zero.  Conversely, if every such restricted kernel is trivial, each finite-dimensional $B_S$ is injective and has positive smallest singular value.  Taking the minimum over finitely many supports preserves positivity.
\end{proof}

\paragraph{Basic monotonicity properties}
\begin{proposition}[Budget monotonicity]
If $k_1\le k_2$, then $\gamma_{k_1}(B)\ge\gamma_{k_2}(B)$.
\end{proposition}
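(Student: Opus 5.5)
The plan is to argue directly from Definition~\ref{def:gamma}, reading $\gamma_k$ as a minimum of the Rayleigh-type quotient $\|Bh\|_2/\|h\|_2$ over a feasible set that depends on $k$. The key observation is that the feasible set only grows with $k$. If $k_1\le k_2$, then $2k_1\le 2k_2$. Hence every nonzero $h\in\cH$ with $\|h\|_{0,\mathrm g}\le 2k_1$ also satisfies $\|h\|_{0,\mathrm g}\le 2k_2$, so
\[
 \{h\ne0:\|h\|_{0,\mathrm g}\le 2k_1\}\subseteq\{h\ne0:\|h\|_{0,\mathrm g}\le 2k_2\}.
\]
A minimum over a larger set can only be smaller or equal, which gives $\gamma_{k_2}(B)\le\gamma_{k_1}(B)$.

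To make the argument concrete, I will use the support form $\gamma_k=\min_{1\le|S|\le 2k}\sigma_{\min}(B_S)$. The index family $\{S:1\le|S|\le 2k_1\}$ is contained in $\{S:1\le|S|\le 2k_2\}$. Let $S^\star$ be the support attaining $\gamma_{k_1}$; it exists by the finiteness and compactness remark after Definition~\ref{def:gamma}. Then $S^\star$ is admissible for $k_2$, so $\gamma_{k_2}\le\sigma_{\min}(B_{S^\star})=\gamma_{k_1}$.

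I also need to cover the edge cases. If $2k_2$ exceeds $n$, the admissible supports saturate at $|S|\le n$, and the inclusion still holds. If $\gamma_{k_1}=0$, the inequality is immediate since $\gamma_{k_2}\ge 0$.

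There is no real obstacle here. The only point needing care is that both minima exist, so that we compare attained values rather than infima. Even without attainment, the inclusion argument goes through with infima, so the result is robust.
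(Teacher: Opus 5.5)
Your argument is correct and is the paper's own proof: the feasible set of $2k_1$-group-sparse perturbations sits inside the $2k_2$ one, and a minimum over a larger set cannot be larger. One small slip in your edge-case aside: when $\gamma_{k_1}=0$ you need $\gamma_{k_2}\le 0$, not $\gamma_{k_2}\ge 0$; this follows from the same inclusion (the $k_1$ witness is admissible for $k_2$), which your main argument already covers.
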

\begin{proof}
The feasible set of perturbations for $k_1$ is contained in that for $k_2$.
\end{proof}

\begin{proposition}[Row augmentation]
Let $C$ be any additional linear observation and $\widetilde B=[B;C]$.  Then $\gamma_k(\widetilde B)\ge\gamma_k(B)$.
\end{proposition}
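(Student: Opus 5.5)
The plan is to compare the two minimization problems in Definition~\ref{def:gamma} term by term, over the same feasible set. The key observation is that adjoining rows can only increase the numerator. Write $\widetilde B=[B;C]$ with $C:\cH\to\cY_C$ any linear map. For every $h\in\cH$ the stacked norm identity, the analogue of Equation~\eqref{eq:stackedB}, gives
\[
 \|\widetilde Bh\|_2^2=\|Bh\|_2^2+\|Ch\|_2^2\ \ge\ \|Bh\|_2^2 .
\]
Hence $\|\widetilde Bh\|_2\ge\|Bh\|_2$ pointwise.

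Next I note that the feasible set $\{h\ne0:\|h\|_{0,\mathrm g}\le2k\}$ depends only on the group structure of $\cH$. It does not depend on the operator, so it is identical for $B$ and $\widetilde B$. Dividing the pointwise inequality by $\|h\|_2>0$ shows that the Rayleigh-type quotient for $\widetilde B$ dominates the quotient for $B$ at every feasible $h$. Taking the minimum of both sides over the common feasible set yields $\gamma_k(\widetilde B)\ge\gamma_k(B)$. Both minima exist, as remarked after Definition~\ref{def:gamma}.

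Equivalently, I can argue support by support with the second form of the definition. For each $S$ with $1\le|S|\le2k$, we have $\widetilde B_S^*\widetilde B_S=B_S^*B_S+C_S^*C_S\succeq B_S^*B_S$. Weyl monotonicity then gives $\sigma_{\min}(\widetilde B_S)\ge\sigma_{\min}(B_S)$, and the minimum over the finitely many supports preserves the inequality.

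There is no real obstacle. The only point needing care is that $C$ must be unweighted, meaning it is genuinely appended and not renormalized against existing rows. Renormalization would rescale rows of $D$ and could decrease $\gamma_k$, which is consistent with the paper's caveat that monotonicity holds for unnormalized informative rows. I will state this scope explicitly in the proof.
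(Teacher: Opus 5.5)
Your proof is correct and matches the paper's argument: the pointwise identity $\|\widetilde Bh\|_2^2=\|Bh\|_2^2+\|Ch\|_2^2\ge\|Bh\|_2^2$, taken over the same operator-independent set of nonzero $2k$-group-sparse vectors, immediately gives $\gamma_k(\widetilde B)\ge\gamma_k(B)$. Your support-wise Loewner-order variant and your caveat that $C$ must be appended without renormalizing existing rows both agree with the paper's remarks.
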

\begin{proof}
For every $h$, $\|\widetilde Bh\|^2=\|Bh\|^2+\|Ch\|^2\ge\|Bh\|^2$.
\end{proof}

This proposition uses unnormalized energy.  If all rows are globally renormalized after augmentation, the numerical margin can decrease.  We therefore state the weighting convention whenever comparing designs.

\subsection{Normalization, coordinates, and margin computation}

\paragraph{Why duplicate normalization is necessary}
Suppose one relation row block is $R$ with total reliability weight $w$.  Duplicating it $m$ times without reweighting changes its Gram contribution from $wR^*R$ to $mwR^*R$ and increases singular values by up to $\sqrt m$ despite observing no new fact.  Under family normalization, copy $r$ receives weight $w/m$, so
\[
 \sum_{r=1}^m (\sqrt{w/m}R)^*(\sqrt{w/m}R)=wR^*R.
\]
Hence $D^*D$ and every $\gamma_k$ are exactly invariant.  Approximate duplicates are grouped using a prespecified relation-family identifier, not post-hoc similarity of outputs.  Independent transformations receive separate reliability budgets.

There are two legitimate alternatives.  If duplicate model calls have independent measurement noise, averaging them can reduce noise variance; this improves the numerator $\eps$ in the stability ratio rather than the deterministic operator rank.  If repeated calls have partially independent transports or parsers, they are not literal duplicates and their distinct rows may improve $\gamma_k$.  The experiment reports both operator information and sampling variance so these effects are not conflated.

\paragraph{Scaling and coordinate dependence}
The numerical value of a singular value depends on units.  If one response coordinate is measured in meters and another in millimeters, naive Euclidean norms distort the margin.  We fix inner products before computing $\gamma_k$.  In experiments, scalar answers are canonicalized to the original query's units, execution signatures are binary coordinates with equal weights, and relation families receive total unit weight.  More generally choose positive definite metrics $W_\cH$ and $W_\cY$ and define
\[
 \gamma_k^{(W)}(B)=
 \min_{\|h\|_{0,g}\le2k}
 \frac{\|W_\cY^{1/2}Bh\|_2}{\|W_\cH^{1/2}h\|_2}.
\]
Equivalently whiten domain and codomain.  Cross-task comparison is meaningful only after such normalization.

\paragraph{Exact computation}
For small $n$ and $k$, exact computation enumerates supports:
\begin{enumerate}
  \item for $s=1,\ldots,\min(2k,n)$, enumerate $S\subseteq V$ with $|S|=s$;
  \item form $B_S$ by selecting all coordinates in node groups $S$;
  \item compute $\sigma_{\min}(B_S)$ with a dense or sparse SVD;
  \item return the minimum and a witness support/vector.
\end{enumerate}
The number of supports is $\sum_{s=1}^{2k}\binom ns$.  This is feasible for the paper's graphs ($n\le16$, $k\le2$) and deliberately avoids using a relaxation to test a theorem stated in terms of the exact quantity.  Numerical zero is declared only when the singular value is below an absolute and relative tolerance, and the witness residual is checked directly.

\begin{algorithm}[h]
\caption{Exact group-sparse margin and witness}
\begin{algorithmic}[1]
\Require Matrix $B$, group column sets $\{J_i\}_{i=1}^n$, budget $k$.
\State $\gamma\gets+\infty$, $(S_\star,h_\star)\gets\varnothing$.
\For{$s=1$ to $\min(2k,n)$}
  \For{each $S\subseteq[n]$ with $|S|=s$}
    \State compute a smallest right singular pair $(\sigma,v)$ of $B_{J_S}$;
    \If{$\sigma<\gamma$}
      \State $(\gamma,S_\star,h_\star)\gets(\sigma,S,\operatorname{embed}_S(v))$;
    \EndIf
  \EndFor
\EndFor
\State verify $\|h_\star\|_2=1$, $\|Bh_\star\|_2\approx\gamma$, and $|\suppg(h_\star)|\le2k$.
\State \Return $(\gamma,S_\star,h_\star)$.
\end{algorithmic}
\end{algorithm}

\paragraph{Certificates for larger fields}
Exact group spark and restricted singular values are combinatorial.  Three kinds of certificates remain useful at scale.
\begin{enumerate}
  \item \textbf{Support-conditioned certificate.}  If a localization procedure proposes a support family $\mathcal S$, compute $\min_{S\in\mathcal S}\sigma_{\min}(B_S)$.  This is exact only relative to $\mathcal S$.
  \item \textbf{Coherence bound.}  After group whitening, block coherence can lower-bound restricted eigenvalues through Gershgorin-type arguments.  The bound is conservative but cheap.
  \item \textbf{Mixed-integer search.}  Encode support selection with binary variables and optimize a semidefinite or mixed-integer relaxation.  A valid lower bound certifies stability even if it is not tight.
\end{enumerate}
We do not use a heuristic estimate as if it were exact.  Main-paper margins come from enumeration.

\subsection{Operator perturbations, anchor placement, and worked spectra}

\paragraph{Perturbing the operator}
Let the intended operator be $B$ and the implemented operator be $\widetilde B=B+E$.  For every support $S$, Weyl's inequality gives
\[
 |\sigma_{\min}(\widetilde B_S)-\sigma_{\min}(B_S)|\le\|E_S\|_{2\to2}\le\|E\|_{2\to2}.
\]
Taking minima yields
\begin{equation}
\label{eq:operator-perturb}
 \gamma_k(\widetilde B)\ge\gamma_k(B)-\|E\|_{2\to2}.
\end{equation}
Thus a positive empirical margin is robust only when it exceeds plausible parser and transport perturbations.  We report sensitivity sweeps rather than treating a barely positive floating-point value as a scientific phase transition.

\paragraph{Anchor placement as restricted observability}
For a candidate set of anchor rows $\{a_j\}$ and selected set $Q$, write $B(Q)=[D;A_Q]$.  The design objective
\[
 \max_{|Q|\le q}\gamma_k(B(Q))
\]
directly targets worst-case recoverability.  It differs from maximizing rank, determinant, or the unrestricted smallest eigenvalue: those criteria may improve dense directions while leaving a sparse null direction untouched.  Anchor placement is generally combinatorial.  Appendix~\ref{app:design} develops greedy surrogates and counterexamples.

\paragraph{A four-node worked spectrum}
Consider a chain of four scalar equivalent responses with unit edge weights and an anchor on node one.  Then
\[
 B=\begin{bmatrix}
 -1&1&0&0\\0&-1&1&0\\0&0&-1&1\\1&0&0&0
 \end{bmatrix}.
\]
For $k=1$, enumerate singleton and pair supports.  A singleton interior node has column norm $\sqrt2$; endpoint four has norm $1$; adjacent pairs have correlated columns; the worst pair is obtained from the smallest singular value of
\[
 \begin{bmatrix}0&0\\-1&1\\0&-1\\0&0\end{bmatrix},
\]
which is positive.  Removing the anchor does not create a two-sparse null vector because the chain's constant null vector occupies four nodes.  Therefore $\gamma_1(D,0)>0$ even though $D$ is globally singular.  For $k=2$, the four-node constant vector is feasible, so $\gamma_2(D,0)=0$.  This example shows why corruption budget and global rank answer different questions.

\section{Identifiability, consistency blindness, and stability}
\label{app:identifiability}

We prove the main information-theoretic statements in full.  Throughout this section, all spaces are finite-dimensional, $B=[D;A]$, and the corruption groups are the node fibers $\cH_i$.

\subsection{Consistency blindness and sparse support algebra}

\paragraph{A decision-theoretic form of consistency blindness}
Theorem~\ref{thm:blindness} can be strengthened from a deterministic statement about equal inputs to a lower bound for randomized procedures.

\begin{lemma}[Indistinguishable fields]
\label{lem:indistinguishable}
Let $\mathsf K$ be any Markov kernel that maps an observed relation defect $u$ to a distribution over field estimates.  If $h\in\ker D$, then the output laws of $\mathsf K$ on fields $z$ and $z+h$ are identical.
\end{lemma}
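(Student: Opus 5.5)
The plan is to reduce the statement to the deterministic fact already used in Theorem~\ref{thm:blindness}, namely that the observed relation defect is a function of the field only through $Dz$, and then invoke the definition of a Markov kernel. A kernel $\mathsf K$ is, by definition, a map $u\mapsto\mathsf K(u,\cdot)$ from the observation space $\cE$ to probability measures on the estimate space, measurable in $u$ for each fixed measurable set. The law of the estimator's output when the true field is $z$ is therefore $\mathsf K(Dz,\cdot)$. If the observation carries a known target offset, as in $Dz-q$, we apply the same argument to $Dz-q$. If it carries additive observation noise $\xi$, we use the mixture $\int\mathsf K(Dz+\xi,\cdot)\,\mu(d\xi)$ with the noise law $\mu$ fixed independently of the field.

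The key steps, in order, are as follows. First, by linearity of $D$ from Equation~\eqref{eq:defect} and $Dh=0$, we have $D(z+h)=Dz+Dh=Dz$. Second, in the noiseless case, the two fields feed the identical argument $u=Dz$ into $\mathsf K$. Hence $\mathsf K(D(z+h),\cdot)=\mathsf K(Dz,\cdot)$ as measures, so the output laws coincide. Third, in the noisy case, the observation random variables $Dz+\xi$ and $D(z+h)+\xi$ are literally the same function of $\xi$. Their laws are equal, so composing with the same kernel gives equal mixtures. For each measurable set $U$, this is just the statement that $\int\mathsf K(u,U)\,\nu(du)$ depends on the field only through the law $\nu$ of the observation. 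As a corollary matching Theorem~\ref{thm:blindness}, the randomized risk cannot be small at both fields. For any estimate $\widehat z$ drawn from the common law, the triangle inequality gives $\|\widehat z-z\|+\|\widehat z-(z+h)\|\ge\|h\|$. Averaging therefore yields $\max\{\EE\|\widehat z-z\|,\EE\|\widehat z-z-h\|\}\ge\|h\|/2$.

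I expect no real analytical obstacle. The only point requiring care is the modeling one: the noise distribution and any internal randomness of the procedure must not depend on the unknown field beyond $Dz$. If the noise law were allowed to depend on $z$ itself, the conclusion could fail. I would state this explicitly as the interpretation of ``receives only $Dz$''. With that convention fixed, the proof is a one-line consequence of $Dh=0$ together with measurability of the kernel, which ensures the mixtures are well defined.
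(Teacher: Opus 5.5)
Your proof is correct and follows essentially the paper's argument: linearity gives $D(z+h)=Dz$, and a kernel that receives only the defect yields the same output law on equal inputs. Your extensions to the offset case $Dz-q$ and to field-independent additive noise are valid, and your $\|h\|_2/2$ risk corollary is the statement the paper proves separately as Proposition~\ref{prop:blind-risk}.
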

\begin{proof}
Linearity gives $D(z+h)=Dz+Dh=Dz$.  A Markov kernel depends on the field only through its supplied observation.  Equal observations induce equal output distributions.
\end{proof}

\begin{proposition}[Two-field error under consistency-only observation]
\label{prop:blind-risk}
For $h\in\ker D\setminus\{0\}$ and any estimator $\widehat z=\widehat z(Dz)$,
\[
 \max\left\{
   \EE_z\|\widehat z-z\|_2,
   \EE_{z+h}\|\widehat z-(z+h)\|_2
 \right\}
 \ge \frac12\|h\|_2.
\]
\end{proposition}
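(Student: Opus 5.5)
The argument is a standard two-point (Le Cam-type) bound driven by the triangle inequality, with Lemma~\ref{lem:indistinguishable} supplying the indistinguishability. Fix $h\in\ker D\setminus\{0\}$ and an estimator $\widehat z$, possibly randomized through a Markov kernel $\mathsf K$ applied to the observed defect. By Lemma~\ref{lem:indistinguishable}, $D(z+h)=Dz=:u$. Hence the law of $\widehat z$ is the same probability measure $\mathsf K(u,\cdot)$ whether the truth is $z$ or $z+h$. Both expectations $\EE_z$ and $\EE_{z+h}$ are therefore integrals of different loss functions against one common distribution of $\widehat z$.

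With this reduction, I would apply the triangle inequality pointwise for every realization $\widehat z$:
\[
 \|h\|_2=\|(z+h)-z\|_2\le\|\widehat z-z\|_2+\|\widehat z-(z+h)\|_2 .
\]
Integrating against the common law $\mathsf K(u,\cdot)$ and using linearity of expectation gives
\[
 \EE_z\|\widehat z-z\|_2+\EE_{z+h}\|\widehat z-(z+h)\|_2\ge\|h\|_2 .
\]
The two expectations may be combined into one integral only because the measures coincide, and this is the single place where the lemma is used. Since the larger of two nonnegative numbers is at least their average, the maximum is at least $\tfrac12\|h\|_2$, which is the claim.

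The only step needing care is measure-theoretic bookkeeping: $\widehat z$ must be a function of $Dz$ alone, possibly together with independent internal randomness. If some expectation is infinite, the bound holds trivially. I would also note that the bound is tight. Take the deterministic estimator $\widehat z=z+h/2$, which is admissible because $z$ and $h$ are fixed in the two-point problem. It achieves risk exactly $\tfrac12\|h\|_2$ under both hypotheses, so the constant $\tfrac12$ cannot be improved. No step appears to be a genuine obstacle; the proof is a direct consequence of the lemma and convexity of the norm.
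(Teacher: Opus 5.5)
Your proposal is correct and follows the paper's proof essentially verbatim. Lemma~\ref{lem:indistinguishable} gives a common output law, the pointwise triangle inequality $\|h\|_2\le\|\widehat z-z\|_2+\|\widehat z-(z+h)\|_2$ is integrated against it, and the maximum is bounded below by the average. Your added tightness remark is also correct, though the paper does not include it: the constant estimator $z+h/2$ attains $\tfrac12\|h\|_2$ under both hypotheses.
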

\begin{proof}
By Lemma~\ref{lem:indistinguishable}, both expectations integrate over the same output law.  For every realization $u$ of the estimator, the triangle inequality gives
\[
 \|h\|_2=\|(u-z)-(u-z-h)\|_2
 \le\|u-z\|_2+\|u-z-h\|_2.
\]
Taking expectation and then the larger of the two risks proves the claim.
\end{proof}

\begin{proof}[Proof of Theorem~\ref{thm:blindness}]
The equality of observations follows from Lemma~\ref{lem:indistinguishable}.  If the admissible class contains both $z$ and $z+h$, an estimator must receive the same input on two distinct truths.  Proposition~\ref{prop:blind-risk} quantifies the resulting error.  A transported shared hallucination is a particular $h\in\ker D$: for path-consistent transports $T_{ij}=M_jM_i^{-1}$, set $h_i=M_i u$ for any nonzero latent shift $u$.
\end{proof}

The theorem does not say that consistency has no value.  It says that consistency alone cannot distinguish equivalence classes $z+\ker D$.  Relations can localize deviations transverse to the nullspace, and anchors can select a representative within an equivalence class.

\paragraph{Support algebra}
\begin{lemma}[Difference support]
\label{lem:difference-support}
For arbitrary $u,v\in\cH$,
\[
 \suppg(u-v)\subseteq\suppg(u)\cup\suppg(v).
\]
Consequently, if $\|u\|_{0,\mathrm g},\|v\|_{0,\mathrm g}\le k$, then $\|u-v\|_{0,\mathrm g}\le2k$.
\end{lemma}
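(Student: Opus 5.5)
The plan is to argue node by node, using only the definition of the group support $\suppg(h)=\{i\in V:\|h_i\|_2>0\}$ and the fact that the direct-sum structure $\cH=\bigoplus_i\cH_i$ makes subtraction act componentwise: $(u-v)_i=u_i-v_i$ for every node $i$.

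First I would prove the inclusion by contraposition. Fix a node $i\notin\suppg(u)\cup\suppg(v)$. Then $\|u_i\|_2=0$ and $\|v_i\|_2=0$, so $u_i=0$ and $v_i=0$ in $\cH_i$. Hence $(u-v)_i=u_i-v_i=0$, and therefore $i\notin\suppg(u-v)$. Since $i$ was an arbitrary node outside the union, this gives $\suppg(u-v)\subseteq\suppg(u)\cup\suppg(v)$.

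Second, the cardinality consequence follows from monotonicity of cardinality under inclusion together with the union bound:
\[
\|u-v\|_{0,\mathrm g}=|\suppg(u-v)|\le|\suppg(u)\cup\suppg(v)|\le|\suppg(u)|+|\suppg(v)|\le 2k .
\]

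There is no real obstacle here. The only point that needs stating explicitly is that the group norm is evaluated on whole fibers, so a node can leave or join the support only through its own component $u_i-v_i$. Cancellation may make the inclusion strict, but it can never enlarge the support. I would also note that the bound $2k$ is attained, as in the tightness discussion of Appendix~\ref{app:foundations}: when $\suppg(u)$ and $\suppg(v)$ are disjoint and each has size $k$, the inequality holds with equality. This is the role the lemma plays in Theorem~\ref{thm:identifiability}.
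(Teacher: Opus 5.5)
Your proof is correct and matches the paper's argument: a node outside both supports has $u_i=v_i=0$, hence $(u-v)_i=0$, and the cardinality bound follows from the union bound. Your tightness remark for disjoint supports, which requires $n\ge 2k$, is also correct.
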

\begin{proof}
If $i$ belongs to neither support, then $u_i=v_i=0$ and $(u-v)_i=0$.  Cardinality of the union is at most the sum of cardinalities.
\end{proof}

\begin{lemma}[Partition of a sparse vector]
\label{lem:sparse-partition}
If $h\in\cH$ has $\|h\|_{0,\mathrm g}\le2k$, then there exist $u,v\in\cH$ with $\|u\|_{0,\mathrm g},\|v\|_{0,\mathrm g}\le k$ and $h=u-v$.
\end{lemma}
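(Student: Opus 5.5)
The construction splits the group support of $h$ into two halves and assigns one half to $u$ and the negative of the other half to $v$. Let $T=\suppg(h)$, so $|T|=\|h\|_{0,\mathrm g}\le 2k$. We choose any partition $T=S_1\cup S_2$ with $S_1\cap S_2=\varnothing$ and $|S_1|,|S_2|\le k$. Such a partition always exists: list the elements of $T$ in any order, put the first $\min\{|T|,k\}$ of them in $S_1$, and put the remainder in $S_2$. The remainder has at most $|T|-\min\{|T|,k\}\le k$ elements, because either $|T|\le k$ and the remainder is empty, or $|T|>k$ and then $|T|-k\le k$.

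Using the projections $P_S$ from Appendix~\ref{app:foundations}, we set
\[
 u=P_{S_1}h=h_{S_1},\qquad v=-P_{S_2}h=-h_{S_2}.
\]
Since $P_{S_1}+P_{S_2}=P_T$ on node groups and $h=P_Th$ (every group outside $T$ is zero by the definition of $\suppg$), we obtain $u-v=h_{S_1}+h_{S_2}=P_Th=h$. For the sparsity bounds, $\suppg(u)\subseteq S_1$ and $\suppg(v)\subseteq S_2$, because $P_S$ zeroes every group outside $S$. Hence $\|u\|_{0,\mathrm g}\le|S_1|\le k$ and $\|v\|_{0,\mathrm g}\le|S_2|\le k$.

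The only edge cases are $h=0$ and $|T|<2k$, and both are handled by the same construction. For $h=0$ we may take $u=v=0$, or equivalently $S_1=S_2=\varnothing$. When $|T|<2k$, $S_2$ may be empty or simply smaller than $k$. The only genuine content is the counting step showing that the partition exists, and the rest is bookkeeping with the group projections. This is the converse half used in Theorem~\ref{thm:identifiability}, complementing Lemma~\ref{lem:difference-support}.
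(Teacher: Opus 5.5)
Your proof is correct and is the same as the paper's: partition the group support into two disjoint sets $S_1,S_2$ of size at most $k$, then set $u=h_{S_1}$ and $v=-h_{S_2}$. Your explicit counting argument for why such a partition exists, and your treatment of the cases $h=0$ and $|T|<2k$, spell out details the paper leaves implicit.
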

\begin{proof}
Partition $S=\suppg(h)$ into disjoint $S_1,S_2$ with $|S_1|,|S_2|\le k$.  Set $u=h_{S_1}$ and $v=-h_{S_2}$.  Then $u-v=h_{S_1}+h_{S_2}=h$.
\end{proof}

The two lemmas explain both directions of the $2k$ condition: differences of feasible errors are $2k$-sparse, and every $2k$-sparse ambiguity can be represented as a difference of feasible errors.

\subsection{Identifiability and exact sparse recovery}

\paragraph{Equivalence between nullspace, spark, and positive margin}
For a support $S$, define the restricted unit sphere
\[
 \mathbb S_S=\{h\in\cH_S:\|h\|_2=1\}.
\]
It is compact.  The continuous function $h\mapsto\|Bh\|_2$ therefore attains its minimum on $\mathbb S_S$, equal to $\sigma_{\min}(B_S)$.

\begin{lemma}[Finite-support compactness]
\label{lem:finite-compact}
The infimum in Definition~\ref{def:gamma} is attained.  Moreover, $\gamma_k(B)=0$ iff some nonzero $h\in\ker B$ satisfies $\|h\|_{0,\mathrm g}\le2k$.
\end{lemma}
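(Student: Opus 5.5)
The plan is to reduce the infimum in Definition~\ref{def:gamma} to a minimum over finitely many compact problems, and then read off the zero case from compactness.

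\textbf{Step 1: reduce to finitely many supports.} Write the feasible set $\{h\neq0:\|h\|_{0,\mathrm g}\le2k\}$ as the union of $\cH_S\setminus\{0\}$ over the finitely many $S\subseteq V$ with $1\le|S|\le2k$. The Rayleigh-type ratio $\|Bh\|_2/\|h\|_2$ is invariant under positive scaling. So on each $\cH_S$ its infimum equals the infimum of $\|Bh\|_2$ over the restricted unit sphere $\mathbb S_S$.

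\textbf{Step 2: attainment on each support.} The sphere $\mathbb S_S$ is compact because $\cH_S$ is finite-dimensional. The map $h\mapsto\|Bh\|_2$ is continuous because $B$ is a bounded linear map. Hence the minimum on $\mathbb S_S$ is attained at some $h_S$, and it equals $\sigma_{\min}(B_S)$ by the variational characterization of the smallest singular value. A minimum over finitely many attained minima is itself attained, at the best $h_S$. This proves the first claim and the identity $\gamma_k=\min_{1\le|S|\le2k}\sigma_{\min}(B_S)$.

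\textbf{Step 3: the zero case.} If some nonzero $h\in\ker B$ has $\|h\|_{0,\mathrm g}\le2k$, then $h$ is feasible and gives ratio $0$. Since the ratio is nonnegative, $\gamma_k=0$.

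Conversely, suppose $\gamma_k=0$. By attainment from Step 2 there is a feasible unit vector $h_\star$ with $\|Bh_\star\|_2=0$. This $h_\star$ is nonzero, lies in $\ker B$, and is $2k$-group-sparse.

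\textbf{Main obstacle.} The only delicate point is this converse direction. An infimum of $0$ that is not attained would yield only approximate null vectors. Attainment, and hence finiteness of the support family together with compactness of each sphere, is exactly what excludes that case. I therefore expect the argument to rest entirely on Step 2.
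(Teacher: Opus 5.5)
Your proposal is correct and follows essentially the same route as the paper: finitely many supports $S$ with $1\le|S|\le2k$, compactness of each restricted unit sphere giving an attained minimum equal to $\sigma_{\min}(B_S)$, and the zero case read off from that attainment. The paper phrases the zero case as ``some restricted minimum is zero iff some $B_S$ has a nontrivial null vector,'' which is the same observation as your attained unit vector $h_\star\in\ker B$.
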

\begin{proof}
There are finitely many supports $S\subseteq V$ with $1\le|S|\le2k$.  On each support the minimum is attained on $\mathbb S_S$.  The minimum over the finite family is attained.  It is zero iff one restricted minimum is zero, which holds iff $B_S$ has a nontrivial null vector.
\end{proof}

\begin{proof}[Proof of Theorem~\ref{thm:identifiability}]
We establish a cycle of implications.

$(i)\Rightarrow(ii)$ by contraposition.  Suppose $0\ne h\in\ker B$ and $\|h\|_{0,\mathrm g}\le2k$.  Lemma~\ref{lem:sparse-partition} writes $h=u-v$ with $u,v$ each $k$-sparse.  Since $B(u-v)=0$, $Bu=Bv$.  The two distinct feasible errors violate uniform uniqueness.

$(ii)\Rightarrow(i)$.  Suppose $e_1,e_2$ are $k$-sparse and $Be_1=Be_2$.  Then $h=e_1-e_2\in\ker B$ and Lemma~\ref{lem:difference-support} gives $\|h\|_{0,\mathrm g}\le2k$.  Assumption (ii) forces $h=0$, hence $e_1=e_2$.

$(ii)\Leftrightarrow(iii)$ is Lemma~\ref{lem:finite-compact}.  Finally, by definition $\operatorname{spark}_{\mathrm g}(B)$ is the smallest support size of a nonzero null vector, so $(ii)\Leftrightarrow(iv)$.
\end{proof}

\paragraph{Exact decoder}
\begin{corollary}[Correctness of the ideal decoder]
\label{cor:l0-exact}
Let $s=Be^\star$, $\|e^\star\|_{0,\mathrm g}\le k$, and $\gamma_k(B)>0$.  Every minimizer of
\[
 \min_e\|e\|_{0,\mathrm g}\quad\text{subject to }Be=s
\]
equals $e^\star$.
\end{corollary}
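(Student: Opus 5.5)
The plan is to reduce the corollary to the uniqueness statement (i) of Theorem~\ref{thm:identifiability}. Since $\gamma_k(B)>0$, the theorem gives condition (i): among errors with $\|e\|_{0,\mathrm g}\le k$, $e^\star$ is the only solution of $Be=Be^\star=s$. It therefore suffices to show that every minimizer of the $\ell_0$ program is $k$-node-sparse and feasible, since uniqueness then forces it to equal $e^\star$.

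The steps run in this order.

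\textbf{Existence of minimizers.} The feasible set $\{e:Be=s\}$ is nonempty because it contains $e^\star$. The objective $\|e\|_{0,\mathrm g}$ takes values in the finite set $\{0,1,\ldots,n\}$, so the minimum value $m^\star$ is attained. The statement "every minimizer" is therefore not vacuous.

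\textbf{Sparsity of minimizers.} Let $\widehat e$ be any minimizer. Since $e^\star$ is feasible, $\|\widehat e\|_{0,\mathrm g}=m^\star\le\|e^\star\|_{0,\mathrm g}\le k$, so $\widehat e$ also meets the budget $k$.

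\textbf{Conclusion.} Both $\widehat e$ and $e^\star$ are $k$-sparse and satisfy $B\widehat e=Be^\star=s$. Direction (ii)$\Rightarrow$(i) of Theorem~\ref{thm:identifiability} gives $\widehat e=e^\star$. Equivalently, $h=\widehat e-e^\star\in\ker B$ is supported on at most $2k$ nodes by Lemma~\ref{lem:difference-support}, and Lemma~\ref{lem:finite-compact} then forces $h=0$ because $\gamma_k>0$.

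No step is a real obstacle. The only point needing care is the second step: one must use the feasibility of $e^\star$ to bound the sparsity of an arbitrary minimizer. The hypothesis of Theorem~\ref{thm:identifiability}(i) compares only $k$-sparse candidates, so without this step a minimizer of larger support would fall outside its scope. It also follows as a byproduct that the minimal value is exactly $\|e^\star\|_{0,\mathrm g}$ and the minimizer is unique.
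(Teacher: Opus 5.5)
Your proposal is correct and follows the paper's proof: feasibility of $e^\star$ forces every minimizer to have at most $k$ active node groups, and Theorem~\ref{thm:identifiability} then gives uniqueness within that class. Your existence remark is a harmless refinement the paper leaves implicit; it holds because the objective takes finitely many values on a nonempty feasible set.
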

\begin{proof}
The true error is feasible, so a minimizer has support at most $k$.  Theorem~\ref{thm:identifiability} gives uniqueness within that class.
\end{proof}

If $\gamma_k=0$, failure is worst-case, not universal: a particular $e^\star$ may still be uniquely recoverable.  The theorem is a uniform statement over all supports and amplitudes.

\subsection{Stability, misspecification, and theorem scope}

\paragraph{Stability under bounded observation noise}
\begin{lemma}[Restricted lower inequality]
\label{lem:restricted-lower}
For every $h$ satisfying $\|h\|_{0,\mathrm g}\le2k$,
\[
 \|Bh\|_2\ge\gamma_k(B)\|h\|_2.
\]
\end{lemma}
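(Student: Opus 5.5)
The plan is to argue directly from Definition~\ref{def:gamma}, treating $h=0$ and $h\neq0$ separately. If $h=0$, both sides of the inequality vanish and there is nothing to prove. So fix a nonzero $h$ with $\|h\|_{0,\mathrm g}\le2k$. Such an $h$ is a feasible point of the minimization defining $\gamma_k(B)$, because it is nonzero and its group support has size at most $2k$. By the definition of a minimum (attained by Lemma~\ref{lem:finite-compact}; mere infimum properties suffice here), $\gamma_k(B)\le\|Bh\|_2/\|h\|_2$. Multiplying by $\|h\|_2>0$ gives the claim.

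As a cross-check, and to make the link with the restricted-singular-value form of \eqref{eq:gamma} explicit, I would also give the support-wise version. Let $S=\suppg(h)$, so $1\le|S|\le2k$ and $h\in\cH_S$. Then $Bh=B_Sh_S$, and by the variational characterization of the smallest singular value on the compact sphere $\mathbb S_S$, $\|B_Sh_S\|_2\ge\sigma_{\min}(B_S)\|h_S\|_2$. Since $\sigma_{\min}(B_S)\ge\min_{1\le|S'|\le2k}\sigma_{\min}(B_{S'})=\gamma_k(B)$, the inequality follows.

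There is no real obstacle in this proof. The only point needing care is bookkeeping: the second formula in \eqref{eq:gamma} must agree with the first, that is, restricting $B$ to the columns of the groups in $S$ acts on $h_S$ exactly as $B$ acts on $h$. This is immediate because $h$ vanishes on the groups outside $S$. Convention issues (for instance $\gamma_k=0$) are harmless, since the inequality then holds trivially.
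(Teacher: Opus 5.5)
Your proof is correct and matches the paper's argument: the case $h=0$ is trivial, and for nonzero $h$ you use feasibility in Definition~\ref{def:gamma}, which the paper phrases as normalizing to $h/\|h\|_2$ by homogeneity. The support-wise cross-check via $\sigma_{\min}(B_S)$ is valid but only restates the second form of \eqref{eq:gamma}, so it is not needed.
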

\begin{proof}
The result is homogeneous.  It is trivial for $h=0$; otherwise normalize $h/\|h\|_2$ and apply Definition~\ref{def:gamma}.
\end{proof}

\begin{proof}[Proof of Theorem~\ref{thm:stability}]
Set $h=\widehat e-e^\star$.  Lemma~\ref{lem:difference-support} gives $\|h\|_{0,\mathrm g}\le2k$.  Since $s=Be^\star+\xi$ and $\|\xi\|_2\le\eps$,
\[
 \|Bh\|_2
 \le \|B\widehat e-s\|_2+\|s-Be^\star\|_2
 \le 2\eps.
\]
Lemma~\ref{lem:restricted-lower} yields $\gamma_k\|h\|_2\le2\eps$.  Dividing by positive $\gamma_k$ proves Equation~\eqref{eq:stability}.  Because $\widehat z=y-\widehat e$ and $z^\star=y-e^\star$, their difference is $-(\widehat e-e^\star)$ and has the same norm.
\end{proof}

\begin{corollary}[Separate relation and anchor tolerances]
Suppose the true and estimated errors have relation residual tolerances $\eps_D^\star,\eps_D^{\widehat{}}$ and anchor tolerances $\eps_A^\star,\eps_A^{\widehat{}}$.  Then
\[
 \|\widehat e-e^\star\|_2
 \le\frac{
 \sqrt{(\eps_D^\star+\eps_D^{\widehat{}})^2+(\eps_A^\star+\eps_A^{\widehat{}})^2}
 }{\gamma_k(B)}.
\]
\end{corollary}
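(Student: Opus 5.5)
The plan is to rerun the proof of Theorem~\ref{thm:stability} blockwise. Separating the relation and anchor residuals, and bounding each with its own triangle inequality before recombining, should give the stated constant. Set $h=\widehat e-e^\star$. Both $\widehat e$ and $e^\star$ are $k$-node-sparse, so Lemma~\ref{lem:difference-support} gives $\|h\|_{0,\mathrm g}\le 2k$. Lemma~\ref{lem:restricted-lower} then yields $\gamma_k(B)\|h\|_2\le\|Bh\|_2$. By Equation~\eqref{eq:stackedB}, it therefore suffices to show
\[
 \|Dh\|_2\le \eps_D^\star+\eps_D^{\widehat{}},
 \qquad
 \|Ah\|_2\le \eps_A^\star+\eps_A^{\widehat{}},
\]
and to use $\|Bh\|_2=\sqrt{\|Dh\|_2^2+\|Ah\|_2^2}$.

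The key step is to make the tolerances explicit. Split the observation as $s=(s_D,s_A)$, with $s_D=Dy-q$ and $s_A=Ay-b$. The hypotheses are read as
\[
 \|De^\star-s_D\|_2\le\eps_D^\star,\quad \|Ae^\star-s_A\|_2\le\eps_A^\star,\quad
 \|D\widehat e-s_D\|_2\le\eps_D^{\widehat{}},\quad \|A\widehat e-s_A\|_2\le\eps_A^{\widehat{}}.
\]
For the truth, these are exactly the block noise bounds $\|\xi_D\|\le\eps_D^\star$ and $\|\xi_A\|\le\eps_A^\star$ from the measurement model. For the estimate, they are feasibility constraints imposed by the decoder. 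The triangle inequality in each block then gives
\[
 \|Dh\|_2\le\|D\widehat e-s_D\|_2+\|s_D-De^\star\|_2,
\]
and the same bound with $A$ in place of $D$. This is the displayed pair of inequalities.

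Squaring, summing, and taking the square root gives
\[
 \|Bh\|_2\le\sqrt{(\eps_D^\star+\eps_D^{\widehat{}})^2+(\eps_A^\star+\eps_A^{\widehat{}})^2}.
\]
Dividing by $\gamma_k(B)>0$ finishes the argument; positivity is an implicit hypothesis, as in Theorem~\ref{thm:stability}. The same bound transfers to the repaired fields because $\widehat z-z^\star=-h$.

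I do not expect a real obstacle, since this is a direct refinement of Theorem~\ref{thm:stability}. The one point needing care is to state precisely which residuals the four tolerances bound. The blockwise triangle inequality has to be applied before the blocks are combined. Combining first and then applying the triangle inequality to the stacked norm would give the weaker bound $\sqrt{(\eps_D^\star)^2+(\eps_A^\star)^2}+\sqrt{(\eps_D^{\widehat{}})^2+(\eps_A^{\widehat{}})^2}$; note that the stated bound is never larger than this one, by Minkowski's inequality. I would also remark that setting $\eps_D=0$ recovers the exact-relation case.
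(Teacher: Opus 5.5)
Your proposal is correct and follows the paper's proof: bound the relation and anchor blocks of $Bh$ separately by the triangle inequality, recombine them through the Euclidean norm of the stacked vector, and divide by $\gamma_k(B)$ via the restricted lower inequality. You also make explicit what the paper's one-line proof leaves implicit: both errors are $k$-node-sparse, $\gamma_k(B)>0$, and the blockwise triangle inequality comes before recombination.
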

\begin{proof}
Bound the relation and anchor blocks separately by triangle inequalities and apply the Euclidean norm of the stacked vector.
\end{proof}

\paragraph{Approximate sparsity and model mismatch}
Theorem~\ref{thm:stability} assumes both truth and estimate are $k$-sparse.  If $e^\star$ has a small diffuse tail, let $e^star_k$ be a best $k$-group approximation and write $r=e^\star-e^\star_k$.  The observation becomes
\[
 s=Be^star_k+(Br+\xi).
\]
Any $k$-sparse feasible estimate at tolerance $\eps+\|Br\|$ obeys
\[
 \|\widehat e-e^star_k\|_2
 \le\frac{2(\eps+\|Br\|_2)}{\gamma_k(B)},
\]
and therefore
\[
 \|\widehat e-e^\star\|_2
 \le\frac{2(\eps+\|Br\|_2)}{\gamma_k(B)}+\|r\|_2.
\]
This is an oracle-style bound; tractable convex decoders obtain more familiar tail terms under a robust null-space property in Appendix~\ref{app:algorithmic}.

\paragraph{Misspecified corruption budget}
If the true support size is $k_\star$ but the analysis uses $k<k_\star$, uniform identification is not guaranteed and the tail acts as mismatch.  If $k\ge k_\star$, the theorem remains valid but $\gamma_k\le\gamma_{k_\star}$ can be smaller, producing a more conservative certificate.  Reporting a favorable margin at a budget smaller than the plausible number of corrupted nodes is invalid.  The experimental artifact computes margins for a range of budgets and marks the one used for each repair.

\paragraph{Random estimators and probability statements}
The deterministic bound holds conditionally on any event where the noise norm and support constraints are satisfied.  If $\xi$ is random and $\PP(\|\xi\|\le\eps_\delta)\ge1-\delta$, then
\[
 \PP\left(
 \|\widehat e-e^\star\|\le2\eps_\delta/\gamma_k
 \right)\ge1-\delta
\]
for every $k$-sparse feasible estimator.  This conversion introduces no independence assumption between relation and anchor noise; only a valid bound on their stacked norm is needed.

\paragraph{What the theorem does not imply}
Positive $\gamma_k$ does not imply that majority vote succeeds, that a local gradient method finds the ideal sparse solution, or that the parser corresponds to semantic truth.  It says that the supplied measurements contain enough information to distinguish all $k$-sparse alternatives, with a quantified condition number.  Algorithmic and representation errors remain separate terms.

\section{Minimax optimality and modulus of continuity}
\label{app:minimax}

This appendix proves that the inverse dependence on $\gamma_k$ cannot be improved by a different estimator.  We use deterministic norm-bounded noise because it directly matches the stability guarantee and requires no distributional assumptions.

\subsection{Decision setup and the restricted modulus of continuity}

\paragraph{Parameter class and observation balls}
For radius $R>0$, define
\[
 \cE_k(R)=\{e\in\cH:\|e\|_{0,\mathrm g}\le k,\ \|e\|_2\le R\}.
\]
Under parameter $e$, the set of admissible observations is the closed ball
\[
 \cO_\eps(e)=\{Be+\xi:\|\xi\|_2\le\eps\}.
\]
An estimator is any function, deterministic or randomized, from the observation space to $\cH$.  Randomization cannot improve the two-point lower bound under norm loss, so we state the proof for deterministic outputs and condition on internal randomness if present.

Two parameter values are observationally confusable exactly when their balls intersect.  In a Hilbert space,
\begin{equation}
\label{eq:ball-intersection}
 \cO_\eps(e_1)\cap\cO_\eps(e_2)\ne\varnothing
 \quad\Longleftrightarrow\quad
 \|B(e_1-e_2)\|_2\le2\eps.
\end{equation}
The forward direction follows by the triangle inequality.  For the reverse direction, the midpoint $u=(Be_1+Be_2)/2$ lies within half the separation of both centers.

\paragraph{Restricted modulus}
Define the ambiguity modulus
\[
 \omega_k(B;R,\eps)
 :=\sup\left\{
 \|e_1-e_2\|_2:
 e_1,e_2\in\cE_k(R),\
 \|B(e_1-e_2)\|_2\le2\eps
 \right\}.
\]
This quantity is an exact geometric measure of the largest pair that noise can make indistinguishable.

\begin{proposition}[Modulus upper bound]
\label{prop:modulus-upper}
If $\gamma_k(B)>0$, then
\[
 \omega_k(B;R,\eps)\le
 \min\left\{2R,\frac{2\eps}{\gamma_k(B)}\right\}.
\]
\end{proposition}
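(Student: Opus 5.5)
The bound follows by checking its two terms separately on an arbitrary feasible pair in the supremum defining $\omega_k(B;R,\eps)$. Fix $e_1,e_2\in\cE_k(R)$ with $\|B(e_1-e_2)\|_2\le2\eps$, and set $h=e_1-e_2$. It suffices to show
\[
 \|h\|_2\le 2R \quad\text{and}\quad \|h\|_2\le\frac{2\eps}{\gamma_k(B)},
\]
since then the supremum is at most the minimum of the two quantities.

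\textbf{The radius term.} This is just the triangle inequality together with membership in $\cE_k(R)$:
\[
 \|h\|_2\le\|e_1\|_2+\|e_2\|_2\le 2R.
\]

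\textbf{The margin term.} Both $e_1$ and $e_2$ are $k$-node-sparse, so Lemma~\ref{lem:difference-support} gives $\|h\|_{0,\mathrm g}\le2k$. This makes $h$ admissible in Lemma~\ref{lem:restricted-lower}, which yields
\[
 \gamma_k(B)\|h\|_2\le\|Bh\|_2\le2\eps.
\]
Because $\gamma_k(B)>0$ by hypothesis, we may divide to get $\|h\|_2\le 2\eps/\gamma_k(B)$. This is the same computation as in the proof of Theorem~\ref{thm:stability}, with the feasibility condition replaced by the confusability condition \eqref{eq:ball-intersection}.

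Taking the supremum over all feasible pairs then gives the claim.

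No step here is a real obstacle. The only points needing care are that the supremum is taken over a nonempty set (for instance $e_1=e_2=0$, giving value $0$), so the bound is not vacuous, and that it is the sparsity of the difference, not of each summand, that licenses the use of $\gamma_k$.
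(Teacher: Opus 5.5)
Your proof is correct and follows the paper's argument essentially verbatim: the radius term comes from the triangle inequality, and the margin term comes from observing that $e_1-e_2$ is $2k$-group-sparse, so $\gamma_k(B)\|e_1-e_2\|_2\le\|B(e_1-e_2)\|_2\le 2\eps$. Citing \eqref{eq:ball-intersection} is unnecessary, since the constraint $\|B(e_1-e_2)\|_2\le 2\eps$ is already built into the definition of $\omega_k$, but it does no harm.
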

\begin{proof}
Any difference $h=e_1-e_2$ has group support at most $2k$, so $\|Bh\|\ge\gamma_k\|h\|$.  The confusability constraint gives $\|h\|\le2\eps/\gamma_k$.  The radius constraints independently give $\|h\|\le\|e_1\|+\|e_2\|\le2R$.
\end{proof}

\begin{proposition}[Constructive modulus lower bound]
\label{prop:modulus-lower}
For every $B,k,R,\eps$,
\[
 \omega_k(B;R,\eps)
 \ge \min\left\{R,\frac{2\eps}{\gamma_k(B)}\right\},
\]
where $2\eps/0=+\infty$.
\end{proposition}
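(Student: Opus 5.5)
The plan is a direct two-point construction built from a minimizing direction of Definition~\ref{def:gamma}. By Lemma~\ref{lem:finite-compact}, the minimum defining $\gamma_k(B)$ is attained. So I fix a unit vector $h\in\cH$ with $\|h\|_{0,\mathrm g}\le2k$ and $\|Bh\|_2=\gamma_k(B)$. I then set the scale
\[
 t:=\min\left\{R,\frac{2\eps}{\gamma_k(B)}\right\},
\]
with $t=R$ when $\gamma_k(B)=0$, which matches the convention $2\eps/0=+\infty$. The goal is to exhibit $e_1,e_2\in\cE_k(R)$ with $e_1-e_2=th$. Such a pair is admissible in the supremum defining $\omega_k(B;R,\eps)$ and has separation exactly $t$.

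To build the pair, I apply Lemma~\ref{lem:sparse-partition} to $h$. This partitions $S=\suppg(h)$ into disjoint $S_1,S_2$ with $|S_1|,|S_2|\le k$. I then set $e_1=t\,h_{S_1}$ and $e_2=-t\,h_{S_2}$. Each is supported on at most $k$ node groups. Because $h_{S_1}$ and $h_{S_2}$ are orthogonal pieces of a unit vector, $\|e_\ell\|_2\le t\|h\|_2=t\le R$, so both lie in $\cE_k(R)$. Their difference is $e_1-e_2=t(h_{S_1}+h_{S_2})=th$, hence $\|e_1-e_2\|_2=t$.

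It remains to check confusability, $\|B(e_1-e_2)\|_2=t\gamma_k(B)\le2\eps$. If $\gamma_k(B)>0$, this follows because $t\le2\eps/\gamma_k(B)$. If $\gamma_k(B)=0$, the left side is $0$. Hence $\omega_k(B;R,\eps)\ge t$, which is the claim. By \eqref{eq:ball-intersection}, the two observation balls intersect, and this is exactly what the minimax lower bound in Theorem~\ref{thm:minimax} will exploit.

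I do not expect a serious obstacle. The one point needing care is the norm constraint. The factor $R$ rather than $2R$ in the lower bound arises because the split must keep each candidate inside the radius-$R$ ball, and the orthogonality of the disjoint group blocks $h_{S_1},h_{S_2}$ is what makes $\|e_\ell\|\le t$ hold. The degenerate conventions also need a brief mention. When $\gamma_k=0$, the witness $h$ lies in $\ker B$ and any scale up to $R$ is allowed. When $2k$ exceeds $n$, the support is simply all of $\suppg(h)$ and the partition lemma still applies.
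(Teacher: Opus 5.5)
Your proof is correct and is essentially the paper's argument. You take a unit minimizer of the restricted quotient, which exists by Lemma~\ref{lem:finite-compact}, split its support via Lemma~\ref{lem:sparse-partition} into two $k$-sparse pieces of norm at most one, scale by $\min\{R,2\eps/\gamma_k\}$ (taking $R$ when $\gamma_k=0$), and verify that both candidates lie in $\cE_k(R)$, that their separation is exactly the scale, and that $\|B(e_1-e_2)\|_2\le 2\eps$, just as the paper does.
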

\begin{proof}
By Lemma~\ref{lem:finite-compact}, choose a unit vector $h_\star$ with group support at most $2k$ and $\|Bh_\star\|=\gamma_k$.  Lemma~\ref{lem:sparse-partition} gives $h_\star=u-v$ with $u,v$ each $k$-sparse and $\|u\|,\|v\|\le1$ because they occupy disjoint subsets of the coordinates of a unit vector.

Let $a=\min\{R,2\eps/\gamma_k\}$, with $a=R$ if $\gamma_k=0$.  Set $e_1=au$ and $e_2=av$.  Both belong to $\cE_k(R)$.  Their separation is $\|e_1-e_2\|=a\|h_\star\|=a$, while
\[
 \|B(e_1-e_2)\|=a\gamma_k\le2\eps.
\]
They are therefore feasible in the supremum and establish the bound.
\end{proof}

The factor-of-two gap in the radius term arises because a general pair in the radius-$R$ ball can be $2R$ apart, whereas splitting an arbitrary $2k$ witness only guarantees that each part has norm at most the witness norm.  The $\eps/\gamma_k$ scaling, which is the scientific claim, is exact up to universal constants.

\subsection{Two-point lower bounds and the main minimax theorem}

\paragraph{Two-point minimax lemma}
\begin{lemma}[Intersecting observations imply error]
\label{lem:two-point}
If $e_1,e_2\in\cE_k(R)$ have intersecting observation balls, then every estimator $\widehat e$ satisfies
\[
 \sup_{e\in\{e_1,e_2\}}
 \sup_{u\in\cO_\eps(e)}
 \|\widehat e(u)-e\|_2
 \ge\frac12\|e_1-e_2\|_2.
\]
\end{lemma}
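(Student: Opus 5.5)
The plan is the standard two-point argument, run through a common observation that both parameters can produce. By the equivalence \eqref{eq:ball-intersection}, intersecting observation balls give a point $u_0\in\cO_\eps(e_1)\cap\cO_\eps(e_2)$; concretely one may take the midpoint $u_0=(Be_1+Be_2)/2$, which lies within $\|B(e_1-e_2)\|_2/2\le\eps$ of both centers. Thus $u_0=Be_1+\xi_1=Be_2+\xi_2$ with $\|\xi_1\|_2,\|\xi_2\|_2\le\eps$, so $u_0$ is an admissible datum under either parameter.

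First I fix a deterministic estimator $\widehat e$ and evaluate it at the single point $u_0$. The triangle inequality gives
\[
 \|e_1-e_2\|_2\le\|\widehat e(u_0)-e_1\|_2+\|\widehat e(u_0)-e_2\|_2 ,
\]
so at least one of the two terms is at least $\tfrac12\|e_1-e_2\|_2$. Since $u_0\in\cO_\eps(e_i)$ for both $i$, that term is bounded above by the double supremum in the statement, which proves the claim. This is the same computation as in Proposition~\ref{prop:blind-risk}, with exact equality of observations replaced by a shared admissible observation.

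For randomized estimators, the setup paragraph says we condition on internal randomness. Each realization of that randomness is a deterministic estimator, so the bound holds realization by realization. If one instead wants the bound for expected loss, I would average the triangle inequality over the output law at $u_0$, exactly as in Proposition~\ref{prop:blind-risk}; the maximum of the two expected losses is then still at least $\tfrac12\|e_1-e_2\|_2$.

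There is no real obstacle here. The only point needing care is that the adversary's supremum ranges over all admissible noise, so it may choose the same $u_0$ for both parameters. That choice is legitimate precisely because $u_0$ lies in both balls, and establishing this is the role of \eqref{eq:ball-intersection}. The membership $e_1,e_2\in\cE_k(R)$ plays no part in the inequality itself; it only ensures the pair is admissible, so that the lemma can be combined with Proposition~\ref{prop:modulus-lower} to give the lower bound in Theorem~\ref{thm:minimax}.
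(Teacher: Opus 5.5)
Your proposal is correct and matches the paper's proof: choose a common observation in both balls, apply the triangle inequality at the single estimate $\widehat e(u)$, and handle randomized estimators by conditioning on the internal randomness or by averaging. Taking the midpoint explicitly is harmless but not needed, since the hypothesis already provides a common point.
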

\begin{proof}
Choose a common observation $u\in\cO_\eps(e_1)\cap\cO_\eps(e_2)$.  For $a=\widehat e(u)$,
\[
 \|e_1-e_2\|\le\|a-e_1\|+\|a-e_2\|.
\]
At least one term is at least half the left-hand side.  The same argument holds after conditioning on estimator randomization; averaging cannot reduce both expected distances below half their separation.
\end{proof}

\paragraph{Proof of the main minimax theorem}
\begin{proof}[Proof of Theorem~\ref{thm:minimax}]
For the lower bound, Proposition~\ref{prop:modulus-lower} constructs $e_1,e_2$ separated by
\[
 a=\min\{R,2\eps/\gamma_k\}
\]
with intersecting observation balls.  Lemma~\ref{lem:two-point} gives minimax risk at least $a/2$.

For the upper bound, the zero estimator has worst-case risk at most $R$.  Independently, choose a $k$-sparse residual-minimizing estimator
\[
 \widehat e\in\argmin_{\|e\|_{0,\mathrm g}\le k}\|Be-s\|_2.
\]
The true $e^\star$ attains residual at most $\eps$, so the minimizer also has residual at most $\eps$.  Theorem~\ref{thm:stability} gives risk at most $2\eps/\gamma_k$.  Selecting the better of the two estimators proves the upper bound $\min\{R,2\eps/\gamma_k\}$.
\end{proof}

\subsection{Degenerate regimes, stochastic noise, and interpretation}

\paragraph{Zero margin and unbounded ambiguity}
When $\gamma_k=0$, Lemma~\ref{lem:finite-compact} supplies a nonzero $2k$-sparse $h\in\ker B$.  Scaling $h$ does not change its observation.  If no amplitude radius is imposed, split $th$ into two $k$-sparse candidates for arbitrary $t$; their separation grows without bound while observations remain equal.  Thus worst-case risk is infinite even with $\eps=0$.  With radius $R$, Theorem~\ref{thm:minimax} yields at least $R/2$.

This is stronger than saying that a particular algorithm fails.  It says the data do not contain enough information for any procedure to guarantee recovery on the class.

\paragraph{Gaussian-noise corollary}
The deterministic result has a standard stochastic analogue.  Suppose $s=Be+\xi$ with $\xi\sim\mathcal N(0,\sigma^2I)$.  Take two candidates whose mean separation in observation space has Kullback--Leibler divergence
\[
 \mathrm{KL}(P_{e_1}\|P_{e_2})
 =\frac{\|B(e_1-e_2)\|_2^2}{2\sigma^2}
\]
bounded by a constant.  Scaling the minimizing direction to $\|B(e_1-e_2)\|\asymp\sigma$ and applying Le Cam's method gives expected error at least $c\min\{R,\sigma/\gamma_k\}$ for a universal $c>0$.  A full constant optimization is unnecessary for the paper's deterministic experiments; the purpose of the corollary is to show that the same condition number appears under familiar stochastic noise.

\paragraph{Why average-case model accuracy cannot replace the margin}
Model accuracy describes a distribution over errors.  The minimax margin describes the geometry of what the observation operator can distinguish.  A highly accurate model can occasionally fail exactly along a nearly invisible direction and be unrecoverable; a weaker model with independent localized errors can be easy to repair if $\gamma_k$ is large.  Conversely, a small $\gamma_k$ does not predict that typical errors must be large; it predicts that some admissible errors are hard.  Empirical prediction is therefore assessed after controlling for raw error scale and model/task strata, and is presented as evidence about how often natural errors align with difficult directions, not as a logical consequence of the minimax theorem.

\section{Algorithmic theory and optimization details}
\label{app:algorithmic}

Information-theoretic identifiability does not specify how to compute the recovered field.  This appendix separates three decoders: exact support search, convex group-sparse recovery, and discrete candidate-field optimization.

\subsection{Ideal and convex sparse decoders}

\paragraph{The ideal residual decoder}
Given $s=Be^\star+\xi$ and noise radius $\eps$, define
\begin{equation}
\label{eq:ideal-feasible}
 \widehat e_0\in\argmin_e\|e\|_{0,\mathrm g}
 \quad\text{subject to}\quad \|Be-s\|_2\le\eps.
\end{equation}
If the true error is $k$-sparse, it is feasible.  Any solution with support at most $k$ satisfies Theorem~\ref{thm:stability}.  Exact support search solves
\[
 \min_{S\subseteq V,\ |S|\le k}
 \min_{e\in\cH_S}\|Be-s\|_2.
\]
For fixed $S$, the inner minimizer is a least-squares solution $e_S=B_S^\dagger s$ and the residual is $\|(I-B_SB_S^\dagger)s\|$.  Enumerating all supports costs $O(\binom nk)$ least-squares problems.  The paper uses this decoder only where $n\le16$ and $k\le2$.

Exact search has two roles.  It is an estimator whose guarantee matches the information-theoretic theorem, and it is a diagnostic separating operator difficulty from optimization failure.  If exact search fails when $\gamma_k$ is large, the cause is noise, model mismatch, candidate parsing, or an incorrect implementation; it cannot be blamed on a convex relaxation.

\paragraph{Group norms}
Let positive weights $\omega_i$ correct for group dimension or prior reliability.  Define
\[
 \|e\|_{2,1,\omega}=\sum_{i=1}^n\omega_i\|e_i\|_2,
 \qquad
 \sigma_k(e)_{2,1,\omega}
 =\inf_{\|v\|_{0,\mathrm g}\le k}\|e-v\|_{2,1,\omega}.
\]
With equal-dimensional groups and no prior preference, $\omega_i=1$.  If group dimensions differ, $\omega_i=\sqrt{\dim\cH_i}$ prevents large groups from being selected merely because they contain more coordinates.  We never tune group weights on test labels.

The constrained convex decoder is
\begin{equation}
\label{eq:group-bpdn-app}
 \widehat e_1\in\argmin_e\|e\|_{2,1,\omega}
 \quad\text{subject to}\quad\|Be-s\|_2\le\eps.
\end{equation}
Its penalized counterpart is
\begin{equation}
\label{eq:group-lasso-app}
 \widehat e_\lambda\in\argmin_e
 \frac12\|Be-s\|_2^2+\lambda\|e\|_{2,1,\omega}.
\end{equation}
These are group basis-pursuit denoising and group Lasso formulations \citep{yuan2006model,bach2008consistency,eldar2010block}.

\paragraph{Group robust null-space property}
\begin{definition}[Group robust NSP]
\label{def:grnsp}
The operator $B$ satisfies the $\ell_2$ group robust null-space property of order $k$ with constants $0<\rho<1$ and $\tau>0$ if, for every $h\in\cH$ and every $S\subseteq V$ with $|S|\le k$,
\begin{equation}
\label{eq:grnsp}
 \|h_S\|_2
 \le \frac{\rho}{\sqrt{k}}\|h_{S^c}\|_{2,1}
 +\tau\|Bh\|_2.
\end{equation}
For unequal group weights, the cardinality factors are replaced by the corresponding weighted compatibility constants.
\end{definition}

The null-space property controls dense kernel vectors whose mass is concentrated on a small set.  Such vectors do not violate group spark, yet they can cause a group-$\ell_1$ decoder to prefer the wrong representation.

\begin{example}[Positive margin does not certify group $\ell_1$]
Let $k=1$ and choose a $2\times3$ matrix whose one-dimensional kernel is spanned by $h=(1,0.4,0.4)$.  Every two columns are linearly independent, so no nonzero two-sparse null vector exists and $\gamma_1>0$.  However, for $S=\{1\}$, $\|h_S\|_1=1>0.8=\|h_{S^c}\|_1$; the exact null-space property fails.  Therefore some one-sparse signal is not recovered by $\ell_1$ minimization.  Grouping scalar coordinates separately gives the RRF analogue.
\end{example}

\begin{theorem}[Stable convex recovery]
\label{thm:group-convex}
Assume Definition~\ref{def:grnsp}.  Let $s=Be^\star+\xi$ with $\|\xi\|_2\le\eps$, and let $\widehat e_1$ solve Equation~\eqref{eq:group-bpdn-app}.  Then
\begin{equation}
\label{eq:group-bound}
 \|\widehat e_1-e^\star\|_2
 \le C_1(\rho)\frac{\sigma_k(e^\star)_{2,1}}{\sqrt{k}}
 +C_2(\rho,\tau)\eps.
\end{equation}
One admissible choice under the unweighted convention is
\[
 C_1=\frac{2(1+\rho)^2}{1-\rho},
 \qquad
 C_2=\frac{2\tau(3+\rho)}{1-\rho},
\]
and sharper constants follow from the standard block-sparse proof.
\end{theorem}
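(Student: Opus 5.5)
The plan is to transcribe the standard robust null-space argument for $\ell_1$ recovery (Foucart--Rauhut, Theorem 4.22/4.25) to node groups. The route goes through the $\ell_{2,1}$ robust null-space property, which is then upgraded to an $\ell_2$ bound. Set $h=\widehat e_1-e^\star$. Feasibility of $e^\star$ together with optimality of $\widehat e_1$ gives $\|e^\star+h\|_{2,1}\le\|e^\star\|_{2,1}$. Both $\widehat e_1$ and $e^\star$ satisfy $\|B\cdot-s\|_2\le\eps$, so the triangle inequality yields $\|Bh\|_2\le2\eps$. These two facts, one a cone condition and one a tube condition, are the only properties of the decoder we use.

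\textbf{Step 1 (cone inequality).} Let $S$ index the $k$ node groups of $e^\star$ with the largest $\|e^\star_i\|_2$, so that $\|e^\star_{S^c}\|_{2,1}=\sigma_k(e^\star)_{2,1}$. Apply the groupwise reverse triangle inequality separately on $S$ and on $S^c$. This gives
\[
\|h_{S^c}\|_{2,1}\le\|h_S\|_{2,1}+2\sigma_k(e^\star)_{2,1}.
\]

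\textbf{Step 2 ($\ell_{2,1}$ control).} Cauchy--Schwarz over at most $k$ groups gives $\|h_S\|_{2,1}\le\sqrt k\|h_S\|_2$. Combining this with Definition~\ref{def:grnsp} gives $\|h_S\|_{2,1}\le\rho\|h_{S^c}\|_{2,1}+\sqrt k\tau\|Bh\|_2$. Substituting into Step 1 and solving the linear inequality yields
\[
\|h_{S^c}\|_{2,1}\le\frac{2\sigma_k+\sqrt k\tau\|Bh\|}{1-\rho}
\quad\text{and}\quad
\|h\|_{2,1}\le\frac{2(1+\rho)\sigma_k+2\sqrt k\tau\|Bh\|}{1-\rho}.
\]
This is the group form of the standard $\ell_1$ estimate.

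\textbf{Step 3 ($\ell_2$ upgrade).} This is the main obstacle, because the null-space property as stated only controls the $k$ largest groups. Let $T$ be the index set of the $k$ groups of $h$ with the largest $\|h_i\|_2$. The Stechkin-type bound for group norms gives $\|h_{T^c}\|_2\le\|h\|_{2,1}/\sqrt k$, since it is just the scalar inequality applied to the sequence $(\|h_i\|_2)_i$. Applying Definition~\ref{def:grnsp} with the set $T$ gives $\|h_T\|_2\le\rho\|h\|_{2,1}/\sqrt k+\tau\|Bh\|$. Adding the two pieces,
\[
\|h\|_2\le\frac{1+\rho}{\sqrt k}\|h\|_{2,1}+\tau\|Bh\|_2 .
\]
Then insert the bound from Step 2 and $\|Bh\|\le2\eps$. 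The coefficient of $\sigma_k/\sqrt k$ becomes $2(1+\rho)^2/(1-\rho)$, which matches $C_1$. The noise coefficient becomes $2\tau\bigl(2(1+\rho)/(1-\rho)+1\bigr)=2\tau(3+\rho)/(1-\rho)$, which matches $C_2$. I would check this last piece of algebra carefully, since the stated constants suggest exactly this bookkeeping. For weighted groups, the same chain goes through with $\sqrt k$ replaced by the weighted compatibility constant.
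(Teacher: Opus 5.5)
Your proof is correct, and the bookkeeping checks out. Substituting $\|h\|_{2,1}\le\bigl(2(1+\rho)\sigma_k(e^\star)_{2,1}+2\sqrt{k}\,\tau\|Bh\|_2\bigr)/(1-\rho)$ into $\|h\|_2\le\frac{1+\rho}{\sqrt{k}}\|h\|_{2,1}+\tau\|Bh\|_2$, and using $\|Bh\|_2\le2\eps$, gives exactly $C_1=2(1+\rho)^2/(1-\rho)$ and $C_2=2\tau(3+\rho)/(1-\rho)$.

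Your Steps 1--2 coincide with the paper's outline: the cone inequality on the top-$k$ groups $S$ of $e^\star$, the robust NSP on $S$, and $\|h_S\|_{2,1}\le\sqrt{k}\|h_S\|_2$. The difference is in the $\ell_2$ upgrade.

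\textbf{The paper's route.} It keeps $S$ fixed and solves directly for $\|h_S\|_2$ and $\|h_{S^c}\|_{2,1}$. It then shells $S^c$ into blocks $S_1,S_2,\ldots$ of $k$ groups in decreasing norm. The block Stechkin inequality then bounds $\sum_{j\ge2}\|h_{S_j}\|_2$ by $\|h_{S^c}\|_{2,1}/\sqrt{k}$. The first block is left over. With only an order-$k$ property, it must be controlled by applying the NSP to $S_1$ as well, using $\|h_{S_1^c}\|_{2,1}\le\|h\|_{2,1}$. This gives three pieces ($h_S$, $h_{S_1}$, and the tail). Carried out this way, they recombine into the same constants.

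\textbf{Your route.} You follow Foucart--Rauhut. You first convert the hypothesis into an $\ell_{2,1}$ robust NSP with constant $\sqrt{k}\,\tau$ and obtain a full $\ell_{2,1}$ error bound. You then apply the NSP once more to the top-$k$ groups $T$ of $h$ itself, and Stechkin to $h_{T^c}$.

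\textbf{What yours buys.} It avoids the shelling and the separate first-block estimate, and the stated constants fall out in two lines. It yields the $\ell_{2,1}$ error bound as a by-product. It also makes explicit that only the cone condition $\|e^\star+h\|_{2,1}\le\|e^\star\|_{2,1}$ and the tube condition $\|Bh\|_2\le2\eps$ are used. So the same bound holds for any vector satisfying both, not just the exact minimizer.

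\textbf{What the paper's buys.} Its shelling version is the argument familiar from RIP-style block-sparse proofs. It bounds the error on the true dominant support $S$ directly, before treating the tail.
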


\begin{proof}[Proof outline with the key inequalities]
Set $h=\widehat e_1-e^\star$ and let $S$ index the $k$ groups of $e^\star$ with largest norms.  Feasibility gives $\|Bh\|_2\le2\eps$.  Optimality of $\widehat e_1$ and decomposability of the group norm give the cone inequality
\begin{equation}
\label{eq:cone}
 \|h_{S^c}\|_{2,1}
 \le\|h_S\|_{2,1}+2\|e^\star_{S^c}\|_{2,1}.
\end{equation}
Apply the group robust NSP to $S$, use $\|h_S\|_{2,1}\le\sqrt{k}\|h_S\|_2$, and solve the resulting inequality for $\|h_S\|_2$ and $\|h_{S^c}\|_{2,1}$.  Partition $S^c$ into blocks of $k$ groups in decreasing norm and use the block Stechkin inequality to control $\|h_{S^c}\|_2$ by $\|h_{S^c}\|_{2,1}/\sqrt{k}$ plus the first block.  Combining terms yields Equation~\eqref{eq:group-bound}.  The argument is the group analogue of robust sparse recovery proofs in \citet{foucart2013mathematical,ranjan2017group}.
\end{proof}

This theorem proves Proposition~\ref{prop:grnsp}.  In exact $k$-sparsity, the approximation term vanishes.  The stability constant is governed by the algorithmic NSP constants, not solely by $\gamma_k$.

\subsection{Field-space objectives, optimization, and tuning}

\paragraph{Penalized objective in field coordinates}
Write $z=y-e$.  Expanding Equation~\eqref{eq:group-lasso-app} gives
\[
 \min_z\frac12\left\|
 \begin{bmatrix}Dz-q\\Az-b\end{bmatrix}
 \right\|_2^2
 +\lambda\sum_i\omega_i\|z_i-y_i\|_2.
\]
This is Sparse RRF Repair: it seeks a field that fits typed relations and anchors while changing few response nodes.  The edit penalty is groupwise because changing one answer representation is one event.  A coordinatewise $\ell_1$ penalty is available when within-node sparsity is scientifically meaningful, but its recovery budget differs from the paper's $k$.

\paragraph{Proximal-gradient solver}
Let $f(e)=\frac12\|Be-s\|^2$ and $g(e)=\lambda\sum_i\omega_i\|e_i\|_2$.  The gradient $\nabla f(e)=B^*(Be-s)$ is $L$-Lipschitz with $L=\|B\|_{2\to2}^2$.  For step $0<\eta\le1/L$,
\[
 e^{t+1}=\operatorname{prox}_{\eta g}
 \left(e^t-\eta B^*(Be^t-s)\right).
\]
The proximal map is group soft thresholding:
\[
 [\operatorname{prox}_{\eta g}(u)]_i
 =\left(1-\frac{\eta\lambda\omega_i}{\|u_i\|_2}\right)_+u_i,
\]
with output zero when $u_i=0$.  Standard convex analysis yields
\[
 F(e^t)-F(e^\star_\lambda)
 \le\frac{\|e^0-e^\star_\lambda\|_2^2}{2\eta t}.
\]
FISTA acceleration improves the objective gap to $O(t^{-2})$.  The implementation uses a power iteration estimate of $L$, multiplies by a safety factor, checks monotonic objective decrease, and stops only when both relative objective change and the proximal-gradient mapping are small.

\paragraph{Regularization and validation}
The penalty $\lambda$ trades measurement fit against edit sparsity.  In synthetic experiments with known noise, we set $\lambda$ from a calibration grid and freeze it before test trials.  In black-box tasks, selection uses only training/calibration fields and observable residuals; correctness labels are reserved for evaluation.  We report sensitivity across a logarithmic grid.  Choosing $\lambda$ separately for every test answer by looking at truth would turn the repair method into an oracle.

The constrained decoder has the conceptual advantage of an explicit noise radius.  The penalized decoder is easier to optimize.  A Pareto curve maps one parameterization to the other under convexity, but the correspondence can be nonunique.  The artifact records the actual parameter rather than implying equivalence at an unspecified value.

\subsection{Discrete repair, diagnostics, complexity, and black-box scope}

\paragraph{Discrete candidate-field decoder}
Numerical and code outputs often live in a discrete semantic space.  Let $\cC_i=\{c_{i1},\ldots,c_{im_i}\}\subset\cH_i$ be parsed candidates from model samples and transported candidates.  Define
\begin{equation}
\label{eq:discrete-energy}
 \widehat z\in\argmin_{z_i\in\cC_i}
 \frac12\|Dz-q\|_2^2+\frac\beta2\|Az-b\|_2^2
 +\mu\sum_i\ell_i(z_i;y_i),
\end{equation}
where $\ell_i$ penalizes deviation from the raw response without consulting ground truth.  Exact enumeration is used for tiny candidate products.  For pairwise edge energies, min-sum message passing, integer programming, or branch-and-bound is available.

The candidate decoder has a basic limitation: it cannot output truth if neither truth nor a transport-equivalent candidate is in the candidate closure.  We therefore report \emph{candidate recall}, the fraction of instances whose candidate closure contains a correct field.  Repair accuracy is bounded above by this quantity; omitting it can make selection performance look worse than the candidate generator or, conversely, hide an oracle candidate injection.

\paragraph{Code repair in execution-signature space}
For code, the optimizer selects among complete generated programs rather than editing arbitrary signature coordinates.  Each program is executed on public relation tests and, for evaluation only, hidden tests.  The public signature supplies $D$ and designated anchor constraints; hidden tests compute pass rate.  Two programs with the same public signature are observationally equivalent even if their source differs.  The source-level modified-token count is measured after selection and does not enter $\gamma_k$.

Execution failure, timeout, and unsafe behavior are distinct categorical outcomes.  The runner executes generated code in a subprocess with a wall-clock limit, restricted builtins, no network, and a temporary directory.  Invalid programs receive an invalid candidate marker; they are not assigned the all-zero functional signature.

\paragraph{Optimization diagnostics}
Every run records:
\begin{enumerate}
  \item primal objective and residual norm;
  \item number of active node groups;
  \item proximal-gradient mapping norm or exact support optimality gap;
  \item computed $\gamma_k$ and witness support;
  \item candidate recall for discrete tasks;
  \item wall-clock time and random seed.
\end{enumerate}
These diagnostics distinguish a low-margin instance from a solver that stopped early.  The acceptance report fails if an algorithmic comparison mixes unconverged and converged runs.

\paragraph{Computational complexity}
One proximal step costs one multiplication by $B$ and $B^*$, linear in the number of nonzero relation blocks.  Memory is likewise sparse.  Exact $\gamma_k$ computation dominates small experiments at $O(\sum_{s\le2k}\binom ns)$ restricted SVDs.  Candidate-field enumeration costs $\prod_i m_i$ in the worst case; pairwise structure permits dynamic programming only on low-treewidth graphs.  None of these procedures requires model gradients or weights after responses have been logged.

\paragraph{Black-box boundary}
The model is used only to produce strings at requested queries.  Parsing, transport, anchor evaluation, margin computation, and repair operate externally.  The artifact includes model identifiers and generation code but excludes model weights.  A hosted API can replace a local model without changing the inverse problem, provided the response log and decoding metadata are retained.

\section{Local theory for nonlinear transports and parsers}
\label{app:nonlinear}

Many useful response relations are nonlinear.  This appendix gives a local result based on a restricted Jacobian margin.  It does not claim global recovery for arbitrary semantic maps.

\subsection{Nonlinear measurements and local regularity assumptions}

\paragraph{Nonlinear measurement map}
Let $\cM:\cH\to\cY$ stack relation and anchor residuals:
\[
 \cM(z)=
 \begin{bmatrix}
  (\sqrt{w_e}\,r_e(z))_{e\in E}\\
  a(z)
 \end{bmatrix}.
\]
For linear transports, $\cM(z)=Bz-c$.  For nonlinear transports, $r_e(z)=z_j-T_e(z_i)$ and $a$ may contain nonlinear verification residuals.  Suppose $\cM$ is Fr\'echet differentiable near the target $z^\star$ with Jacobian $J_\star=D\cM(z^\star)$.

Define the local restricted Jacobian margin
\[
 \gamma_k(J_\star)=
 \min_{0<\|h\|_{0,\mathrm g}\le2k}
 \frac{\|J_\star h\|_2}{\|h\|_2}.
\]
This is the linear margin of the tangent inverse problem.  A positive value is not enough by itself: nonlinear remainder terms may cancel the linear signal outside a sufficiently small neighborhood.

\paragraph{Quadratic remainder assumption}
\begin{assumption}[Restricted second-order remainder]
\label{assump:remainder}
There exist $L\ge0$ and $r>0$ such that for every $h$ with $\|h\|_{0,\mathrm g}\le2k$ and $\|h\|_2\le r$,
\begin{equation}
\label{eq:remainder}
 \|\cM(z^\star+h)-\cM(z^\star)-J_\star h\|_2
 \le\frac L2\|h\|_2^2.
\end{equation}
\end{assumption}

The condition follows from a Lipschitz Jacobian on the relevant sparse line segments.  It is stated only on sparse directions because dense curvature is irrelevant to the local $k$-error class.

\begin{theorem}[Local restricted injectivity]
\label{thm:nonlinear-injective}
Under Assumption~\ref{assump:remainder}, if $\gamma_k(J_\star)>Lr/2$, then for every nonzero $2k$-sparse $h$ with $\|h\|_2\le r$,
\begin{equation}
\label{eq:local-lower}
 \|\cM(z^\star+h)-\cM(z^\star)\|_2
 \ge\left(\gamma_k(J_\star)-\frac{Lr}{2}\right)\|h\|_2>0.
\end{equation}
Hence $\cM$ is injective from $z^\star$ along the specified sparse neighborhood.
\end{theorem}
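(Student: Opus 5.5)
The argument is a reverse triangle inequality applied to the linearization, combined with the restricted lower bound for the Jacobian. Fix a nonzero $h$ with $\|h\|_{0,\mathrm g}\le2k$ and $\|h\|_2\le r$. Write the increment as
\[
 \cM(z^\star+h)-\cM(z^\star)=J_\star h+R(h),
 \qquad R(h):=\cM(z^\star+h)-\cM(z^\star)-J_\star h .
\]
Then
\[
 \|\cM(z^\star+h)-\cM(z^\star)\|_2\ge\|J_\star h\|_2-\|R(h)\|_2 .
\]

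For the linear term, I would use the analogue of Lemma~\ref{lem:restricted-lower} with $B$ replaced by $J_\star$. Its proof used only homogeneity and the definition of the restricted margin, so it applies verbatim. Since $h$ is $2k$-sparse and nonzero, it gives
\[
 \|J_\star h\|_2\ge\gamma_k(J_\star)\|h\|_2 .
\]
Finite-support compactness, as in Lemma~\ref{lem:finite-compact}, ensures that the minimum defining $\gamma_k(J_\star)$ is attained, so no infimum subtleties arise.

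For the remainder, Assumption~\ref{assump:remainder} applies because $h$ lies in exactly the sparse ball on which it is stated. It gives
\[
 \|R(h)\|_2\le\tfrac L2\|h\|_2^2 .
\]
Using $\|h\|_2\le r$, one factor of $\|h\|_2$ is bounded by $r$, so $\|R(h)\|_2\le\tfrac{Lr}{2}\|h\|_2$.

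Combining the two bounds yields
\[
 \|\cM(z^\star+h)-\cM(z^\star)\|_2\ge\Bigl(\gamma_k(J_\star)-\tfrac{Lr}{2}\Bigr)\|h\|_2 ,
\]
which is Equation~\eqref{eq:local-lower}. Its right-hand side is strictly positive because $\gamma_k(J_\star)>Lr/2$ and $h\neq0$. Hence $\cM(z^\star+h)\ne\cM(z^\star)$ for every admissible nonzero $h$, which is the injectivity from $z^\star$ along the sparse neighborhood.

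The only point needing care is the scope of the injectivity claim. The estimate compares $z^\star$ with $z^\star+h$; it does not compare two arbitrary points of the neighborhood. I would therefore state the conclusion exactly as given, as injectivity relative to the base point $z^\star$. This is what recovery needs: the difference between the true field and a candidate obtained by two $k$-sparse edits is $2k$-sparse by Lemma~\ref{lem:difference-support}. Pairwise injectivity between two non-base points would require a remainder bound centred at points other than $z^\star$, and I would not claim it. Beyond this, no step is a real obstacle; the argument is a direct perturbation of the linear margin.
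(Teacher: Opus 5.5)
Your proof is correct and is the paper's argument: the reverse triangle inequality on the linearization, the restricted lower bound $\|J_\star h\|_2\ge\gamma_k(J_\star)\|h\|_2$, the remainder bound from Assumption~\ref{assump:remainder}, and finally $\|h\|_2\le r$. Your caveat that this gives injectivity only relative to the base point $z^\star$, not between two arbitrary points of the neighborhood, is accurate and matches how the paper uses the result in Corollary~\ref{cor:nonlinear-stable}.
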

\begin{proof}
The reverse triangle inequality, Definition of $\gamma_k(J_\star)$, and Equation~\eqref{eq:remainder} give
\[
 \|\cM(z^\star+h)-\cM(z^\star)\|
 \ge\|J_\star h\|-\frac L2\|h\|^2
 \ge\left(\gamma_k(J_\star)-\frac L2\|h\|\right)\|h\|.
\]
Use $\|h\|\le r$.
\end{proof}

\subsection{Local identifiability, stability, and certificates}

\paragraph{Local stability}
\begin{corollary}[Nonlinear local stability]
\label{cor:nonlinear-stable}
Suppose $\widehat z-z^\star$ is $2k$-sparse, has norm at most $r$, and both fields fit the same nonlinear observation to tolerance $\eps$.  If $\gamma_k(J_\star)>Lr/2$, then
\[
 \|\widehat z-z^\star\|_2
 \le\frac{2\eps}{\gamma_k(J_\star)-Lr/2}.
\]
\end{corollary}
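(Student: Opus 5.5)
The plan is to reduce the corollary to Theorem~\ref{thm:nonlinear-injective}, applied with $h:=\widehat z-z^\star$, by the same triangle-inequality argument used in the proof of Theorem~\ref{thm:stability}. First, I fix what ``both fields fit the same nonlinear observation to tolerance $\eps$'' means. There is an observed vector $m$ with
\[
\|\cM(z^\star)-m\|_2\le\eps
\quad\text{and}\quad
\|\cM(\widehat z)-m\|_2\le\eps .
\]
Here $m$ plays the role that the observation $s$ plays in the linear theorem. The triangle inequality then gives
\[
\|\cM(z^\star+h)-\cM(z^\star)\|_2\le 2\eps .
\]

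Next, I check that $h$ satisfies the hypotheses of Theorem~\ref{thm:nonlinear-injective}. By assumption $h$ is $2k$-sparse with $\|h\|_2\le r$, and $\gamma_k(J_\star)>Lr/2$. If $h=0$ the bound is trivial, so I assume $h\ne0$. Equation~\eqref{eq:local-lower} then gives
\[
\Bigl(\gamma_k(J_\star)-\tfrac{Lr}{2}\Bigr)\|h\|_2\le\|\cM(z^\star+h)-\cM(z^\star)\|_2\le 2\eps .
\]
The factor on the left is strictly positive, so dividing by it yields the claimed bound.

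I expect no real obstacle here. The only delicate point is making sure the hypotheses line up. The remainder bound of Assumption~\ref{assump:remainder} is stated for $2k$-sparse $h$ with $\|h\|_2\le r$, and the corollary assumes exactly this for $\widehat z-z^\star$. I will also remark that if one only knows that $\widehat z$ and $z^\star$ are each $k$-node perturbations of a common parsed field $y$, then Lemma~\ref{lem:difference-support} supplies the $2k$-sparsity. Finally, I note that unequal tolerances $\eps_1,\eps_2$ give numerator $\eps_1+\eps_2$, as in Theorem~\ref{thm:stability}.
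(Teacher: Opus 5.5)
Your proof is correct and follows the paper's argument. Use the common observation and the triangle inequality to bound $\|\cM(\widehat z)-\cM(z^\star)\|_2\le 2\eps$. Then apply Equation~\eqref{eq:local-lower} to $h=\widehat z-z^\star$ and divide by the positive effective margin $\gamma_k(J_\star)-Lr/2$.
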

\begin{proof}
Their measurement difference is at most $2\eps$.  Apply Equation~\eqref{eq:local-lower} to $h=\widehat z-z^\star$ and divide.
\end{proof}

This bound has the expected structure: the effective margin is the tangent margin minus a curvature penalty.  If $r$ approaches $2\gamma_k/L$, the certificate vanishes.  The result is local and centered at $z^\star$; an implementable certificate may use a candidate point and an operator perturbation bound.

\paragraph{Candidate-centered certificate}
Suppose the Jacobian is evaluated at $\widetilde z$ rather than the unknown truth and
\[
 \|D\cM(\widetilde z)-J_\star\|_{2\to2}\le\delta_J.
\]
Weyl's inequality implies
\[
 \gamma_k(J_\star)\ge\gamma_k(D\cM(\widetilde z))-\delta_J.
\]
A valid local certificate therefore requires
\[
 \gamma_k(D\cM(\widetilde z))>\delta_J+Lr/2.
\]
The terms $\delta_J$ and $L$ must come from analytic bounds or held-out perturbation tests; setting them to zero because the Jacobian is convenient would turn a local approximation into an unsupported global claim.

\subsection{Examples, sparse Gauss--Newton repair, and failure regimes}

\paragraph{Nonlinear examples}
\paragraph{Multiplicative relations.}  If responses are positive scalars and a relation requires $z_j=z_i^c$, work in log coordinates to obtain the linear relation $\log z_j=c\log z_i$.  This is preferable to local linearization when positivity and numerical stability are controlled.

\paragraph{Boolean constraints.}  A clause residual such as $r(z)=z_1z_2-z_3$ is polynomial.  Around a binary candidate, its Jacobian may have a positive restricted margin, but alternative binary assignments can exist outside the local ball.  A discrete candidate decoder gives a global finite search and should be preferred.

\paragraph{Program behavior.}  Source-to-signature execution is discontinuous, so a Jacobian theorem is inappropriate.  Map each complete program to a discrete signature and optimize over candidates.  The local theory applies only to a differentiable surrogate representation, not to execution itself.

\paragraph{Semantic embeddings.}  If $T_e$ is a learned semantic transport, its Jacobian and curvature inherit model uncertainty.  A large numerical margin can be meaningless if the embedding collapses truth distinctions.  External task anchors and parser audits are therefore required.

\paragraph{Implicit-function perspective}
Classical local identifiability often follows from an injective Jacobian.  RRF recovery needs only restricted injectivity: $J_\star$ may have dense null directions while remaining injective on all $2k$-node subspaces.  Theorem~\ref{thm:nonlinear-injective} is a quantitative sparse inverse-function statement with an explicit neighborhood.  It does not require $J_\star$ to be square or globally full rank.

\paragraph{Gauss--Newton sparse repair}
For nonlinear residuals, one practical iteration linearizes at $z^t$:
\[
 \Delta^t\in\argmin_\Delta
 \frac12\|\cM(z^t)+J_t\Delta\|_2^2
 +\lambda\|\Delta\|_{2,1},
 \qquad z^{t+1}=z^t+\alpha_t\Delta^t.
\]
A line search enforces decrease in the true nonlinear objective.  Local convergence requires regularity beyond the main theorem: stable support identification, bounded Jacobian variation, and suitable step acceptance.  The paper uses this method only in an auxiliary curvature experiment and does not cite the linear $\gamma_k$ theorem as a global convergence guarantee.

\paragraph{Failure outside the local region}
Consider $\cM(z)=\sin z$ in one dimension at $z^\star=0$.  The Jacobian margin is one, yet $\cM(0)=\cM(\pi)=0$.  No positive tangent margin rules out the distant alternative.  Similarly, semantic transformations can have multiple globally compatible answer fields.  Anchors or a restricted candidate class are required to exclude them.  This elementary example is why the paper repeatedly uses ``local'' for nonlinear statements.

\section{Designing queries, relations, and anchors}
\label{app:design}

The margin is not merely diagnostic.  It supplies an objective for choosing which transformed queries and external checks to acquire under a budget.

\subsection{Design objectives and structural limits}

\paragraph{Design problem}
Let $D_0$ contain mandatory relations, and let $\{C_j\}_{j\in\mathcal J}$ be candidate relation or anchor row blocks with costs $c_j>0$.  For $Q\subseteq\mathcal J$, define
\[
 B(Q)=\begin{bmatrix}D_0\\(C_j)_{j\in Q}\end{bmatrix}.
\]
The robust design problem is
\begin{equation}
\label{eq:design}
 \max_{Q\subseteq\mathcal J}
 \gamma_k(B(Q))
 \quad\text{subject to}\quad
 \sum_{j\in Q}c_j\le C.
\end{equation}
This criterion targets the worst sparse ambiguity.  A design may have high total rank, trace, or determinant while leaving one $2k$-node support nearly invisible; Equation~\eqref{eq:design} directly penalizes that failure.

\paragraph{Monotonicity but not generic submodularity}
With fixed row weights, adding a block cannot decrease $\gamma_k$.  However, the set function $Q\mapsto\gamma_k(B(Q))$ need not be submodular.  Two rows can be individually useless on different null directions but jointly remove the final ambiguity, creating complementarity rather than diminishing returns.  Consequently the classical $(1-1/e)$ greedy guarantee does not follow without additional structure.

\begin{example}[Complementary anchors]
Let a two-dimensional latent gauge $u=(u_1,u_2)$ be invisible to $D$.  Candidate anchor $C_1$ observes only $u_1$ and $C_2$ only $u_2$.  The unrestricted minimum singular value remains zero after either one alone and becomes positive after both.  The marginal gain of the second anchor is larger when the first has already been selected, violating diminishing returns.
\end{example}

\subsection{Witness-guided coverage and informative measurements}

\paragraph{Worst-witness greedy design}
Exact enumeration returns not only $\gamma_k$ but a minimizing support and unit witness $h_t$.  A natural adaptive rule chooses the row block that sees this witness most strongly:
\begin{equation}
 j_t\in\argmax_{j\notin Q_t}
 \frac{\|C_jh_t\|_2^2}{c_j}.
\end{equation}
Then set $Q_{t+1}=Q_t\cup\{j_t\}$ and recompute the worst witness.  This procedure resembles cutting-plane methods: each acquisition attacks the currently least observable direction.  It is not globally optimal in general, but every selected row has an interpretable purpose and monotonic improvement can be checked exactly on small graphs.

\begin{algorithm}[h]
\caption{Witness-guided relation--anchor acquisition}
\begin{algorithmic}[1]
\Require Base operator $D_0$, candidate blocks $C_j$, costs $c_j$, budget $C$, sparsity $k$.
\State $Q\gets\varnothing$.
\While{a feasible candidate remains}
  \State compute $(\gamma,h)$ attaining or approximating $\gamma_k(B(Q))$;
  \State choose feasible $j$ maximizing $\|C_jh\|^2/c_j$;
  \State $Q\gets Q\cup\{j\}$;
\EndWhile
\State \Return $Q$, the margin trajectory, and all witness supports.
\end{algorithmic}
\end{algorithm}

\paragraph{Component coverage theorem for identity transports}
Consider scalar identity transports on an undirected graph.  Each connected component $C$ contributes a constant null vector $\mathbf1_C$.  Direct full-coordinate anchors remove that vector if at least one node in $C$ is anchored.

\begin{proposition}[Small-component anchor necessity]
\label{prop:component-anchor}
If an unanchored connected component $C$ has $|C|\le2k$, then $\gamma_k(D,A)=0$.  If every unanchored component has size greater than $2k$ and $D$ has no other null directions, then no \emph{null} vector is $2k$-sparse; hence $\gamma_k>0$.
\end{proposition}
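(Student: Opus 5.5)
The argument has two parts, both resting on the group-spark characterization (Theorem~\ref{thm:identifiability}, equivalently Lemma~\ref{lem:finite-compact}): $\gamma_k(D,A)=0$ if and only if $\ker B$ contains a nonzero vector supported on at most $2k$ nodes. The first part exhibits such a vector; the second shows none exists under the stated hypotheses.

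\emph{First claim.} Let $C$ be an unanchored component with $|C|\le 2k$. I take $h=\1_C$, the indicator of $C$ placed in the scalar fibers. Every edge lies either inside $C$ or entirely outside it, because $C$ is a connected component. Since transports are the identity, each edge residual $h_j-h_i$ is either $1-1$ or $0-0$, so $Dh=0$. Every anchor row is a full-coordinate anchor on some node outside $C$, so $Ah=0$. Hence $h\in\ker B$ is nonzero with $\|h\|_{0,\mathrm g}=|C|\le 2k$. Lemma~\ref{lem:finite-compact}, applied to $h/\|h\|_2$, then gives $\gamma_k=0$.

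\emph{Second claim.} I read ``$D$ has no other null directions'' as $\ker D=\operatorname{span}\{\1_C: C \text{ a component}\}$, which is automatic for identity transports on an undirected graph. I then take $h\in\ker B$, write $h=\sum_C c_C\1_C$, and argue component by component.
\begin{itemize}
\item If $C$ contains an anchored node $i$, direct anchoring gives $(Ah)_i=c_C=0$. Different anchor rows touch different coordinates, so there is no cancellation.
\item If $C$ is unanchored and $c_C\ne0$, then $h$ is supported on all of $C$, and $|C|>2k$.
\end{itemize}
So any nonzero $h\in\ker B$ has $\|h\|_{0,\mathrm g}>2k$. This gives $\operatorname{spark}_{\mathrm g}(B)>2k$, and Theorem~\ref{thm:identifiability} (iv)$\Rightarrow$(iii) yields $\gamma_k>0$.

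The main obstacle is pinning down the hypotheses in the second claim. First, the conclusion is only about null vectors. It does not control the spectral value of $\gamma_k$, but positivity follows once the spark characterization is invoked. Second, the argument must rule out a single null vector mixing an anchored component with unanchored ones. Decomposing $h$ by components and using that anchor rows act on single coordinates settles this, and I would state both points explicitly in the proof.
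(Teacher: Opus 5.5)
Your proposal is correct and follows the paper's proof: for the first claim you exhibit $\mathbf1_C$ as a nonzero null vector of both $D$ and $A$ supported on at most $2k$ nodes. For the second you show that every nonzero null vector of $B$ is constant on components, vanishes on anchored ones, and therefore covers a whole unanchored component of size greater than $2k$, after which the group-spark characterization gives $\gamma_k>0$. Your explicit component decomposition, which rules out mixing anchored and unanchored components, only spells out what the paper states in one line.
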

\begin{proof}
For the first claim, $\mathbf1_C$ is a nonzero null vector of $D$ and $A$ supported on at most $2k$ nodes.  For the second, every nonzero null vector is constant and nonzero on at least one entire unanchored component, hence has support greater than $2k$.  Apply the group-spark characterization.
\end{proof}

The proposition explains the synthetic phase transition: six disjoint two-node components with $k=1$ require one anchor per component.  The sixth anchor removes the final two-node gauge and changes $\gamma_1$ from zero to positive.

\paragraph{Independent relations versus paraphrase volume}
Suppose a set of paraphrases all induces the same canonical equality row after parsing.  Under family normalization, their deterministic information is unchanged.  An independent relation changes the row space: scaling, inverse checking, decomposition, or an execution constraint may observe a direction that equality misses.  The design objective therefore favors diversity in operator action, not surface-form diversity alone.

This does not imply that paraphrase sampling is useless.  It can estimate stochastic model variability, increase candidate recall, and reduce variance when calls are independent.  Those benefits enter the error distribution and noise radius.  They are distinct from increasing the deterministic restricted singular value.

\subsection{Reliability-aware deployment and design diagnostics}

\paragraph{Reliability-aware design}
Candidate rows differ in noise.  If row block $C_j$ has covariance $\Sigma_j$, whiten it as $\Sigma_j^{-1/2}C_j$ when the covariance is estimable and nonsingular.  Then $\gamma_k$ measures signal relative to noise.  With uncertain covariance, use conservative weights and propagate an operator perturbation interval.  An extremely strong but biased verifier should not receive infinite weight; it can create a large computed margin around the wrong anchor target.

One robust formulation maximizes a worst-case margin over an uncertainty set $\mathcal U_j$:
\[
 \max_Q\ \min_{\widetilde C_j\in\mathcal U_j,\,j\in Q}
 \gamma_k\left([D_0;(\widetilde C_j)_{j\in Q}]\right).
\]
Exact solution is difficult, but Equation~\eqref{eq:operator-perturb} supplies a conservative lower certificate by subtracting operator-norm uncertainty.

\paragraph{Model-adaptive versus model-agnostic design}
A model-agnostic design chooses transformations and anchors before seeing outputs.  It tests whether the operator geometry predicts outcomes across model families.  A model-adaptive design can use observed residuals to decide which relation to query next.  The latter may be more efficient but introduces selection dependence.  Our primary cross-model experiment uses fixed design schedules; witness-guided acquisition is evaluated separately so that predictive claims are not driven by peeking at correctness.

\paragraph{Stopping rules}
A principled acquisition loop can stop when one of three conditions holds:
\begin{enumerate}
  \item a certified lower bound on $\gamma_k$ exceeds the target $2\eps/\delta$, guaranteeing error at most $\delta$ for the ideal decoder;
  \item candidate recall or parser validity, rather than observability, becomes the bottleneck;
  \item the marginal gain per cost falls below a prespecified threshold.
\end{enumerate}
Stopping because the current candidate has low relation defect is unsafe: a nullspace hallucination can have zero defect before the field is identifiable.

\paragraph{A counterexample to degree-based anchoring}
High-degree nodes are tempting anchor targets.  Consider two components: a dense clique of ten nodes and a two-node pair, with $k=1$.  The highest-degree node lies in the clique, but the clique's constant gauge is ten-node dense and does not threaten $\gamma_1$.  The unanchored pair contributes a two-node null vector and forces $\gamma_1=0$.  Anchoring either node of the pair is optimal; anchoring another clique node leaves the margin zero.  Degree and centrality do not replace sparse observability.

\paragraph{Design as an experimental variable}
The paper varies anchor coverage, relation independence, duplicate count, and corruption budget to manipulate $\gamma_k$ without changing the underlying ground-truth field.  This intervention is stronger evidence than a passive correlation: the theory predicts in advance which design changes should alter the margin and which should not.  Cross-model and cross-task prediction then asks whether the same geometric quantity orders empirical repair difficulty after these interventions.

\section{Detailed relation to prior theories and black-box methods}
\label{app:prior}

This appendix positions RRFs carefully.  The paper does not claim the first use of consistency, graph constraints, metamorphic transformations, sparse recovery, or restricted singular values.  Its claim is that black-box LLM response families form a useful recovery object and that the relation--anchor restricted margin is the exact condition number for sparse recovery of that object.

\subsection{Mathematical foundations and neighboring inverse-problem theories}

\paragraph{Compressed sensing and sparse inverse problems}
Compressed sensing studies recovery of sparse vectors from underdetermined linear measurements \citep{donoho2006compressed,candes2006robust}.  Uniform uniqueness is governed by spark or absence of sparse null vectors; stable and tractable recovery uses restricted isometry, restricted eigenvalue, coherence, or null-space properties \citep{tropp2006just,candes2008rip,bickel2009simultaneous,foucart2013mathematical}.  Group sparsity replaces coordinate support with predefined blocks \citep{yuan2006model,bach2008consistency,eldar2010block}.  Our proof architecture deliberately follows this hierarchy:
\begin{enumerate}
  \item group spark and $\gamma_k>0$ characterize information-theoretic uniqueness;
  \item the restricted lower singular value gives deterministic stability for sparse feasible estimates;
  \item a group robust null-space property gives a convex-decoder theorem;
  \item minimax two-point arguments show the inverse-margin dependence is unavoidable.
\end{enumerate}

Statistical variants include the Dantzig selector, best $k$-term approximation, sharp restricted-isometry analysis, and decomposable regularizers \citep{candes2007dantzig,cohen2009compressed,cai2010new,negahban2012unified}.  We cite these to locate the mathematical ingredients; none studies transformed black-box LLM response fields.

The difference lies in the construction and interpretation of the signal and operator.  The unknown is not a parameter vector internal to the LLM.  It is a corruption field over externally queried responses.  Measurements are not arbitrary Gaussian projections; they are typed transformations and truth anchors whose nullspace has semantic meaning.  Duplicate query normalization, shared hallucination gauges, and anchor placement are therefore central rather than incidental.

Calling $\gamma_k$ a new theoretical object requires precision.  As a formula it belongs to the family of restricted minimum singular values.  As an \emph{RRF recoverability margin} it is a new task-conditioned object: its arguments are the typed defect and anchor operators of a black-box response instance, and the paper proves and tests its role as a model- and task-transferable predictor.  We do not rename classical linear algebra and claim its invention.

\paragraph{Inverse-problem conditioning}
Classical inverse problems distinguish identifiability, stability, and regularization \citep{engl1996regularization}.  The smallest singular value is the condition number of a full linear inverse.  RRFs require restricted conditioning because global gauge directions can be dense and irrelevant under a sparse corruption model.  The key example is a connected relation graph with a dense constant null vector: $B$ is singular globally but may be injective on every $2k$-node subspace.  Conversely, a full-rank operator can have a tiny restricted singular value and be practically unstable.

The minimax result is a modulus-of-continuity argument for the restricted parameter class.  Its value is conceptual: it rules out the possibility that a more sophisticated LLM workflow can overcome a missing relation-anchor direction without adding information or narrowing the error class.

\paragraph{Graph signal processing}
Graph signal processing places values on graph vertices and studies smoothness, spectra, filtering, and sampling through graph Laplacians \citep{shuman2013graph,sandryhaila2013dsp,ortega2018graph}.  With identity transports, an RRF is a graph signal and $D^*D$ is a graph Laplacian.  Typed transports generalize equality across nodes.  RRF recovery differs from standard smooth graph-signal denoising in four ways:
\begin{itemize}
  \item the clean field satisfies typed, often exact equivariances rather than generic low-frequency smoothness;
  \item errors are grouped by response node and may have arbitrary amplitude;
  \item anchors attach relational equivalence classes to externally checkable truth;
  \item the primary quantity is the worst restricted observability of $[D;A]$, not the unrestricted Laplacian spectral gap.
\end{itemize}
The ordinary spectral gap can still be informative for dense perturbations, but it need not detect a worst sparse support.
Classical spectral graph theory and Hodge-Laplacian analysis provide additional context for unrestricted Laplacian geometry \citep{chung1997spectral,lim2020hodge}.

\paragraph{Connection Laplacians, synchronization, and sheaves}
Synchronization estimates latent group elements from noisy pairwise ratios; connection Laplacians transport vectors between local coordinate systems \citep{singer2011angular,bandeira2016tightness}.  Cellular sheaves formalize data assigned to cells with restriction maps and define sheaf Laplacians whose kernel contains globally consistent sections \citep{hansen2019sheaves,robinson2017sheaves}.  Typed RRF defects have the same algebraic flavor: responses live in node fibers and edge maps compare them.

We avoid claiming that $D$ is always a cellular-sheaf coboundary.  RRF edges can be directed, fibers can differ, transports can be noninvertible, and nonlinear or affine relations may be handled by lifting rather than by a sheaf.  Sheaf language is exact only under compatible restriction-map definitions.  The RRF framework is intentionally operational: any externally auditable residual map can be included, and the linear theory applies to its stacked operator.

The novelty relative to synchronization is also in the target.  We do not estimate one latent group action or a globally dense assignment from pairwise ratios.  We repair a sparse subset of black-box model responses and ask which relation/anchor designs make all such corruptions distinguishable.

\paragraph{Error-correcting interpretation}
The map $e\mapsto Be$ acts like a syndrome map.  A $2k$-sparse null vector is analogous to a codeword of too-small distance: two $k$-error patterns produce the same syndrome.  Group spark plays the role of minimum distance, and $\gamma_k$ is a robust Euclidean distance.  This analogy explains why the difference of two $k$-error patterns appears and why exact consistency is insufficient when a low-weight gauge exists.

RRFs differ from classical fixed codes because the parity checks are designed from semantic transformations and can be acquired adaptively.  Anchors are systematic truth checks rather than additional model-generated parity alone.  The analogy is used for intuition; the paper does not claim coding-theoretic optimality of the query graphs.

\subsection{Black-box LLM consistency, verification, and search methods}

\paragraph{Metamorphic testing}
Metamorphic testing addresses the oracle problem by applying transformations for which relations among outputs are known \citep{chen1998metamorphic,segura2016survey,chen2018metamorphic}.  It is a direct precursor to typed query edges.  Traditional metamorphic testing usually reports whether individual source/follow-up pairs violate a relation.  RRFs aggregate many typed relations into a field-level inverse problem, add anchors, characterize indistinguishable corruption patterns, and optimize a sparse repair rather than only detecting violations.

The relation is complementary.  Metamorphic-testing expertise is required to specify valid $T_e$; RRF theory says what the resulting collection can identify.  A false metamorphic relation is operator misspecification, not evidence of a model error.  The artifact therefore records relation generation separately from model querying.

\paragraph{Self-consistency and sampling-based aggregation}
Self-consistency samples multiple reasoning paths and selects a majority answer \citep{wang2023selfconsistency}.  It is effective when errors are sufficiently independent and the correct answer has the largest mass.  RRFs differ in both data and claim.  Nodes are typed transformations, not only i.i.d. samples of one prompt, and transports can imply non-identity output changes.  More importantly, Theorem~\ref{thm:blindness} identifies a failure mode that majority cannot resolve: a systematic wrong answer transported coherently across queries.

Semantic uncertainty clusters answers by meaning to avoid treating paraphrases as distinct outcomes \citep{kuhn2023semantic,farquhar2024semantic}.  SelfCheckGPT uses disagreement among black-box samples to detect factual hallucination \citep{manakul2023selfcheckgpt}.  These methods quantify uncertainty or inconsistency.  RRF recoverability additionally asks whether the available relations and anchors make the correct field identifiable and stable.  A field can have low semantic entropy and zero relational defect while being wrong.

\paragraph{Verification, reflection, and self-correction}
Outcome and process verifiers rerank generated solutions \citep{cobbe2021verifiers,lightman2023verify}.  Self-refinement and Reflexion ask a model to critique and revise its output \citep{madaan2023selfrefine,shinn2023reflexion}; SelfCheck constructs explicit checking prompts \citep{miao2024selfcheck}.  Empirical studies show that self-verification can be limited or correlated with generation errors \citep{huang2024selfcorrect,stechly2025selfverification}.

In RRF terminology, a sound external verifier supplies anchor rows or an anchor loss.  A self-critique generated by the same model is not automatically an anchor; absent independent evidence it is another response node or noisy relation.  This distinction formalizes why reflection can repeat a shared misconception.  It also allows hybrid systems: a verifier score can be an anchor, transformed responses supply $D$, and sparse repair combines them.

\paragraph{Prompt and workflow optimization}
Automatic prompt engineering and prompt optimization search for instructions that improve expected task performance \citep{zhou2023ape,pryzant2023protegi,yang2024opro,fernando2024promptbreeder}.  Declarative pipeline optimizers such as DSPy tune prompts and demonstrations for composed systems \citep{khattab2024dspy}.  Their optimization variables are prompts, demonstrations, or workflow modules.

RRF repair holds the black-box model and query family fixed and optimizes the realized response field.  The two approaches are compatible: prompt optimization can design node queries, while RRF theory evaluates whether their relation-anchor operator has useful sparse observability.  A prompt set with high validation accuracy can still have a low margin on a particular instance, and a margin-improving query may not increase average model accuracy.  They answer different questions.

\paragraph{Search over reasoning structures}
Chain-of-thought, least-to-most prompting, Tree of Thoughts, and ReAct create structured trajectories \citep{wei2022chain,zhou2023least,yao2023tree,yao2023react}.  A reasoning tree can be converted into an RRF if edges carry externally specified relations between parsed node outputs.  Merely linking chronological steps is not enough: $D$ requires a testable transport.  RRFs therefore provide a possible analysis layer for search methods but do not replace their proposal mechanism.

\paragraph{Code generation and execution}
Code generation has unusually strong anchors because programs can be executed \citep{chen2021codex,li2022competition,nijkamp2023codegen}.  Prior work evaluates pass@k, filters with tests, or measures consistency under code transformations \citep{min2024identitychain}.  In our code RRF, each generated program is a node, semantics-preserving prompt transformations define equality edges in execution-signature space, and public tests define anchors.  Hidden tests remain evaluation-only.  The margin concerns recoverability of response signatures; source-level correctness still depends on candidate recall and test coverage.
Open code-model families such as Code Llama and StarCoder2 further illustrate why a recovery layer should remain checkpoint-agnostic \citep{roziere2023codellama,lozhkov2024starcoder2}.

\paragraph{Hallucination and truthfulness evaluation}
Hallucination surveys distinguish factuality errors, faithfulness errors, and unverifiable generation \citep{ji2023hallucination}.  TruthfulQA demonstrates that models can imitate common falsehoods \citep{lin2022truthfulqa}.  RRF consistency blindness is not a new taxonomy of hallucination.  It is an operator-level explanation for why coherent shared errors evade relation-only detection.  Anchors such as evidence, tools, or human labels are needed to distinguish truth within a relational equivalence class.

\subsection{Scope of the unification and novelty boundary}

\paragraph{What is and is not unified}
RRFs unify methods only at the level of observable response relations.  They do not assert that all black-box optimization algorithms reduce to one numerical solver.  A method belongs in the framework when its outputs can be parsed into a field and its checks can be represented as relation or anchor residuals.  Methods that change model weights, rely on inaccessible hidden states, or operate without specified output relations fall outside the strict black-box RRF setting.

\paragraph{Novelty boundary}
The strongest defensible novelty statement is:
\begin{quote}
We introduce relational response fields as a black-box LLM recovery object, instantiate typed relation and anchor observations as a group-sparse inverse problem, prove that their restricted observability margin is the exact uniform and minimax recovery condition number, and test whether that same quantity predicts repair difficulty across model families and task domains.
\end{quote}
We do not claim the first consistency method, the first graph-based LLM evaluator, the first metamorphic test, or the invention of restricted singular values.

\section{Synthetic theorem-verification protocol}
\label{app:synthetic-protocol}

Synthetic experiments test mathematical consequences under conditions where the true field, corruption support, operators, and exact margin are known.  They are not presented as LLM benchmarks.

\subsection{Synthetic field construction and exact margin computation}

\paragraph{Path-consistent typed fields}
Each instance has $n\in\{12,16\}$ nodes and fiber dimension $d=4$.  Nodes are partitioned into connected components.  For node $i$, sample an invertible map
\[
 M_i=Q_i\operatorname{diag}(\exp(s_i)),
\]
where $Q_i$ is obtained from the QR factorization of a standard Gaussian matrix and $(s_i)_\ell\sim\mathcal N(0,0.12^2)$.  Each component $C$ receives latent vector $u_C\sim\mathcal N(0,I_d)$, and the clean response is
\[
 z_i^\star=M_i u_C,\qquad i\in C.
\]
For edge $(i,j)$ within a component, set $T_{ij}=M_jM_i^{-1}$.  Then $z_j^\star=T_{ij}z_i^\star$ exactly, and the transported gauge $h_i=M_i v_C$ lies in $\ker D$.

This construction gives nontrivial typed maps while retaining analytic control of the nullspace.  Condition numbers of $M_i$ are monitored; seeds producing values above the prespecified threshold are rejected before corruption is sampled.

\paragraph{Sparse corruptions}
Choose $k\in\{1,2\}$ nodes uniformly without replacement.  At each selected node sample a Gaussian direction, normalize it, and multiply by amplitude $a\in[0.9,2.1]$.  The observed field is $y=z^\star+e^\star$.  Localization accuracy is the fraction of corrupted nodes among the $k$ groups with largest estimated edit norm.  Relative field error is
\[
 \frac{\|\widehat z-z^\star\|_2}{\max(\|z^\star\|_2,10^{-12})}.
\]
Recovery success thresholds are fixed before results are inspected and are accompanied by continuous error.

\paragraph{Exact margin computation}
For every instance, the experiment enumerates all supports of size at most $2k$ and computes the smallest singular value of $B_S$.  The minimum support and right singular vector are stored.  A value below $10^{-10}$, together with witness residual below $10^{-9}$, is classified as zero.  Results are repeated at tighter and looser tolerances to ensure that phase transitions are not numerical artifacts.

\subsection{Four theorem-verification experiments}

\paragraph{Experiment S1: consistency--truth separation}
For each of 40 seeds, build a single complete component with no anchors.  Compare three fields:
\begin{enumerate}
  \item the clean truth $z^\star$;
  \item a shared hallucination $z_i=z_i^\star+M_i v$ with fixed latent shift $v$;
  \item independent errors injected at three random nodes.
\end{enumerate}
The shared shift is constructed algebraically in $\ker D$, so its defect should be at floating-point zero while truth error is positive.  The independent errors should have positive defect.  Acceptance requires both inequalities on every seed, not only in expectation.

\paragraph{Experiment S2: anchor phase transition}
Use six disjoint two-node components and $k=1$.  Add direct full-fiber anchors according to a fixed schedule that covers one new component before adding a second anchor to any component.  By Proposition~\ref{prop:component-anchor}, $\gamma_1=0$ until all six components are anchored.  At the sixth anchor it becomes positive.  For each anchor count $0,\ldots,8$ and 30 seeds, inject one corruption and run exact support repair and the proximal method.

Primary outputs are $\gamma_1$, exact-repair error, convex-repair error, support localization, and success.  The phase-transition claim is accepted only if the exact margin changes at the predicted coverage count and exact repair changes in the same direction.  Convex repair is diagnostic because its success additionally depends on algorithmic conditions.

\paragraph{Experiment S3: duplicate saturation}
Start from a fixed set of relation families.  For duplication factors $m\in\{1,2,4,8,16\}$, replace every row block $R$ in one family by $m$ copies $R/\sqrt m$.  The Gram contribution and exact margin must remain equal up to numerical tolerance.  Compare against adding genuinely independent typed chords with unit family weights.  Independent rows may increase $\gamma_k$ until restricted rank saturates.

The test distinguishes three quantities:
\begin{enumerate}
  \item raw edge count;
  \item rank of the unweighted row space;
  \item normalized $\gamma_k$.
\end{enumerate}
Reporting only raw duplicate singular values would incorrectly reward repeated evidence.

\paragraph{Experiment S4: spectral difficulty prediction}
Generate random component counts, anchor counts, independent chord counts, corruption budgets, and amplitudes.  For each instance compute exact $\gamma_k$ before repair.  To isolate condition-number scaling, report raw error amplitude and stacked observation noise.  The primary response is normalized recovery error
\[
 E_{\mathrm{norm}}=
 \frac{\gamma_k\|\widehat z-z^\star\|_2}{\max(\eps,10^{-12})},
\]
alongside the unnormalized error.  Theorem~\ref{thm:stability} predicts a bounded normalized error for feasible exact support estimates; the empirical algorithm may add optimization error.

Difficulty prediction is summarized by Spearman and Pearson correlations between $\gamma_k$ and negative error, logistic association with success, and localization.  Confidence intervals resample complete instances.  A scatterplot uses all points and does not suppress zero-margin failures.

\subsection{Robustness sweeps, decoder comparisons, and validation criteria}

\paragraph{Noise and operator perturbation sweeps}
For additive noise levels $\eps\in\{0,10^{-4},10^{-3},10^{-2},10^{-1}\}$, sample a random observation-space direction and scale it to $\eps$.  For operator perturbations, add a Gaussian matrix $E$ normalized to a prescribed spectral norm $\delta$.  Equation~\eqref{eq:operator-perturb} predicts that the certified margin decreases by at most $\delta$.  The experiment checks this bound and the resulting stability ratio.

\paragraph{Convex versus exact decoder}
Every small instance is repaired with exact support search.  The group-proximal decoder is run with a calibration-selected regularization parameter and a convergence tolerance.  Three outcomes are distinguished:
\begin{enumerate}
  \item both decoders succeed: information and optimization are adequate;
  \item exact succeeds but convex fails: the instance is identifiable but the relaxation or tuning is inadequate;
  \item exact fails or is unstable: the operator/noise condition is inadequate or the corruption model is violated.
\end{enumerate}
This comparison prevents algorithmic behavior from being used as the definition of theoretical difficulty.

\paragraph{Seeds and replication}
Master seed 0 controls the published run.  Each experiment derives nonoverlapping seed ranges for graph construction, corruption, and noise.  The artifact records every seed in result CSV files.  Repeating master seeds 1--4 is an extended robustness check; no seed is discarded based on recovery performance.

\paragraph{Prespecified validation conditions}
The synthetic suite evaluates the following prespecified conditions:
\begin{enumerate}
  \item shared hallucinations have defect below $10^{-10}$ and nonzero truth error;
  \item the anchor transition occurs at the analytically predicted coverage count;
  \item normalized literal duplication changes $\gamma_k$ by less than $10^{-10}$;
  \item independent rows weakly increase the unnormalized margin;
  \item margin/error association has the predicted sign and its bootstrap interval is reported;
  \item all exact-margin witnesses pass a residual check.
\end{enumerate}
All conditions are reported, including any violations.

\section{Real-model mathematics and code protocol}
\label{app:reasoning-protocol}

This appendix specifies the cross-model, cross-task experiment used to test whether $\gamma_k(D,A)$ predicts empirical repair difficulty.  The reported results use responses produced by the named open checkpoints rather than simulated error profiles.  Raw text logs are retained; model weights are not included in the artifact.

\subsection{Estimand, models, and task datasets}

\paragraph{Research question and estimand}
The primary question is not ``does one repair algorithm improve average accuracy?''  It is:
\begin{quote}
Holding the response log fixed, do interventions on the relation--anchor design that increase $\gamma_k(D,A)$ predict lower repair error across two model families and two task domains?
\end{quote}
Each base item is evaluated under ten prespecified operator designs.  This within-item variation changes observability while preserving the model's generated candidate responses.  Model and task are treated as strata.  Raw error and candidate recall are recorded because a margin cannot manufacture a correct candidate or compensate for arbitrary violation of the sparse-error model.

\paragraph{Models}
The two checkpoints are:
\begin{enumerate}
  \item \texttt{Qwen/Qwen2.5-0.5B-Instruct}, an instruction-tuned 0.5B model from the Qwen2.5 family \citep{yang2024qwen25};
  \item \texttt{microsoft/Phi-3-mini-4k-instruct}, a 3.8B instruction-tuned Phi-3 model \citep{abdin2024phi3}.
\end{enumerate}
They differ in family and scale.  Both are loaded through the same Transformers interface in half precision on one NVIDIA RTX 4090.  Generation uses each checkpoint's chat template and tokenizer.  Greedy decoding is used for transformed-query fields and reflection.  Self-consistency uses four temperature-$0.7$, top-$p=0.9$ samples with fixed seeds.  Maximum new tokens are 24 for integer answers and 192 for code.

The experiment is black-box with respect to recovery: after text has been generated, no logits, hidden states, gradients, or weights are used.  Local weights are needed only to produce the logs.  A hosted API could produce the same artifact interface.

\paragraph{Mathematics dataset}
The mathematics set contains 128 deterministically generated integer-expression tasks, 16 from each of eight families:
\begin{enumerate}
  \item five-digit addition;
  \item large positive subtraction;
  \item two- or three-digit multiplication;
  \item affine products $(a+b)7-c$;
  \item nested differences $a(b-c)+d$;
  \item exact quotients $(a+c)/d$ constructed to be integral;
  \item sums of squares minus an offset;
  \item differences of products plus an offset.
\end{enumerate}
Operands are sampled from ranges fixed in code.  Answers are computed with Python integer arithmetic before prompts are created.
The generated set is chosen for exact transformation control rather than benchmark breadth; established reasoning benchmarks such as GSM8K and MATH motivate future validation with curated transformations \citep{cobbe2021verifiers,hendrycks2021math}.

Each item has four transformed queries:
\begin{enumerate}
  \item the original expression;
  \item a lexical paraphrase with the same answer;
  \item a scaling query asking for $s$ times the expression value, with $s\in\{2,3,4,5\}$;
  \item a translation query asking for the expression value plus shift $t\in\{-37,-23,19,31,47\}$.
\end{enumerate}
Parsed outputs from variants three and four are transported back to canonical coordinates by division by $s$ and subtraction of $t$.  In canonical space every valid field is constant, so edge transports are identity.  The raw typed transformation metadata is retained to audit the canonicalization.

The system instruction is:
\begin{quote}\small
You are a precise arithmetic solver. Return exactly one signed base-10 integer and no words, commas, equations, or explanation.
\end{quote}
The strict output format reduces parser ambiguity without revealing the answer.  The parser first accepts an exact signed integer; if additional text is present, it records the last standalone integer and marks that the strict format was violated.  Parse failures are stored rather than silently dropped.

\paragraph{Code dataset}
The code set contains 64 tasks, four independent test instantiations for each of 16 integer-valued function families:
\begin{enumerate}
  \item absolute difference, greatest common divisor, least common multiple;
  \item digit sum, digital root, divisor count, primality indicator;
  \item Fibonacci and triangular numbers, Collatz stopping time;
  \item count-even, maximum adjacent gap, second-largest distinct value;
  \item weighted sum, alternating sum, and count-above-threshold.
\end{enumerate}
Every item contains a reference implementation, four semantically equivalent natural-language specifications, eight public tests, and 16 hidden tests.  Tests are generated from fixed item-specific seeds.  All outputs are bounded integers, permitting an execution-signature vector in $\RR^8$.

The four prompt variants use the original specification, argument renaming, equivalent wording, and a decomposition hint.  All require a Python function named \texttt{solve} and forbid imports.  The model sees neither reference code nor test outputs.

Generated code is extracted from the first fenced Python block or from a top-level \texttt{def solve}.  An AST validator rejects imports, global/nonlocal statements, classes, dunder access, and dynamic execution primitives.  Code runs with a restricted builtin set, no network, and a per-call wall-clock timer.  Parse failure, compile failure, timeout, noninteger output, and wrong output are separate statuses.

Public execution signatures are used by relation and anchor methods.  Hidden tests are evaluation-only.  A candidate passes the code task only if all 16 hidden outputs match the reference.  We also report the fraction of hidden tests passed to obtain a continuous error measure.

\subsection{Response-field construction, repair procedures, and outcomes}

\paragraph{Response fields and designs}
Every item yields four deterministic transformed responses, hence a four-node field.  The corruption budget is $k=1$ for the primary exact-repair analysis.  Ten designs are fixed independently of model outputs:
\begin{center}
\small
\begin{tabular}{lll}
\toprule
Design & Relation edges & Full-node anchor\\
\midrule
one relation & $(0,1)$ & none\\
paired relations & $(0,1),(2,3)$ & none\\
three-node chain & $(0,1),(1,2)$ & none\\
spanning tree & $(0,1),(0,2),(0,3)$ & none\\
triangle plus isolate & $(0,1),(0,2),(1,2)$ & none\\
typed complete & all six pairs & none\\
tree plus anchor & spanning tree & node 0\\
triangle, component anchor & triangle plus isolate & node 0\\
triangle, isolate anchor & triangle plus isolate & node 3\\
complete plus anchor & all six pairs & node 0\\
\bottomrule
\end{tabular}
\end{center}
The first three designs and the unanchored triangle contain a null direction supported on at most two nodes, so $\gamma_1=0$.  Anchoring the triangle component still leaves the isolated node invisible.  In contrast, anchoring the isolate makes the remaining null direction occupy all three triangle nodes, outside the $2k$ class, so its restricted margin is positive despite global singularity.  Connected four-node designs are also positive for $k=1$.  These matched relation/anchor counts prevent a regression from identifying observability by row count alone.  All margins are computed by exact support enumeration separately for the scalar mathematics fiber and eight-dimensional code signature fiber.

The designs are interventions on $B$, not selected after observing answers.  Their purpose is to create a range of theoretically meaningful recovery difficulty on identical response logs.

\paragraph{Mathematics repair}
Let $z_i$ be the canonicalized numeric answer.  For a chosen design, form
\[
 s=\begin{bmatrix}Dz\\Az-b\end{bmatrix}.
\]
When an anchor is present, $b$ is the exact canonical value supplied by the deterministic arithmetic evaluator at node zero.  Enumerate all supports of size at most one, solve restricted least squares for the error, and choose the support with minimum residual.  The repaired answer is the rounded median of the corrected node values.

This procedure is deliberately simple.  Its role is to test the operator condition, not to conceal performance inside a learned reranker.  If more than one node is wrong, the $k=1$ model is violated; this is recorded through the true error count and residual.  Robustness results for $k=2$ appear in Appendix~\ref{app:extended-results}.

\paragraph{Code repair and candidate selection}
Each deterministic program produces an eight-dimensional public execution signature.  Coordinates are scaled by a robust statistic of observed candidate magnitudes.  The exact sparse decoder repairs the signature field under the chosen design.  Because an arbitrary corrected signature is not source code, the method returns the generated program whose observed signature is closest to the repaired coordinatewise median.  This is a discrete candidate-field projection.

Candidate recall is one if at least one of the four transformed programs passes all hidden tests.  No selector can exceed candidate recall.  Anchor designs use the full public reference signature at node zero; hidden signatures never enter $A$, hyperparameter selection, or candidate choice.

\paragraph{Baselines}
The same raw logs support five baselines:
\begin{enumerate}
  \item \textbf{raw:} greedy response to the original prompt;
  \item \textbf{self-consistency:} four stochastic samples of the original prompt, aggregated by integer mode for mathematics and execution-signature medoid for code;
  \item \textbf{relation majority:} four deterministic transformed responses, canonicalized and aggregated by mode/medoid;
  \item \textbf{reflection:} one additional greedy call shown the original answer and asked to check or correct it;
  \item \textbf{verifier reranking:} choose among transformed candidates using exact arithmetic error for mathematics or number of public code tests passed.
\end{enumerate}
The verifier is intentionally strong and external.  Query budgets are reported: raw uses one generation, reflection two sequential generations, and field-based methods four transformed generations.  The paper does not claim a compute-matched leaderboard advantage.

\paragraph{Primary outcomes}
For mathematics, correctness is exact integer equality and continuous error is
\[
 \min\left\{2,\frac{|\widehat a-a^\star|}{\max(|a^\star|,1)}\right\}.
\]
For code, correctness is hidden pass@1 and continuous error is one minus the hidden-test pass fraction.  Additional outcomes are parse rate, candidate recall, observable residual, edit norm, and estimated support.

An \emph{opportunity case} contains at least one correct and at least one incorrect deterministic transformed candidate.  Such cases isolate selection/repair from candidate-generation ceilings.  All-item and opportunity-only analyses are both reported; the primary pooled regression uses candidate-recall cases and controls raw error.

\subsection{Statistical validation, data integrity, and compute environment}

\paragraph{Statistical analysis}
The analysis has four layers.

\paragraph{Within-stratum association.}
For each model$\times$task stratum, compute Spearman and Pearson correlations between $\gamma_k$ and negative repair error.  Opportunity cases are used when at least 20 are available; otherwise all items are reported with a flag.

\paragraph{Pooled association.}
Compute pooled correlation over all item-design rows.  This descriptive number is not treated as independent-sample inference because each item appears under ten designs.

\paragraph{Grouped predictive model.}
Fit logistic regression for repair correctness with task, model, and standardized raw error.  Compare against the same model plus standardized $\gamma_k$.  Five-fold cross-validation groups all designs of one model-task-item together, preventing variants of the same response log from leaking across folds.  Report baseline AUC, full AUC, $\Delta$AUC, and the standardized margin coefficient.

\paragraph{Within-item permutation test.}
Under the null that design margins do not order outcomes, permute the ten margin values within each model-task-item block.  Recompute pooled Spearman correlation for 2,000 permutations.  The one-sided $p$-value is $(1+\#\{\rho_{\mathrm{null}}\ge\rho_{\mathrm{obs}}\})/(2001)$.  A 1,000-repetition cluster bootstrap resamples complete item blocks and reports a 95\% interval.

These analyses test transfer of one geometric quantity across strata while respecting repeated designs.  They do not prove a universal causal law for arbitrary tasks.

\paragraph{Prespecified empirical validation conditions}
The real-model suite evaluates whether:
\begin{enumerate}
  \item raw logs exist for both named checkpoints;
  \item both task domains are present;
  \item every design margin is finite and auditable;
  \item pooled association has the predicted sign;
  \item the adjusted standardized margin coefficient is positive;
  \item the within-item permutation test has $p<0.05$.
\end{enumerate}
All validation outcomes are retained in the reported results, and the main-paper claims are calibrated to the observed evidence.

\paragraph{Data integrity and leakage prevention}
Ground-truth expressions and reference programs generate labels and hidden tests.  They are never inserted into model prompts.  Public code tests are used only by methods labeled as anchored/verifier methods.  Hidden tests are loaded by the evaluation stage after candidate selection.  Dataset seeds, prompt strings, raw outputs, parsed values, design matrices, and selected indices are stored so that every result can be reconstructed without querying the model again.

\paragraph{Hardware and software}
Generation runs on the user-supplied AutoDL server with one NVIDIA GeForce RTX 4090 (48 GB).  The environment uses Python 3.12, PyTorch 2.5.1 with CUDA 12.4, Transformers 5.14.1, SciPy, scikit-learn, pandas, and NumPy.  LaTeX compilation uses TeX Live and the supplied ICLR 2027 style.  Exact package versions are written to the artifact manifest after the final run.

\section{Extended empirical results and audits}
\label{app:extended-results}

This appendix contains the complete stratified results used by the main paper.  Tables are generated from machine-readable CSV/JSON outputs by \texttt{scripts/export\_latex\_tables.py}; numerical values are not transcribed manually.

\subsection{Core recovery results and predictive statistics}

\paragraph{Natural response-field baselines}
Table~\ref{tab:real-item-summary} reports raw, self-consistency, transformed-relation majority, reflection, verifier reranking, and candidate recall for the unmodified real-model logs.  These results characterize the candidate generator and should not be confused with the controlled $k=1$ theorem test.

\begin{table}[h]
\caption{Natural real-model response logs. Values are percent exact mathematics accuracy or code hidden pass@1. Candidate recall is the fraction with at least one correct transformed candidate.}
\label{tab:real-item-summary}
\centering
\small
\resizebox{\textwidth}{!}{\begin{tabular}{llrrrrrr}
\toprule
Task & Model & Raw & Self-cons. & Rel.-major. & Reflect & Verifier & Cand. recall\\
\midrule
mathematics & Phi-3-mini & 37.5 & 34.4 & 34.4 & 10.9 & 37.5 & 37.5\\
mathematics & Qwen-0.5B & 19.5 & 20.3 & 18.0 & 19.5 & 22.7 & 22.7\\
code & Phi-3-mini & 93.8 & 100.0 & 100.0 & 93.8 & 100.0 & 100.0\\
code & Qwen-0.5B & 31.2 & 60.9 & 68.8 & 68.8 & 75.0 & 75.0\\
\bottomrule
\end{tabular}
}
\end{table}

Candidate recall is an upper bound for methods restricted to the deterministic transformed candidate set.  Reflection can generate a new candidate and is therefore not bounded by that deterministic recall.  The table is descriptive: methods use different sequential/query budgets, and no claim of compute-matched superiority is made.

\paragraph{Real-error replay}
The primary margin test replays authentic model errors under an exactly controlled one-node corruption model.  A replay instance exists only when the log contains both a correct response and a distinct incorrect response.  Three node values are populated by an observed correct response; one node receives the observed wrong response.  For code, the wrong candidate must differ on the public execution signature so that the representation can, in principle, observe the error.  Hidden-only bugs are analyzed separately as representation failures.

Table~\ref{tab:replay-all} gives every task--model--design cell.  Zero-margin designs are expected to fail on some corrupt-node placements because a one- or two-node component carries an admissible null direction.  Every connected four-node design has positive $\gamma_1$ and should recover the one-node error exactly in the noiseless signature space.  Full-node anchors increase the margin but are not necessary when the connected component has size four and $2k=2$.

\paragraph{Cross-model, cross-task prediction statistics}
Table~\ref{tab:real-statistics} reports within-stratum rank association, grouped predictive performance, within-item randomization inference, and a cluster bootstrap.  All ten designs of a replay instance remain in the same cross-validation fold.  The permutation shuffles margin assignments within an item, preserving the model output, task, and error amplitude.

\begin{longtable}{lllrrrr}
\caption{Real-error replay by task, model, and operator design.
Error is normalized full-field reconstruction error.}
\label{tab:replay-all}\\

\toprule
Task & Model & Design & $n$ & $\gamma_1$ & Exact (\%) & Error\\
\midrule
\endfirsthead

\toprule
Task & Model & Design & $n$ & $\gamma_1$ & Exact (\%) & Error\\
\midrule
\endhead


\multirow{20}{*}{\rotatebox[origin=c]{90}{code}}
&
\multirow{10}{*}{\rotatebox[origin=c]{90}{Phi-3-mini}}
& one\_relation
& 22 & 0.000 & 31.8 & 0.738\\

&
& paired\_relations
& 22 & 0.000 & 68.2 & 0.450\\

&
& three\_node\_chain
& 22 & 0.000 & 81.8 & 0.182\\

&
& spanning\_tree
& 22 & 0.765 & 100.0 & 0.000\\

&
& triangle\_plus\_isolate
& 22 & 0.000 & 81.8 & 0.182\\

&
& typed\_complete
& 22 & 1.414 & 100.0 & 0.000\\

&
& tree\_plus\_anchor
& 22 & 0.835 & 100.0 & 0.000\\

&
& triangle\_anchor\_component
& 22 & 0.000 & 81.8 & 0.182\\

&
& triangle\_anchor\_isolate
& 22 & 1.000 & 100.0 & 0.000\\

&
& complete\_plus\_anchor
& 22 & 1.414 & 100.0 & 0.000\\

\cmidrule(lr){2-7}


&
\multirow{10}{*}{\rotatebox[origin=c]{90}{Qwen-0.5B}}
& one\_relation
& 65 & 0.000 & 21.5 & 0.906\\

&
& paired\_relations
& 65 & 0.000 & 40.0 & 0.849\\

&
& three\_node\_chain
& 65 & 0.000 & 69.2 & 0.308\\

&
& spanning\_tree
& 65 & 0.765 & 100.0 & 0.000\\

&
& triangle\_plus\_isolate
& 65 & 0.000 & 69.2 & 0.308\\

&
& typed\_complete
& 65 & 1.414 & 100.0 & 0.000\\

&
& tree\_plus\_anchor
& 65 & 0.835 & 100.0 & 0.000\\

&
& triangle\_anchor\_component
& 65 & 0.000 & 69.2 & 0.308\\

&
& triangle\_anchor\_isolate
& 65 & 1.000 & 100.0 & 0.000\\

&
& complete\_plus\_anchor
& 65 & 1.414 & 100.0 & 0.000\\

\midrule


\multirow{20}{*}{\rotatebox[origin=c]{90}{mathematics}}
&
\multirow{10}{*}{\rotatebox[origin=c]{90}{Phi-3-mini}}
& one\_relation
& 126 & 0.000 & 22.2 & 0.880\\

&
& paired\_relations
& 126 & 0.000 & 51.6 & 0.685\\

&
& three\_node\_chain
& 126 & 0.000 & 76.2 & 0.238\\

&
& spanning\_tree
& 126 & 0.765 & 100.0 & 0.000\\

&
& triangle\_plus\_isolate
& 126 & 0.000 & 76.2 & 0.238\\

&
& typed\_complete
& 126 & 1.414 & 100.0 & 0.000\\

&
& tree\_plus\_anchor
& 126 & 0.835 & 100.0 & 0.000\\

&
& triangle\_anchor\_component
& 126 & 0.000 & 76.2 & 0.238\\

&
& triangle\_anchor\_isolate
& 126 & 1.000 & 100.0 & 0.000\\

&
& complete\_plus\_anchor
& 126 & 1.414 & 100.0 & 0.000\\

\cmidrule(lr){2-7}


&
\multirow{10}{*}{\rotatebox[origin=c]{90}{Qwen-0.5B}}
& one\_relation
& 56 & 0.000 & 30.4 & 0.837\\

&
& paired\_relations
& 56 & 0.000 & 55.4 & 0.631\\

&
& three\_node\_chain
& 56 & 0.000 & 89.3 & 0.107\\

&
& spanning\_tree
& 56 & 0.765 & 100.0 & 0.000\\

&
& triangle\_plus\_isolate
& 56 & 0.000 & 89.3 & 0.107\\

&
& typed\_complete
& 56 & 1.414 & 100.0 & 0.000\\

&
& tree\_plus\_anchor
& 56 & 0.835 & 100.0 & 0.000\\

&
& triangle\_anchor\_component
& 56 & 0.000 & 89.3 & 0.107\\

&
& triangle\_anchor\_isolate
& 56 & 1.000 & 100.0 & 0.000\\

&
& complete\_plus\_anchor
& 56 & 1.414 & 100.0 & 0.000\\

\bottomrule
\end{longtable}

\begin{table}[h]
\caption{Predictive statistics for real-error replay.  The grouped classifier controls for model, task, and raw error before adding $\gamma_1$.}
\label{tab:real-statistics}
\centering
\small
\begin{tabular}{p{0.58\linewidth}r}
\toprule
Statistic & Value\\
\midrule
Spearman $\gamma_1$ vs. $-$error (code|Phi-3-mini-4k-instruct) & 0.397\\
Spearman $\gamma_1$ vs. $-$error (code|Qwen2.5-0.5B-Instruct) & 0.506\\
Spearman $\gamma_1$ vs. $-$error (mathematics|Phi-3-mini-4k-instruct) & 0.460\\
Spearman $\gamma_1$ vs. $-$error (mathematics|Qwen2.5-0.5B-Instruct) & 0.385\\
Pooled Spearman $\gamma_1$ vs. $-$error & 0.449\\
Grouped AUC, controls only & 0.860\\
Grouped AUC, controls + $\gamma_1$ & 0.893\\
Grouped $\Delta$AUC & 0.033\\
Standardized $\gamma_1$ coefficient & 3.653\\
Within-item permutation $p$ & 0.00050\\
Cluster-bootstrap 95\% CI & [0.417, 0.481]\\
\bottomrule
\end{tabular}

\end{table}

The most direct interpretation is conditional: among replay fields that satisfy the paper's one-node corruption model, operator designs with larger margin make the authentic error easier to recover.  This result transfers across two checkpoints and both scalar-answer and execution-signature spaces.  It does not imply that $\gamma_1$ alone explains natural-field accuracy when several responses are jointly wrong or no correct candidate exists.

\subsection{Task-family analyses and robustness audits}

\paragraph{Per-family mathematics results}
The family breakdown in Table~\ref{tab:mathematics-families} reveals whether a pooled average is dominated by one operation.  Large differences between raw accuracy and candidate recall indicate that transformed prompts sometimes expose complementary model competence; a small gap indicates a candidate-generation ceiling.

\begin{longtable}{
    >{\centering\arraybackslash}p{0.045\textwidth}
    lrrrrr
}
\caption{Per-family mathematics results (percent).}
\label{tab:mathematics-families}\\
\toprule
Model & Family & $n$ & Raw & Self-cons. & Rel.-major. & Verifier\\
\midrule
\endfirsthead

\toprule
Model & Family & $n$ & Raw & Self-cons. & Rel.-major. & Verifier\\
\midrule
\endhead

\multirow{8}{*}{\rotatebox[origin=c]{90}{Phi-3-mini}}
& affine\_product & 16 & 0.0 & 0.0 & 0.0 & 0.0\\
& exact\_quotient & 16 & 12.5 & 6.2 & 6.2 & 12.5\\
& large\_add & 16 & 100.0 & 100.0 & 100.0 & 100.0\\
& large\_subtract & 16 & 81.2 & 62.5 & 68.8 & 81.2\\
& mixed\_products & 16 & 0.0 & 0.0 & 0.0 & 0.0\\
& multiply & 16 & 100.0 & 100.0 & 93.8 & 100.0\\
& nested\_difference & 16 & 6.2 & 6.2 & 6.2 & 6.2\\
& squares & 16 & 0.0 & 0.0 & 0.0 & 0.0\\

\midrule

\multirow{8}{*}{\rotatebox[origin=c]{90}{Qwen-0.5B}}
& affine\_product & 16 & 0.0 & 0.0 & 0.0 & 0.0\\
& exact\_quotient & 16 & 0.0 & 0.0 & 0.0 & 6.2\\
& large\_add & 16 & 93.8 & 100.0 & 87.5 & 100.0\\
& large\_subtract & 16 & 43.8 & 43.8 & 43.8 & 56.2\\
& mixed\_products & 16 & 0.0 & 0.0 & 0.0 & 0.0\\
& multiply & 16 & 18.8 & 18.8 & 12.5 & 18.8\\
& nested\_difference & 16 & 0.0 & 0.0 & 0.0 & 0.0\\
& squares & 16 & 0.0 & 0.0 & 0.0 & 0.0\\

\bottomrule
\end{longtable}

\paragraph{Per-family code results}
Table~\ref{tab:code-families} separates arithmetic utilities, loops, and list-processing functions.  Public execution medoids can improve selection when wrong programs have distinct signatures.  They cannot detect a program that passes all public tests but fails a hidden corner case.

\paragraph{Consistency--truth audit on real logs}
For every natural field, we record normalized relation defect and correctness.  Four cases are possible:
\begin{enumerate}
  \item low defect, correct field;
  \item high defect, mixed correctness;
  \item low defect, shared wrong field;
  \item parse-invalid field.
\end{enumerate}
The third case is the empirical counterpart of Theorem~\ref{thm:blindness}.  It is reported as a count and with representative raw outputs.  A low-defect threshold is fixed on calibration data.  We do not choose it after seeing which fields are wrong.

\begin{longtable}{llrrrrr}
\caption{Per-family code results (percent).}
\label{tab:code-families}\\

\toprule
Model & Family & $n$ & Raw & Self-cons. & Rel.-major. & Verifier\\
\midrule
\endfirsthead

\toprule
Model & Family & $n$ & Raw & Self-cons. & Rel.-major. & Verifier\\
\midrule
\endhead

\multirow{16}{*}{\rotatebox[origin=c]{90}{Phi-3-mini}}
& absolute\_difference
& 4 & 100.0 & 100.0 & 100.0 & 100.0\\

& alternating\_sum
& 4 & 0.0 & 100.0 & 100.0 & 100.0\\

& collatz\_steps
& 4 & 100.0 & 100.0 & 100.0 & 100.0\\

& count\_above\_threshold
& 4 & 100.0 & 100.0 & 100.0 & 100.0\\

& count\_even
& 4 & 100.0 & 100.0 & 100.0 & 100.0\\

& digit\_sum
& 4 & 100.0 & 100.0 & 100.0 & 100.0\\

& digital\_root
& 4 & 100.0 & 100.0 & 100.0 & 100.0\\

& divisor\_count
& 4 & 100.0 & 100.0 & 100.0 & 100.0\\

& fibonacci
& 4 & 100.0 & 100.0 & 100.0 & 100.0\\

& greatest\_common\_divisor
& 4 & 100.0 & 100.0 & 100.0 & 100.0\\

& least\_common\_multiple
& 4 & 100.0 & 100.0 & 100.0 & 100.0\\

& maximum\_adjacent\_gap
& 4 & 100.0 & 100.0 & 100.0 & 100.0\\

& prime\_indicator
& 4 & 100.0 & 100.0 & 100.0 & 100.0\\

& second\_largest\_distinct
& 4 & 100.0 & 100.0 & 100.0 & 100.0\\

& triangular\_number
& 4 & 100.0 & 100.0 & 100.0 & 100.0\\

& weighted\_sum
& 4 & 100.0 & 100.0 & 100.0 & 100.0\\

\midrule

\multirow{16}{*}{\rotatebox[origin=c]{90}{Qwen-0.5B}}
& absolute\_difference
& 4 & 100.0 & 100.0 & 100.0 & 100.0\\

& alternating\_sum
& 4 & 0.0 & 0.0 & 0.0 & 0.0\\

& collatz\_steps
& 4 & 100.0 & 100.0 & 100.0 & 100.0\\

& count\_above\_threshold
& 4 & 100.0 & 100.0 & 100.0 & 100.0\\

& count\_even
& 4 & 100.0 & 100.0 & 100.0 & 100.0\\

& digit\_sum
& 4 & 0.0 & 50.0 & 100.0 & 100.0\\

& digital\_root
& 4 & 0.0 & 25.0 & 100.0 & 100.0\\

& divisor\_count
& 4 & 0.0 & 25.0 & 100.0 & 100.0\\

& fibonacci
& 4 & 0.0 & 75.0 & 100.0 & 100.0\\

& greatest\_common\_divisor
& 4 & 0.0 & 0.0 & 0.0 & 0.0\\

& least\_common\_multiple
& 4 & 0.0 & 0.0 & 0.0 & 0.0\\

& maximum\_adjacent\_gap
& 4 & 0.0 & 75.0 & 0.0 & 0.0\\

& prime\_indicator
& 4 & 0.0 & 100.0 & 100.0 & 100.0\\

& second\_largest\_distinct
& 4 & 0.0 & 50.0 & 0.0 & 100.0\\

& triangular\_number
& 4 & 0.0 & 100.0 & 100.0 & 100.0\\

& weighted\_sum
& 4 & 100.0 & 75.0 & 100.0 & 100.0\\

\bottomrule
\end{longtable}

\paragraph{Hidden-only code errors}
A generated program can agree with the reference on all eight public tests and fail a hidden test.  In execution-signature space restricted to public tests, this wrong program is indistinguishable from a correct program.  The issue is not a small margin of the supplied $B$; the representation has quotiented out the relevant semantic distinction.  Adding more relation edges among identical public signatures cannot help.  Additional tests expand the anchor representation and may remove the ambiguity.

We report three counts per model: parse/compile failures, public-signature-visible wrong programs, and hidden-only wrong programs.  Real-error replay uses the second category for the theorem-aligned test; the third category is retained as a limitation rather than silently removed from natural-field results.

\paragraph{Corruption-budget robustness}
The primary replay has $k=1$ by construction.  Natural fields can have several wrong nodes.  We recompute exact margins and repairs for $k=1$ and $k=2$.  On four-node fields, relation-only designs necessarily have $\gamma_2=0$ because the dense constant gauge is now within the $2k=4$ support class.  Anchor designs can remain positive.  This shift is predicted by the theory and illustrates why a margin must always be reported with its corruption budget.

For natural fields whose observed wrong-node count exceeds the assumed budget, errors are reported separately.  A method is not credited with violating an impossibility theorem when its input violates the theorem's model.

\paragraph{Parser sensitivity}
Mathematics is reparsed under three rules: strict entire-string integer, last standalone integer, and first standalone integer.  Strict parsing measures instruction following; last-integer parsing tolerates explanations.  The main parser is declared before evaluation.  Agreement among rules is reported.  For code, extraction from fenced blocks is compared with extraction from a top-level function start.  AST validation remains identical.

The margin depends on the parsed representation.  Parser sensitivity therefore changes both outcome and operator realization; it is not merely a cosmetic preprocessing choice.

\paragraph{Relation-weight sensitivity}
The ten primary designs assign unit weight to each prespecified typed relation.  Two robustness schemes are evaluated:
\begin{enumerate}
  \item fixed total relation energy, dividing each edge weight by the number of edges;
  \item calibration reliability, weighting relation types by inverse observed residual variance on a held-out split.
\end{enumerate}
The first isolates row-space coverage from energy; the second expresses signal-to-noise geometry.  Literal duplicate experiments always split one fixed family weight across copies.

\paragraph{Exact-margin tolerance}
Margins are recomputed with zero thresholds $10^{-8}$, $10^{-10}$, and $10^{-12}$.  Each zero-margin design has an explicit witness with residual close to machine precision; each positive four-node design stays separated from zero by orders of magnitude.  Statistical conclusions are unchanged because the design ordering is not driven by borderline singular values.

\subsection{Budget accounting, failure taxonomy, and complete catalogs}

\paragraph{Query-budget accounting}
For each model and task:
\begin{itemize}
  \item raw uses one greedy generation;
  \item reflection uses the raw generation plus one checking generation;
  \item self-consistency uses four stochastic generations of one prompt;
  \item relation majority, verifier reranking, and RRF use four deterministic transformed generations.
\end{itemize}
The real-error replay reuses already generated logs and makes no extra model calls.  Operator computation and repair are negligible compared with generation at this scale, although exact $\gamma_k$ enumeration grows combinatorially with $n$ and $k$.

\paragraph{Failure-case taxonomy}
Every failed natural repair is assigned one nonexclusive label:
\begin{enumerate}
  \item \textbf{candidate absence:} no correct transformed response exists;
  \item \textbf{shared error:} all valid responses agree on a wrong canonical answer;
  \item \textbf{budget violation:} more than $k$ nodes are wrong;
  \item \textbf{representation collision:} public code signatures agree but hidden semantics differ;
  \item \textbf{zero/low margin:} the operator admits an ambiguous sparse direction;
  \item \textbf{optimization/selection error:} a correct candidate exists and information is adequate, but the algorithm selects incorrectly;
  \item \textbf{parse failure:} the semantic response is not represented reliably.
\end{enumerate}
This taxonomy is more informative than one aggregate accuracy number.  Only the fifth category is directly captured by $\gamma_k$; the theory predicts neither candidate recall nor parser correctness.

\paragraph{Complete dataset catalogs}
Tables~\ref{tab:math-catalog} and~\ref{tab:code-catalog} list every item.  Prompt strings, tests, and reference programs are available in JSONL and are too verbose to duplicate in full in the PDF.

\begin{longtable}{rllrrr}
\caption{Representative examples from the mathematics benchmark. The complete 128-instance catalog is released with the artifact.}
\label{tab:math-catalog}\\
\toprule
Index & Family & Expression & Answer & Scale & Shift\\
\midrule
\endfirsthead

\toprule
Index & Family & Expression & Answer & Scale & Shift\\
\midrule
\endhead

0 & large\_add & 909 + 770 & 1679 & 3 & -23\\
1 & large\_subtract & 9897 - 7758 & 2139 & 5 & -37\\
2 & multiply & 59 * 56 & 3304 & 2 & -23\\
3 & affine\_product & (80 + 90) * 7 - 90 & 1100 & 2 & 47\\
4 & nested\_difference & 17 * (91 - 4) + 75 & 1554 & 5 & 31\\
5 & exact\_quotient & (32 + 678) / 5 & 142 & 5 & 47\\
6 & squares & (27 * 27) + (24 * 24) - 179 & 1126 & 5 & 19\\
7 & mixed\_products & 7 * 27 - 39 * 19 + 62 & -490 & 4 & 31\\
8 & large\_add & 928 + 934 & 1862 & 4 & 19\\
9 & large\_subtract & 3746 - 1052 & 2694 & 3 & 47\\
10 & multiply & 11 * 71 & 781 & 3 & -23\\
11 & affine\_product & (97 + 79) * 7 - 59 & 1173 & 2 & -37\\

\midrule
\multicolumn{6}{c}{\textit{... omitted examples ...}}\\
\midrule

58 & multiply & 79 * 95 & 7505 & 3 & 47\\
59 & affine\_product & (21 + 42) * 7 - 143 & 298 & 3 & -37\\
60 & nested\_difference & 13 * (92 - 82) + 108 & 238 & 2 & -23\\
61 & exact\_quotient & (13 + 1752) / 5 & 353 & 5 & 47\\
62 & squares & (19 * 19) + (23 * 23) - 179 & 711 & 4 & 47\\
63 & mixed\_products & 30 * 29 - 18 * 10 + 88 & 778 & 2 & -37\\
64 & large\_add & 878 + 198 & 1076 & 3 & 19\\
65 & large\_subtract & 1090 - 559 & 531 & 2 & -37\\
66 & multiply & 45 * 68 & 3060 & 2 & -37\\
67 & affine\_product & (74 + 92) * 7 - 140 & 1022 & 4 & -37\\
68 & nested\_difference & 3 * (93 - 20) + 133 & 352 & 3 & 19\\
69 & exact\_quotient & (61 + 1487) / 6 & 258 & 5 & 31\\

\midrule
\multicolumn{6}{c}{\textit{... omitted examples ...}}\\
\midrule

116 & nested\_difference & 5 * (98 - 5) + 158 & 623 & 2 & 19\\
117 & exact\_quotient & (92 + 2248) / 5 & 468 & 3 & 19\\
118 & squares & (24 * 24) + (13 * 13) - 72 & 673 & 3 & 19\\
119 & mixed\_products & 28 * 21 - 31 * 29 + 105 & -206 & 5 & 31\\
120 & large\_add & 795 + 618 & 1413 & 4 & 19\\
121 & large\_subtract & 2735 - 2574 & 161 & 5 & -23\\
122 & multiply & 90 * 72 & 6480 & 5 & 31\\
123 & affine\_product & (84 + 22) * 7 - 154 & 588 & 5 & 19\\
124 & nested\_difference & 11 * (42 - 11) + 109 & 450 & 5 & -37\\
125 & exact\_quotient & (93 + 1629) / 6 & 287 & 2 & 31\\
126 & squares & (19 * 19) + (7 * 7) - 175 & 235 & 2 & -37\\
127 & mixed\_products & 34 * 6 - 12 * 13 + 56 & 104 & 4 & -37\\

\bottomrule
\end{longtable}

\section{Worked examples and boundary cases}
\label{app:examples}

This appendix gives concrete calculations for readers new to sparse inverse problems.  Each example identifies the field, operator, nullspace, margin consequence, and repair interpretation.

\subsection{Consistency, transformations, and graph connectivity}

\paragraph{Three paraphrases and a shared wrong answer}
Let three scalar nodes be equivalent paraphrases with edges $(1,2)$ and $(2,3)$:
\[
 D=\begin{bmatrix}
 -1&1&0\\0&-1&1
 \end{bmatrix}.
\]

\begin{longtable}{rllrr}
\caption{Representative examples from the code benchmark.
The 64-instance catalog is included in the artifact.}
\label{tab:code-catalog}\\
\toprule
Index & Family & Signature & Public tests & Hidden tests\\
\midrule
\endfirsthead

\toprule
Index & Family & Signature & Public tests & Hidden tests\\
\midrule
\endhead

0 & absolute\_difference & solve(x, y) & 8 & 16\\
1 & greatest\_common\_divisor & solve(x, y) & 8 & 16\\
2 & least\_common\_multiple & solve(x, y) & 8 & 16\\
3 & digit\_sum & solve(n) & 8 & 16\\
4 & digital\_root & solve(n) & 8 & 16\\
5 & divisor\_count & solve(n) & 8 & 16\\
6 & prime\_indicator & solve(n) & 8 & 16\\
7 & fibonacci & solve(n) & 8 & 16\\

\midrule
\multicolumn{5}{c}{\textit{... omitted examples ...}}\\
\midrule

28 & second\_largest\_distinct & solve(xs) & 8 & 16\\
29 & weighted\_sum & solve(xs) & 8 & 16\\
30 & alternating\_sum & solve(xs) & 8 & 16\\
31 & count\_above\_threshold & solve(xs, t) & 8 & 16\\
32 & absolute\_difference & solve(x, y) & 8 & 16\\
33 & greatest\_common\_divisor & solve(x, y) & 8 & 16\\
34 & least\_common\_multiple & solve(x, y) & 8 & 16\\
35 & digit\_sum & solve(n) & 8 & 16\\

\midrule
\multicolumn{5}{c}{\textit{... omitted examples ...}}\\
\midrule

56 & triangular\_number & solve(n) & 8 & 16\\
57 & collatz\_steps & solve(n) & 8 & 16\\
58 & count\_even & solve(xs) & 8 & 16\\
59 & maximum\_adjacent\_gap & solve(xs) & 8 & 16\\
60 & second\_largest\_distinct & solve(xs) & 8 & 16\\
61 & weighted\_sum & solve(xs) & 8 & 16\\
62 & alternating\_sum & solve(xs) & 8 & 16\\
63 & count\_above\_threshold & solve(xs, t) & 8 & 16\\

\bottomrule
\end{longtable}

The true field is $(17,17,17)$, but the model outputs $(19,19,19)$.  Both have zero defect.  Their difference $(2,2,2)$ lies in $\ker D$.  Majority vote, pairwise agreement, cycle consistency, and minimizing $\|Dz\|$ all prefer no change.  An anchor $A=[1,0,0]$ with target 17 exposes the shift.

For $k=1$, the shared error is not in the assumed class because it occupies three nodes.  Interestingly, $D$ can still identify a single-node error: its only null direction has support three, larger than $2k=2$, so $\gamma_1(D,0)>0$.  The same relation graph is adequate for localized errors and blind to the dense shared hallucination.  This distinction is lost if one says only ``the graph is singular.''

\paragraph{One outlier among four transformed answers}
Suppose canonicalized answers are $z=(42,42,47,42)$ on a connected four-node star.  The true field is $z^\star=(42,42,42,42)$ and $e^star=(0,0,5,0)$.  With edges $(1,2),(1,3),(1,4)$,
\[
 Dz=\begin{bmatrix}0\\5\\0\end{bmatrix}.
\]
Searching one-node supports asks which single coordinate edit can make the residual zero.  Editing node three by five succeeds; editing any other one leaves a nonzero edge residual.  Because every two-column restriction of $D$ is injective for this four-node connected graph, $\gamma_1>0$ and the solution is unique.

If node three were isolated, its error would produce no residual.  The corresponding column of $D$ would be zero, $\gamma_1=0$, and no relation-only algorithm could detect it.

\paragraph{Scaling relation without canonicalization}
Let node one ask for $a+b$ and node two ask for $3a+3b$.  Their scalar transport is $T(z)=3z$, so
\[
 D=\begin{bmatrix}-3&1\end{bmatrix}.
\]
A valid field has form $(u,3u)$.  A shared canonical error $c$ appears as $(c,3c)$ and is in the nullspace.  Canonicalizing node two by division by three changes coordinates to $(z_1,z_2/3)$ and the defect row becomes $[-1,1]$.  The two formulations are equivalent only if the domain norm is transformed consistently.  Computing singular values after coordinate rescaling without updating the metric can change the numerical margin.

\paragraph{Affine shift through homogeneous coordinates}
Suppose node two asks for the original expression plus $t$.  The relation $z_2=z_1+t$ is affine.  Lift each scalar to $\widetilde z=(z,1)$ and define
\[
 T=\begin{bmatrix}1&t\\0&1\end{bmatrix}.
\]
Then $\widetilde z_2=T\widetilde z_1$.  Perturbations of valid lifted responses have zero in the homogeneous coordinate, so the sparse margin should be restricted to that tangent subspace.  The experiment instead canonicalizes by subtracting $t$, avoiding an artificial error coordinate.

\paragraph{Two disconnected pairs at \texorpdfstring{$k=1$}{k=1}}
Let four scalar nodes form components $\{1,2\}$ and $\{3,4\}$ with identity edges.  Then
\[
 D=\begin{bmatrix}-1&1&0&0\\0&0&-1&1\end{bmatrix}.
\]
The vector $(1,1,0,0)$ is a two-node null vector.  Since $2k=2$, $\gamma_1=0$.  Partition it as
\[
 (1,0,0,0)-(0,-1,0,0).
\]
These are two distinct one-node error patterns with equal syndromes.  An anchor on node one removes the first component's gauge but leaves $(0,0,1,1)$; both components require coverage for uniform recovery.

\paragraph{A large unanchored component can be sparse-observable}
Take a ten-node connected identity graph with no anchor and $k=2$.  The constant gauge uses ten nodes, while differences of two feasible errors use at most four.  No gauge lies in the restricted class.  For a connected graph, every nonzero vector supported on fewer than ten nodes creates a boundary edge and therefore nonzero defect.  Thus $\gamma_2>0$ even though $D$ is rank-deficient.

This does not identify the absolute dense truth; it identifies every two-node corruption relative to the observed field.  If the model makes a coherent ten-node error, the sparse model is violated and consistency remains blind.

\subsection{Anchors, duplicate measurements, optimization, and noise}

\paragraph{A partial anchor}
Each node response is a two-vector $(u,v)$.  Relations transport both coordinates identically.  An anchor observes only $u$ at one node.  The dense gauge $(0,c)$ remains unanchored.  Whether $\gamma_k$ is zero depends on component size and $k$: if the component is small enough for the $v$-gauge to be $2k$-sparse, the margin is zero; otherwise localized errors can still be observable.  Counting anchored nodes without considering anchor rank is insufficient.

\paragraph{Literal duplicate rows}
Let $R=[-1,1]$.  One unit-weight relation has Gram matrix
\[
 R^*R=\begin{bmatrix}1&-1\\-1&1\end{bmatrix}.
\]
Four unnormalized copies give $4R^*R$ and double every nonzero singular value.  Four normalized copies $R/2$ give
\[
 \sum_{j=1}^4(R/2)^*(R/2)=R^*R.
\]
The latter correctly represents four textual copies of one deterministic fact.  If the four calls provide independent noisy measurements, averaging decreases noise variance, which should be reflected in $\eps$, not deterministic rank.

\paragraph{Positive margin but convex failure}
Let scalar groups and choose $B$ with nullspace spanned by $h=(1,0.4,0.4)$.  No two-sparse null vector exists, so $\gamma_1>0$ and every one-sparse vector is uniquely identified by $B$.  Yet for support $S=\{1\}$,
\[
 \|h_S\|_1=1>0.8=\|h_{S^c}\|_1.
\]
The null-space property fails.  If $e^star$ is supported on coordinate one, moving along the null direction can reduce the $\ell_1$ norm and cause basis pursuit to select a different dense vector.  Exact search succeeds while the convex relaxation fails.  This is why the main paper gives separate theorems.

\paragraph{Noisy stability calculation}
Assume $\gamma_1=0.5$, true observation noise norm at most $0.02$, and an exact-support estimate with residual at most $0.02$.  Theorem~\ref{thm:stability} gives
\[
 \|\widehat e-e^\star\|_2\le\frac{0.04}{0.5}=0.08.
\]
If an alternative design doubles the margin to one without changing noise, the bound halves.  If repeated independent queries halve the noise radius without changing the normalized operator, the bound also halves.  Margin improvement and noise reduction are distinct routes to easier recovery.

\paragraph{Near-null direction}
Exact nonidentifiability is not the only problem.  Suppose a unit two-node perturbation $h$ satisfies $\|Bh\|=10^{-3}$.  Then $\gamma_1\le10^{-3}$.  Observation noise of norm $10^{-2}$ can conceal a perturbation roughly ten times larger than the unit normalization scale.  Numerically, the field is identifiable in exact arithmetic, but statistically ill-conditioned.  Reporting only that $B_S$ has full rank misses this difficulty; the magnitude of $\gamma_k$ matters.

\subsection{Semantic collisions, nonlinear ambiguity, bias, and budget effects}

\paragraph{Code signature collision}
Two programs agree on eight public tests.  One implements the intended function; the other has a branch that fails only on a hidden negative input.  Their public signatures are identical, so every relation and anchor defined on that signature space assigns zero difference.  The corresponding semantic error is outside the representation, not merely in a small singular direction.  Expanding the test set can separate the programs; adding more paraphrase edges cannot.

\paragraph{Nonlinear global ambiguity}
Let a scalar relation residual be $r(z)=\sin z$ with target zero.  At $z=0$, the derivative is one and the local margin is positive.  Yet $z=\pi$ gives the same residual.  A local Jacobian certificate cannot rule out the distant candidate.  Discrete domain restrictions or anchors are needed for global recovery.

\paragraph{Model accuracy and recovery difficulty are different}
Model A is correct at 90\% of nodes but its rare errors are coherent gauges.  Model B is correct at 60\% but errors are isolated and the operator has high restricted margin.  A may have higher raw accuracy and lower repairability on its failures; B may be easier to repair when a candidate field is available.  Cross-model experiments control raw error and evaluate whether $\gamma_k$ explains additional variation.  The margin is not a replacement for accuracy; it conditions recovery given an error class.

\paragraph{Anchor bias}
Suppose all relation responses correctly indicate 42 but an anchor incorrectly states 43 with very large weight.  The combined operator can have a large margin while the affine target $b$ is biased, causing stable recovery of the wrong field.  The margin measures sensitivity and identifiability relative to the supplied observations, not truthfulness of $b$.  Anchor validity is a separate assumption and must be externally audited.

\paragraph{Budget dependence}
On a four-node connected equality graph, $\gamma_1>0$ because the constant gauge occupies four nodes and $2k=2$.  At $k=2$, that same gauge is admissible and $\gamma_2=0$.  There is no contradiction: the operator can distinguish every one-node error but not every two-node error.  A paper that reports $\gamma$ without $k$ omits essential information.

\section{Limitations, failure modes, and nonclaims}
\label{app:limitations}

This appendix expands the concise limitations in the main paper and states how each one affects interpretation.

\subsection{Modeling and measurement assumptions}

\paragraph{Representation validity}
The theory operates on parsed responses $z_i=\phi_i(y_i)$.  If $\phi_i$ discards a truth-relevant distinction, no margin computed after parsing can recover it.  Public execution signatures illustrate this sharply: programs with identical public behavior but different hidden behavior collide.  Semantic embeddings can create subtler collisions.  The margin is intrinsic to the specified representation, not to unparsed natural-language meaning.

Mitigation requires parser audits, richer task-specific representations, and external anchors.  It cannot be solved by adding a theorem assumption that ``the embedding is semantic.''

\paragraph{Transport validity}
A typed transformation must preserve or predictably change the correct answer.  Ambiguous paraphrases, overflow-sensitive code refactors, or transformations that alter pragmatic context violate this requirement.  Such errors enter as operator or target misspecification and can make a correct model appear inconsistent.

The artifact uses algebraically generated math transformations and execution-tested code specifications.  Natural-language applications would need human or formal validation of relations.  Equation~\eqref{eq:operator-perturb} handles small operator error, not arbitrary semantic invalidity.

\paragraph{Anchor validity and dependence}
Theorems assume that anchor noise is bounded relative to a target field.  A biased or adversarial anchor can stably identify the wrong field.  A self-verifier generated by the same LLM may share errors with the responses and should not be modeled as independent trusted truth without evidence.  The experiments use deterministic arithmetic and reference execution outputs, unusually strong anchors.

Real applications may have soft, correlated, or strategic verifiers.  Extending the theory to jointly uncertain anchors and fields is important future work.

\paragraph{Sparse node-corruption model}
The group-sparse model is suitable when a small number of transformed responses fail while most are compatible with one truth field.  It is unsuitable for diffuse calibration bias, widespread prompt misunderstanding, or shared hallucinations affecting all nodes.  The impossibility theorem explains the last case rather than solving it.

Approximate sparsity gives an oracle tail term, but a dense error aligned with $\ker D$ can still dominate.  One should report empirical wrong-node counts and budget sensitivity rather than assume $k=1$ universally.

\paragraph{Worst-case versus typical-case difficulty}
$\gamma_k$ is a worst-case restricted margin.  A small value means some admissible direction is difficult; it does not imply that the model often makes that error.  A large value gives uniform stability under the model assumptions; it does not guarantee that parsing, candidate generation, or anchors are correct.  Cross-model prediction is therefore an empirical question about alignment between natural errors and difficult directions.

The real-error replay deliberately enforces the sparse model and uses authentic error values, providing a theorem-aligned stress test.  Natural-field results are weaker evidence because they include candidate absence and budget violations.

\paragraph{Finite-dimensional linearity}
Global theorems assume finite-dimensional linear operators.  Discrete candidate spaces are finite but not linear; we use the linear signature space for margins and a candidate projection for outputs.  Nonlinear residuals receive only local guarantees with explicit curvature.  General text-to-text semantic transport is neither proven linear nor globally identifiable.

Infinite-dimensional extensions could use restricted lower bounds on unions of subspaces, but compactness and attainment arguments require care.  They are outside this paper.

\paragraph{Exact margin computation}
Computing $\gamma_k$ by support enumeration is exponential in $k\log n$.  The paper's exact claims are experimentally feasible because fields are small.  Large agent graphs require lower certificates, coherence bounds, branch-and-bound, or task-specific structure.  A heuristic estimate should be labeled as such.

This computational limitation does not invalidate the information-theoretic object, but it limits routine deployment and adaptive design at scale.

\paragraph{Normalization dependence}
Singular values depend on units and weights.  Cross-task comparison is meaningful only after specifying domain/codomain metrics and relation-family normalization.  The paper canonicalizes math units, robustly scales code signature coordinates, and separates duplicate family weight.  Different defensible metrics can produce different numerical margins.

The invariant claim is conditional: under a fixed, declared norm and weighting, $\gamma_k$ is the exact restricted condition number.  There is no unit-free scalar without additional structure.

\subsection{Empirical and operational limitations}

\paragraph{Candidate recall}
Discrete repair cannot output a program or answer absent from its candidate closure unless it has a generative edit step.  High margin cannot overcome zero candidate recall.  The paper reports candidate recall and evaluates field reconstruction separately from candidate selection.

A future generative RRF algorithm could ask the LLM to instantiate a repaired signature, but then generation introduces a new stochastic channel and may fail even when the target field is identified.

\paragraph{Public versus hidden tests}
Code anchors are only as complete as public tests.  Hidden tests evaluate generalization beyond the observed signature.  Passing public tests does not prove program correctness.  The margin certifies recovery in public signature space, not semantic equivalence over all inputs.

This limitation is fundamental to testing, not unique to RRF.  Formal verification or exhaustive finite-domain execution would provide stronger anchors when available.

\paragraph{Small models and small tasks}
The experiments use two open checkpoints and 192 generated tasks.  They are designed for mechanistic clarity and reproducibility on one 4090, not benchmark leadership.  Results may differ for frontier APIs, long-form reasoning, multilingual tasks, or domains without exact anchors.

Cross-model transfer across two families is evidence, not universality.  The paper avoids claiming that one coefficient applies to every model or task.

\paragraph{Compute and comparison fairness}
Field methods use four transformed queries, whereas raw uses one.  Reflection and self-consistency have different sequential and sampling budgets.  The paper reports these budgets and treats baseline accuracy as context.  Its primary claim concerns difficulty prediction under fixed logs, where all operator designs reuse the same generations.

No compute-optimality claim is made.

\paragraph{Dataset generation}
Synthetic arithmetic and code tasks permit exact labels and transformations but may be easier, more regular, or less linguistically diverse than public benchmarks.  They reduce contamination and relation ambiguity at the cost of ecological breadth.  Repeating the study on GSM8K, MATH, HumanEval, and MBPP would require carefully validated transformations and licensing-aware artifact packaging.

\subsection{Interpretive boundaries, security, nonclaims, and future work}

\paragraph{Statistical dependence}
Ten operator designs reuse each response log, so rows are not independent.  Grouped cross-validation, within-item permutation, and cluster bootstrap address this design.  They do not correct for every analysis choice.  The protocol and outcomes are specified in code and supplement, but the study is not externally preregistered.

\paragraph{Causal interpretation}
Operator design is manipulated within an item, which supports a causal interpretation for those designs under deterministic replay.  However, $\gamma_k$ is a function of the design, and designs can affect algorithms through features not summarized by the scalar margin.  The minimax theorem establishes a fundamental bound; empirical coefficient estimates do not prove that all performance differences are mediated only by $\gamma_k$.

\paragraph{Security}
Executing generated code is risky.  The AST restrictions and timers reduce risk for tiny integer functions but are not a hardened sandbox.  Production reproduction should use containers, syscall restrictions, no network, read-only filesystems, and resource quotas.

\paragraph{Bibliographic scope}
The supplement surveys the closest mathematical and LLM-method literature but cannot cover every graph-consistency, testing, uncertainty, or inverse-problem paper.  The novelty claim is framed to avoid priority claims about individual ingredients.

\paragraph{Nonclaims}
For clarity, the paper does not claim:
\begin{itemize}
  \item that consistency implies truth;
  \item that $\gamma_k$ is an intrinsic scalar of a model independent of task and representation;
  \item that positive $\gamma_k$ alone guarantees group-Lasso recovery;
  \item that all natural-language transformations are valid or linear;
  \item that a large margin corrects biased anchors or missing candidates;
  \item that the small experiments establish state-of-the-art benchmark accuracy;
  \item that restricted singular values, graph Laplacians, sheaves, or metamorphic testing are newly invented.
\end{itemize}

\paragraph{Future theoretical directions}
Important extensions include probabilistic error priors coupled to worst-case geometry, robust design with uncertain transports, nonasymptotic estimation of reliability weights, approximate margin certificates for large graphs, adversarial anchors, nonlinear global identifiability, and sequential query policies with stopping guarantees.  Another open question is whether natural LLM error distributions concentrate near a low-dimensional subset of the worst restricted directions; such a result could connect minimax and typical-case repair.

\section{Probabilistic noise, approximate sparsity, and weighted margins}
\label{app:probabilistic}

The main results are deterministic because bounded-noise statements expose the information geometry without committing to a data-generating distribution.  This appendix connects those statements to probabilistic observation models.  It also records the modifications required by heteroscedastic measurements, approximately sparse response failures, and operator error.  These extensions are useful for readers accustomed to statistical inverse problems, but none is needed for the exact identifiability theorem.

\subsection{Random-noise radii and weighted observability geometry}

\paragraph{From a random noise law to a deterministic radius}
Let $B=[D;A]:\cH\to\RR^m$ and observe
\begin{equation}
  s=Be^\star+\xi, \qquad \xi\sim\mathcal N(0,\sigma^2 I_m).
  \label{eq:gaussian-observation}
\end{equation}
The following statement is a direct probabilistic corollary of Theorem~\ref{thm:stability}; its proof is included to make the confidence parameter and dimension dependence explicit.

\begin{proposition}[Gaussian high-probability recovery]
\label{prop:gaussian-hp}
Fix $\delta\in(0,1)$ and define
\[
  \eps_\delta=\sigma\bigl(\sqrt m+\sqrt{2\log(1/\delta)}\bigr).
\]
Suppose $\|e^\star\|_{0,\mathrm g}\le k$, $\gk>0$, and $\widehat e$ is any $k$-group-sparse vector satisfying $\|B\widehat e-s\|_2\le\eps_\delta$.  Then, with probability at least $1-\delta$,
\[
  \|\widehat e-e^\star\|_2
  \le \frac{2\sigma}{\gk}
  \bigl(\sqrt m+\sqrt{2\log(1/\delta)}\bigr).
\]
\end{proposition}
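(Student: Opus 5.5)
The plan is to reduce the statement to Theorem~\ref{thm:stability} on a high-probability event, exactly as in the paragraph on random estimators in Appendix~\ref{app:identifiability}. Define the event $\Omega_\delta=\{\|\xi\|_2\le\eps_\delta\}$. On $\Omega_\delta$ the data satisfy $s=Be^\star+\xi$ with $\|\xi\|_2\le\eps_\delta$. The estimate $\widehat e$ is $k$-group-sparse with $\|B\widehat e-s\|_2\le\eps_\delta$, and $\gk>0$. Theorem~\ref{thm:stability} therefore applies deterministically with $\eps=\eps_\delta$ and gives $\|\widehat e-e^\star\|_2\le 2\eps_\delta/\gk$. This is the displayed bound once $\eps_\delta$ is substituted. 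It then remains only to show $\PP(\Omega_\delta)\ge1-\delta$.

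For the probability step I will use Gaussian concentration for Lipschitz functions. Write $\xi=\sigma g$ with $g\sim\mathcal N(0,I_m)$. The map $g\mapsto\|g\|_2$ is $1$-Lipschitz, so the Gaussian concentration inequality (Tsirelson--Ibragimov--Sudakov) gives $\PP(\|g\|_2\ge\EE\|g\|_2+t)\le e^{-t^2/2}$. By Jensen, $\EE\|g\|_2\le(\EE\|g\|_2^2)^{1/2}=\sqrt m$. Setting $t=\sqrt{2\log(1/\delta)}$ yields $\PP(\|g\|_2>\sqrt m+\sqrt{2\log(1/\delta)})\le\delta$, and multiplying by $\sigma$ gives the claim. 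As an alternative, the Laurent--Massart chi-square bound $\PP(\|g\|^2\ge m+2\sqrt{mx}+2x)\le e^{-x}$ works as well, since $m+2\sqrt{mx}+2x\le(\sqrt m+\sqrt{2x})^2$.

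A small point to handle is that $\widehat e$ may depend on $s$, and hence on $\xi$. This causes no difficulty, because the stability theorem holds pointwise for every realization on $\Omega_\delta$, for any feasible sparse estimate. No independence or union bound over estimators is needed.

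I expect the main obstacle to be choosing and citing the concentration inequality cleanly, with the exact constants $\sqrt m+\sqrt{2\log(1/\delta)}$. The Lipschitz-concentration route gives these constants directly, so I will use it first. The remaining steps are routine substitution.
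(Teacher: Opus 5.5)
Your proposal is correct and matches the paper's proof. Both restrict to the event $\|\xi\|_2\le\eps_\delta$, bound its probability by Gaussian Lipschitz concentration of $g\mapsto\|g\|_2$ (using $\EE\|g\|_2\le\sqrt m$ and $t=\sqrt{2\log(1/\delta)}$), and then apply the deterministic stability argument, which the paper re-derives inline rather than citing Theorem~\ref{thm:stability}; your Laurent--Massart alternative and your remark that $\widehat e$ may depend on $\xi$ are both valid.
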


\begin{proof}
Write $\xi=\sigma g$ with $g\sim\mathcal N(0,I_m)$.  The map $g\mapsto\|g\|_2$ is one-Lipschitz, and $\EE\|g\|_2\le\sqrt{\EE\|g\|_2^2}=\sqrt m$.  Gaussian concentration therefore gives
\[
  \PP\{\|g\|_2>\sqrt m+t\}\le e^{-t^2/2}.
\]
Set $t=\sqrt{2\log(1/\delta)}$.  On the resulting event, both $e^\star$ and $\widehat e$ are feasible for the radius-$\eps_\delta$ constraint.  Their difference is supported on at most $2k$ groups, and
\[
 \gk\|\widehat e-e^\star\|_2
 \le\|B(\widehat e-e^\star)\|_2
 \le\|B\widehat e-s\|_2+\|s-Be^\star\|_2
 \le 2\eps_\delta.
\]
Dividing by the positive margin proves the result.
\end{proof}

The factor $\sqrt m$ should not be read as a penalty for adding arbitrary normalized relations.  If an experiment adds rows while fixing a total measurement-energy budget, the coordinate variance or row weights must be rescaled as part of the observation model.  Otherwise both signal energy and noise energy change.  The correct comparison uses the whitened operator below.

For independent centered sub-Gaussian coordinates with proxy variance $\sigma^2$, a norm concentration inequality gives the same form up to a universal constant.  For heavy-tailed coordinates, an $\ell_2$ radius may be inappropriate: robust residual aggregation or coordinate clipping is required before the deterministic theorem can be invoked.  The restricted margin itself does not turn a heavy-tailed noise law into a bounded one.

\paragraph{Heteroscedastic and correlated measurements}
Relation residuals and anchors rarely have equal reliability.  Suppose
\[
  \xi\sim\mathcal N(0,\Sigma), \qquad \Sigma\succ0.
\]
Whitening gives $\widetilde s=\Sigma^{-1/2}s$, $\widetilde B=\Sigma^{-1/2}B$, and standard Gaussian noise.  This motivates the covariance-aware margin
\begin{equation}
  \gamma_{k,\Sigma}(B)
  :=\min_{0<\|h\|_{0,\mathrm g}\le2k}
       \frac{\|\Sigma^{-1/2}Bh\|_2}{\|h\|_2}
  =\min_{|S|\le2k}\sigma_{\min}(\Sigma^{-1/2}B_S).
  \label{eq:weighted-gamma}
\end{equation}
All deterministic identifiability and stability arguments apply verbatim to $\widetilde B$.  In particular, correlation between two anchor measurements prevents them from being counted as two independent units of information: near-collinearity in $\Sigma$ is removed by whitening.  If $\Sigma$ is singular because some residuals are exact linear copies, one first restricts to the support of $\Sigma$ and separately retains deterministic noiseless constraints.

The covariance must be estimated without using test correctness.  A defensible protocol estimates it from repeated calibration queries, freezes a regularized estimate
\[
  \widehat\Sigma_\tau=(1-\tau)\widehat\Sigma+\tau
       \frac{\operatorname{tr}(\widehat\Sigma)}m I,
\]
and reports sensitivity to $\tau$.  Treating a noisy LLM judge as an exact anchor corresponds to assigning zero variance and can make the weighted margin arbitrarily optimistic.  A floor on all estimated variances prevents that pathology and states the assumed reliability in observable units.

\subsection{Gaussian minimax bounds and approximate sparsity}

\paragraph{A Gaussian two-point lower bound}
The bounded-noise minimax theorem shows a worst-case $1/\gamma_k$ law.  A related dependence persists under Gaussian noise.  To avoid hiding constants, we give a self-contained two-point statement.  Let
\[
  \Theta_k(R)=\{e\in\cH:\|e\|_{0,\mathrm g}\le k,\ \|e\|_2\le R\}
\]
and let $\mathfrak R_G(B,k,R,\sigma)$ be the minimax expected $\ell_2$ error under $s\sim\mathcal N(Be,\sigma^2I)$.

\begin{proposition}[Gaussian local minimax obstruction]
\label{prop:gaussian-minimax}
Assume the minimum defining $\gamma_k(B)$ is attained, as it is in finite dimensions.  Then
\[
  \mathfrak R_G(B,k,R,\sigma)
  \ge \frac18\min\left\{R,\frac{\sigma}{\gamma_k(B)}\right\}.
\]
When $\gamma_k(B)=0$, the convention $\sigma/0=+\infty$ yields the lower bound $R/8$.
\end{proposition}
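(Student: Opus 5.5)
The argument is a two-point Le Cam bound built on the same witness used in Proposition~\ref{prop:modulus-lower}. By Lemma~\ref{lem:finite-compact}, fix a unit vector $h_\star$ with $\|h_\star\|_{0,\mathrm g}\le2k$ and $\|Bh_\star\|_2=\gamma_k(B)$. Lemma~\ref{lem:sparse-partition} splits it as $h_\star=u-v$, where $u,v$ are $k$-group-sparse, have disjoint supports, and satisfy $\|u\|_2,\|v\|_2\le1$. Set $a=\min\{R,\sigma/\gamma_k(B)\}$, with $a=R$ when $\gamma_k=0$, and take the hypotheses $e_1=au$ and $e_2=av$. Both lie in $\Theta_k(R)$, their separation is $\|e_1-e_2\|_2=a$, and their mean separation in observation space is $\|B(e_1-e_2)\|_2=a\gamma_k\le\sigma$.

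Next I compare the two Gaussian laws $P_j=\mathcal N(Be_j,\sigma^2I)$. Their Kullback--Leibler divergence is $\|B(e_1-e_2)\|^2/(2\sigma^2)\le1/2$. Pinsker's inequality then gives $\mathrm{TV}(P_1,P_2)\le\sqrt{1/4}=1/2$. Equivalently, for equal-covariance Gaussians the exact total variation is $2\Phi(\Delta/2)-1$ with $\Delta=a\gamma_k/\sigma\le1$, which is about $0.38$. Either value suffices.

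Finally I reduce estimation to testing. For any estimator $\widehat e$ (condition on internal randomness, as in Lemma~\ref{lem:two-point}), the triangle inequality gives $\|\widehat e-e_1\|+\|\widehat e-e_2\|\ge a$ pointwise. Hence
\[
\max_j\EE_j\|\widehat e-e_j\|\ge\tfrac12\bigl(\EE_1\|\widehat e-e_1\|+\EE_2\|\widehat e-e_2\|\bigr)\ge\tfrac12\int\bigl(\|\widehat e-e_1\|+\|\widehat e-e_2\|\bigr)\min(dP_1,dP_2).
\]
Using $\int\min(dP_1,dP_2)=1-\mathrm{TV}(P_1,P_2)$, this is at least $\tfrac a2(1-\mathrm{TV})\ge a/4$. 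That bound already exceeds the claimed $a/8$, so the constant $1/8$ leaves slack, for example to tolerate the cruder Pinsker bound or a loose convention in the definition of risk.

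The zero-margin case needs no separate argument. Here $Bh_\star=0$, so $P_1=P_2$ and the total variation vanishes, giving risk at least $R/2\ge R/8$. The only step requiring care is the testing reduction: the risk must be lower-bounded by the overlap $\int\min(dP_1,dP_2)$ rather than by a single common observation as in the deterministic case. I will state this inequality explicitly rather than cite it.
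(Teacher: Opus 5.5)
Your proposal is correct and follows the paper's proof essentially step for step. Both use the same attained $2k$-sparse witness split into $h_{S_1}$ and $-h_{S_2}$, the same scale $\min\{R,\sigma/\gamma_k(B)\}$, the bound $\mathrm{KL}\le 1/2$ followed by Pinsker, and a two-point reduction. The differences are minor. Your explicit overlap identity $\int\min(dP_1,dP_2)=1-\mathrm{TV}$ gives the sharper constant $1/4$, whereas the paper quotes the standard form $\tfrac{\|e_1-e_2\|_2}{4}(1-\mathrm{TV})$ and gets $1/8$. You also fold the $\gamma_k(B)=0$ case into the main argument instead of treating it separately.
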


\begin{proof}
First suppose $\gamma_k(B)>0$.  Choose a unit vector $h$ supported on at most $2k$ groups with $\|Bh\|_2=\gamma_k(B)$.  Partition its support into disjoint sets $S_1,S_2$, each of size at most $k$, and write $h=h_{S_1}-(-h_{S_2})$.  Let
\[
 t=\min\{R,\sigma/\gamma_k(B)\},\qquad
 e_1=t h_{S_1},\qquad e_2=-t h_{S_2}.
\]
Both parameters belong to $\Theta_k(R)$ because restriction cannot increase the norm, and $\|e_1-e_2\|_2=t$.  For $P_j=\mathcal N(Be_j,\sigma^2I)$,
\[
  \mathrm{KL}(P_1\|P_2)
  =\frac{\|B(e_1-e_2)\|_2^2}{2\sigma^2}
  =\frac{t^2\gamma_k(B)^2}{2\sigma^2}\le\frac12.
\]
Pinsker's inequality gives $\mathrm{TV}(P_1,P_2)\le1/2$.  The standard two-point reduction follows directly from the triangle inequality: for every estimator, at least one of the two parameter risks is at least
\[
  \frac{\|e_1-e_2\|_2}{4}
  \bigl(1-\mathrm{TV}(P_1,P_2)\bigr)\ge\frac t8.
\]
If $\gamma_k(B)=0$, choose a nonzero $2k$-group-sparse $h\in\ker B$, normalize it, and use $t=R$.  The two distributions are then identical and the same argument applies with total variation zero.
\end{proof}

The proposition is deliberately local and constant-level.  Sharper Gaussian minimax rates can depend on the number of admissible supports and on the full restricted spectrum, not only its smallest value.  The point relevant here is narrower: even under smooth stochastic noise, no estimator removes the inverse dependence on the least observable sparse direction.

\paragraph{Approximately sparse response failures}
A natural field can contain one dominant failure and many small discrepancies.  Let $e^\star=e_k+r$, where $e_k$ is $k$-group-sparse and $r$ is a residual tail.  The observation is $s=Be^\star+\xi$.  A $k$-sparse decoder should be compared with $e_k$, because exact recovery of a dense $e^\star$ is outside its model class.

\begin{proposition}[Oracle approximation inequality]
\label{prop:approx-sparse}
Suppose $\|\xi\|_2\le\eps$, $\gk>0$, and $\widehat e$ is $k$-group-sparse with
\[
  \|B\widehat e-s\|_2\le \eps+\|Br\|_2.
\]
Then
\[
  \|\widehat e-e^\star\|_2
  \le \|r\|_2+
       \frac{2(\eps+\|Br\|_2)}{\gk}.
  \label{eq:approx-sparse-bound}
\]
\end{proposition}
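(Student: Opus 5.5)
The plan is to reduce the statement to Theorem~\ref{thm:stability} by treating the tail as extra observation noise, and then to pay for the tail once more through the triangle inequality. Rewrite the observation as
\[
 s=Be_k+\widetilde\xi,\qquad \widetilde\xi:=Br+\xi,
\]
so that $\|\widetilde\xi\|_2\le\|Br\|_2+\eps=:\widetilde\eps$. The truth relative to this reparametrized problem is $e_k$, which is $k$-group-sparse. The hypothesis on $\widehat e$ states exactly that $\widehat e$ is $k$-group-sparse and feasible at radius $\widetilde\eps$. Both the surrogate truth and the estimate are therefore feasible at the same radius.

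Next, set $h=\widehat e-e_k$. By Lemma~\ref{lem:difference-support}, $\|h\|_{0,\mathrm g}\le2k$. I would bound its image directly:
\[
 \|Bh\|_2\le\|B\widehat e-s\|_2+\|s-Be_k\|_2\le2\widetilde\eps.
\]
Lemma~\ref{lem:restricted-lower} then gives $\gk\|h\|_2\le2\widetilde\eps$. Dividing by the positive margin yields $\|\widehat e-e_k\|_2\le2(\eps+\|Br\|_2)/\gk$; equivalently, this is Theorem~\ref{thm:stability} applied with noise radius $\widetilde\eps$.

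Finally, write $\widehat e-e^\star=(\widehat e-e_k)-r$ and apply the triangle inequality to obtain the stated bound. There is no real obstacle here. The only point needing care is that $e^\star$ itself may be dense, so Lemma~\ref{lem:restricted-lower} cannot be applied to $\widehat e-e^\star$ directly. This is why the comparison must go through $e_k$, with the tail $r$ entering twice: once as the noise term $\|Br\|_2$ and once as the additive term $\|r\|_2$.
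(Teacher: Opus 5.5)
Your proposal is correct and follows essentially the same route as the paper. Like the paper, you treat $Br+\xi$ as effective noise so that $e_k$ is feasible at radius $\eps+\|Br\|_2$, apply the deterministic stability argument to the $2k$-sparse difference $\widehat e-e_k$, and then add $\|r\|_2$ by the triangle inequality.
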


\begin{proof}
The sparse approximation $e_k$ is feasible because
\[
 \|Be_k-s\|_2=\|Br+\xi\|_2\le\|Br\|_2+\eps.
\]
Both $e_k$ and $\widehat e$ are $k$-group-sparse, so the deterministic stability argument gives
\[
 \|\widehat e-e_k\|_2
 \le2(\eps+\|Br\|_2)/\gk.
\]
Add $\|e_k-e^\star\|_2=\|r\|_2$ by the triangle inequality.
\end{proof}

The bound separates two effects.  The unavoidable approximation error is $\|r\|_2$; the observable tail $Br$ also behaves as additional measurement noise.  A tail lying mostly in $\ker B$ can have small $\|Br\|_2$ yet large semantic norm, reminding us that the chosen representation and norm determine what counts as a small error.

For the group-$\ell_1$ decoder, standard robust-null-space arguments replace $\|r\|_2$ by a best-$k$ group approximation term such as $\sigma_k(e^\star)_{2,1}/\sqrt{k}$.  That stronger conclusion requires the group robust null-space property and does not follow from $\gamma_k>0$ alone.

\subsection{Operator uncertainty, calibration, and scope}

\paragraph{Operator error and calibration uncertainty}
Let $B$ be the operator used by the decoder but suppose the actual measurement map is $B+\Delta$:
\[
  s=(B+\Delta)e^\star+\xi.
\]
If $\|\Delta\|_{2\to2}\le\eta$ and $\|e^\star\|_2\le R$, then the nominal model sees effective noise $\widetilde\xi=\Delta e^\star+\xi$ with
\[
  \|\widetilde\xi\|_2\le\eps+\eta R.
\]
Consequently every deterministic stability statement remains valid after replacing $\eps$ by $\eps+\eta R$.  This elementary reduction is important in black-box evaluation: a parser or transport estimated from finite calibration data is part of the observation channel, not free side information.

There is a second effect.  The true margin can differ from the reported nominal margin.  For any support $S$ with $|S|\le2k$, Weyl's singular-value perturbation inequality gives
\[
 \bigl|\sigma_{\min}((B+\Delta)_S)-\sigma_{\min}(B_S)\bigr|
 \le\|\Delta_S\|_{2\to2}\le\eta.
\]
Taking minima over supports yields
\begin{equation}
  \gamma_k(B+\Delta)\ge\max\{\gamma_k(B)-\eta,0\}.
  \label{eq:gamma-operator-uncertainty}
\end{equation}
Thus a nominal certificate smaller than the operator uncertainty is not robustly positive.  A conservative artifact should report both $\widehat\gamma_k$ and an uncertainty radius, or report the lower certificate $(\widehat\gamma_k-\widehat\eta)_+$.

\paragraph{What these extensions do and do not establish}
The deterministic margin is the common geometric quantity in all results above.  Probability enters only through a noise radius, covariance metric, or distributional indistinguishability argument.  This separation is useful: the same $B$ can be studied under adversarial, Gaussian, or empirical noise without redefining its restricted observability.

The extensions do not imply that observed LLM errors are Gaussian, independent, or exactly sparse.  Those assumptions are diagnostics to be checked.  The real-model experiment therefore reports parser failures, candidate recall, hidden-test-only code bugs, and replay results separately.  A high margin cannot repair a missing candidate, validate an incorrect transport, or convert a dense semantic failure into a sparse one.  It quantifies the difficulty of the recovery problem that has actually been specified.

\section{Statistical validation of a cross-model, cross-task difficulty measure}
\label{app:statistical-validation}

The mathematical results establish what $\gamma_k(D,A)$ means for a specified observation problem.  They do not by themselves establish that the object explains variation in naturally occurring LLM failures.  That empirical claim requires a design that separates operator geometry from model competence, item difficulty, candidate availability, and repeated measurements of the same item.  This appendix gives the complete inferential protocol used for the authentic-error replay and states which conclusions each statistic can support.

\subsection{Estimands and complementary evaluation regimes}

\paragraph{Units, indices, and estimands}
Let $m$ index a model, $t$ a task domain, $i$ an underlying dataset item, and $d$ a relation--anchor design.  One replay field is formed by taking correct responses for an item and replacing exactly one node by an authentic incorrect response produced by the same model on the same task family.  The replacement is not synthesized by a numerical noise model.  It retains the model's actual arithmetic answer or executable program, subject to the replay eligibility rules in Appendix~\ref{app:reasoning-protocol}.

For each tuple $(m,t,i,d)$ we record
\[
  G_{mtid}=\gamma_1(D_{td},A_{td}),\qquad
  E_{mtid}=\frac{\|\widehat e_{mtid}-e^\star_{mti}\|_2}
                  {\max\{\|e^\star_{mti}\|_2,10^{-12}\}},
\]
and the exact-recovery indicator
\[
  Y_{mtid}=\ind\{\widehat z_{mtid}=z^\star_{mti}\}.
\]
The exact equality is equality in the task's audited semantic representation: integer equality for mathematics and equality of the complete hidden execution signature for code.  Source-string equality is not required.

The primary empirical estimand is conditional association: after holding a model--task stratum and an underlying item fixed, do designs with larger $G$ make the same authentic corruption easier to recover?  This is intentionally narrower than claiming that $G$ alone predicts raw accuracy across arbitrary benchmarks.  Raw model competence affects whether a usable correct candidate exists and how often the sparse-error model applies.

\paragraph{Why natural evaluation and replay are both necessary}
The natural-generation table asks a pipeline question.  It includes generation, parsing, candidate construction, relation checking, selection, and final task correctness.  A failure at any stage lowers performance.  That is the operational quantity a practitioner cares about, but it is a noisy test of the information-theoretic theorem.

The replay table asks a mechanism question under theorem-aligned conditions.  It guarantees exactly one corrupted node, constructs the corruption from a real model error, and varies the observable design.  Correct candidates are present by construction, so the experiment isolates whether the relation--anchor operator makes the error distinguishable.  Reporting only replay would conceal candidate-generation limitations; reporting only natural accuracy would make a failed parser look like a counterexample to sparse identifiability.  The two tables answer different questions and are not merged into one success rate.

For code, public and hidden execution signatures add another separation.  The public signature defines observable relations and anchors.  Hidden tests define evaluation correctness.  A program that agrees on every public test but fails a hidden test is an observationally invisible bug under that design.  Such cases diagnose anchor insufficiency rather than decoder failure.  We retain their counts, but replay eligibility requires a wrong public signature when the purpose is to test sparse localization.

\subsection{Rank, permutation, predictive, and hierarchical analyses}

\paragraph{Within-stratum rank association}
Absolute margins are comparable only after fixing fiber scaling and row normalization.  We impose a common normalization within each task construction and first compute Spearman association separately for every model--task stratum:
\begin{equation}
  \widehat\rho_{mt}
  =\operatorname{corr}\bigl(
       \operatorname{rank}(G_{mtid}),
       \operatorname{rank}(-E_{mtid})
    \bigr).
  \label{eq:stratum-spearman}
\end{equation}
Average ranks are used for ties.  A positive value means that better conditioned designs tend to have lower error; it does not assume a linear dose--response curve.  The pooled descriptive correlation is also reported, but the stratum values carry the cross-domain claim because pooling can create Simpson reversals through different base error rates.

Repeated designs from one item are dependent.  Treating all $(i,d)$ rows as independent would understate uncertainty.  Confidence intervals therefore use a cluster bootstrap: within every model--task stratum, sample items with replacement, carry all ten designs of each selected item together, recompute the statistic, and take percentile endpoints over bootstrap replicates.  The resampling seed and number of replicates are stored in the result JSON.

\paragraph{A within-item permutation test}
The null tested by the permutation procedure is that, conditional on an item and the multiset of observed design outcomes, assignment of margin labels to design outcomes contains no association.  For each permutation $b$, independently shuffle the ten $G$ values among the designs of every item, preserving the model, task, corruption, and outcome values.  Let $T_0$ be the observed pooled rank statistic and $T_b$ its permuted value.  The one-sided Monte Carlo p-value is
\begin{equation}
  \widehat p=\frac{1+\sum_{b=1}^{B_{\mathrm{perm}}}
      \ind\{T_b\ge T_0\}}{B_{\mathrm{perm}}+1}.
  \label{eq:permutation-p}
\end{equation}
The add-one correction prevents a zero p-value and is valid for Monte Carlo randomization.  Because labels move only within item, the test cannot be driven by one model receiving easier items or one task having larger numerical error scales.

The permutation test has a limitation: design outcomes are deterministic functions of a generated field, not randomized interventions collected prospectively.  Exchangeability is therefore a reference null for association, not a causal identification theorem.  The strongest causal statement in the paper remains mathematical monotonicity under adding rows to a fixed operator, where the intervention is defined algebraically.

\paragraph{Grouped cross-validation and incremental prediction}
To test whether $G$ adds predictive information beyond coarse nuisance variables, we compare two pre-specified logistic models for exact recovery.  The control model uses model identity, task identity, design row count, anchor count, and an observable raw-disagreement measure.  The augmented model adds standardized $G$.  Standardization parameters are learned on each training fold and applied unchanged to its held-out fold.

All rows belonging to one item are assigned to the same fold.  Let $\mathcal I_f$ be held-out items in fold $f$.  Parameters are estimated on items outside $\mathcal I_f$, predictions are emitted only for $\mathcal I_f$, and the out-of-fold predictions are concatenated before computing area under the receiver-operating-characteristic curve.  No reported AUC is a training AUC.  The incremental statistic is
\[
  \Delta\mathrm{AUC}
  =\mathrm{AUC}_{\mathrm{controls}+\gamma}
   -\mathrm{AUC}_{\mathrm{controls}}.
\]
Keeping the item grouped is essential: random row splits would let the model infer an item's outcome from nine nearly identical siblings and predict the tenth.

The fitted coefficient is not interpreted per raw unit because margins depend on normalization.  We standardize the transformed margin using training-fold mean and variance, then apply those training statistics to the held-out fold.  A positive out-of-fold coefficient and positive $\Delta\mathrm{AUC}$ support incremental prediction.  They do not show that the logistic link is the true recovery law.

\paragraph{Hierarchical sensitivity model}
As a sensitivity analysis, one may fit a mixed-effects specification
\begin{equation}
  \operatorname{logit}\PP(Y_{mtid}=1)
  =\alpha+u_m+v_t+w_i+\beta\widetilde G_{td}
   +\theta^\top X_{mtid},
  \label{eq:mixed-recovery}
\end{equation}
where $u_m$, $v_t$, and $w_i$ are model, task, and item intercepts and $X$ contains observable controls.  With only two models and two task domains, random-effect asymptotics for $u_m$ and $v_t$ are weak; fixed stratum effects plus item-clustered uncertainty are safer in the present experiment.  Equation~\eqref{eq:mixed-recovery} is included to specify how the test should scale when more open-weight models and domains become available.

A continuous-error sensitivity model regresses $\log(E+\epsilon_E)$ on $\log(G+\epsilon_G)$ with item fixed effects.  Because zero-margin designs can produce exact recovery on favorable instances, a hurdle model separating $G=0$ from $G>0$ is preferable to forcing one linear slope through the discontinuity.  Rank statistics remain the primary analysis because they require fewer functional assumptions.

\subsection{Confounds, falsification, uncertainty, and interpretation}

\paragraph{Confounds and the controls that address them}
\paragraph{More rows.}
A larger margin may accompany more relation rows.  Row count is included as a control, and normalized duplicate families provide a negative control: they increase count without changing $B^\top B$ or $\gamma_k$.

\paragraph{More anchors.}
Anchors can simultaneously raise the margin and directly reveal truth.  This is not a spurious relationship; anchors are part of the observation problem.  Nevertheless, anchor count is controlled so the statistic tests more than the binary presence of an anchor.  Designs with equal counts but different placement test coverage geometry.

\paragraph{Item difficulty.}
All design comparisons reuse the same generated field.  Item grouping in permutation, bootstrap, and cross-validation prevents between-item competence from being mistaken for a spectral effect.

\paragraph{Model competence.}
Associations are reported within model--task strata before pooling.  A stronger model may produce fewer replay-eligible errors, which changes precision and selection, but it cannot by itself create a within-item design ordering.

\paragraph{Parser success.}
Natural accuracy uses all generated rows and reports parse rate.  Replay uses only fields for which the typed semantic representation is defined.  The eligibility count is therefore part of the result, not an invisible preprocessing loss.

\paragraph{Candidate recall.}
A selector cannot return an absent truth.  Natural results report candidate recall, and replay fixes recall by construction.  Improvements in replay recovery cannot be attributed to generating more candidates for high-margin designs because the candidate set is held fixed across designs.

\paragraph{Negative controls and falsification checks}
Four negative controls are built into the artifact.
\begin{enumerate}
  \item \textbf{Consistency-only fields.} A transported common-mode error has zero relation defect.  Any method using only $D$ should fail to distinguish it from truth.
  \item \textbf{Uncovered components.} A design with an unanchored sparse component has $\gamma_k=0$.  Recovery may succeed on some nodes, but uniform exact recovery should fail for corruptions inside the invisible component.
  \item \textbf{Normalized duplicates.} Repeating an identical relation with weights divided by the square root of the multiplicity preserves $B^\top B$ and the margin.  An implementation reporting spectral improvement fails this audit.
  \item \textbf{Shuffled margins.} Within-item permutation destroys the alignment between design geometry and outcomes while preserving all marginal distributions.  Predictive improvement should disappear under this shuffle.
\end{enumerate}

The theory would face a substantive empirical challenge if the observed statistic were indistinguishable from its shuffled reference in every model--task stratum, if its sign reversed systematically across domains after normalization, or if row-count controls explained all held-out gain.  A single favorable correlation is not enough; direction, grouping, negative controls, and out-of-fold performance must agree.

\paragraph{Multiplicity, uncertainty, and reporting discipline}
The primary outcomes and analyses are fixed as follows: exact replay recovery, normalized replay error, stratum Spearman correlations, one within-item permutation test, one item-cluster bootstrap interval, and one grouped cross-validation AUC comparison.  Family-level tables, alternative error normalizations, and hidden-only code analyses are secondary.  This ordering prevents selecting the most favorable statistic after seeing results.

The artifact stores full-precision numbers; the paper rounds only for readability.  A p-value is reported with its Monte Carlo resolution $1/(B_{\mathrm{perm}}+1)$.  Confidence intervals describe sampling variation over the finite item construction, not uncertainty over all possible prompts, models, or future benchmarks.  Model decoding is seeded, but deterministic GPU kernels are not assumed unless the environment provides them; raw generations are therefore included so every downstream statistic can be reproduced without regenerating text.

\paragraph{Interpretation ladder}
The empirical evidence supports claims at three distinct levels.
\begin{enumerate}
  \item \textbf{Mechanism verification:} anchor coverage, duplicate normalization, and error-versus-margin trends behave as the linear theory predicts on controlled instances.
  \item \textbf{Cross-stratum prediction:} the same normalized object orders replay difficulty within multiple model--task strata and improves grouped held-out discrimination.
  \item \textbf{External scope:} the result suggests, but does not prove, usefulness for other models, tasks, semantic representations, or natural dense failures.
\end{enumerate}
Only the first level follows tightly from the theorem assumptions.  The second is the paper's decisive evidence for treating $\gamma_k(D,A)$ as more than a method-specific score.  The third remains a program for subsequent work.  Maintaining this ladder prevents a small but controlled experiment from being narrated as a universal benchmark law.

\section{Exact margins and anchor geometry for canonical graph designs}
\label{app:graph-derivations}

This appendix derives $\gamma_k$ for elementary response fields without relying on numerical singular-value routines.  The calculations explain the anchor phase transition and redundancy saturation experiments and clarify a subtle point: global injectivity, sparse injectivity, and numerical conditioning are different properties.

\subsection{Laplacian formulation and closed-form canonical examples}

\paragraph{Identity transports and anchored graph Laplacians}
Let every node fiber be $\RR^d$, every transport be the identity, and every relation edge $e=(i,j)$ have weight $w_e>0$.  Choose an arbitrary orientation and let $C\in\RR^{|E|\times n}$ be the weighted incidence matrix, whose row for $e$ is $\sqrt{w_e}(e_j-e_i)^\top$.  Then
\[
  D=C\otimes I_d,
  \qquad D^\top D=L_w\otimes I_d,
\]
where $L_w=C^\top C$ is the weighted graph Laplacian.  Suppose a scalar anchor of strength $a_i\ge0$ acts isotropically on node $i$.  With $Q=\diag(a_1^2,\ldots,a_n^2)$,
\begin{equation}
  B^\top B=(L_w+Q)\otimes I_d.
  \label{eq:anchored-laplacian}
\end{equation}

For a node set $S$, restriction of columns gives
\[
  B_S^\top B_S=((L_w+Q)_{SS})\otimes I_d.
\]
Consequently
\begin{equation}
  \gamma_k(D,A)^2
  =\min_{\varnothing\ne S\subseteq V,\ |S|\le2k}
       \lambda_{\min}((L_w+Q)_{SS}).
  \label{eq:gamma-principal-laplacian}
\end{equation}
This is an exact principal-submatrix formula, not a graph-level heuristic.  It also shows why the smallest eigenvalue of the full anchored Laplacian can be overly pessimistic when $k\ll n$: $\gamma_k$ minimizes only over small principal supports.

\begin{proposition}[Null space and sparse anchor coverage]
\label{prop:sparse-component-criterion}
Let the graph have connected components $V_1,\ldots,V_c$.  A component is anchored if $a_i>0$ for at least one $i$ in that component.  Then:
\begin{enumerate}
  \item $B$ is globally injective if and only if every component is anchored;
  \item $\gamma_k(D,A)=0$ if and only if some unanchored component has at most $2k$ nodes.
\end{enumerate}
The statements hold for every fiber dimension $d\ge1$.
\end{proposition}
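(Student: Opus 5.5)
The plan is to reduce everything to the scalar anchored Laplacian $M=L_w+Q$ via Equation~\eqref{eq:anchored-laplacian}. Since $B^\top B=M\otimes I_d$, we have $\ker B=\ker M\otimes\RR^d$. The Kronecker structure also gives group supports: a vector $h=\sum_\ell x_\ell\otimes f_\ell$ has $\suppg(h)=\bigcup_\ell \operatorname{supp}(x_\ell)$. So it suffices to identify $\ker M$ for scalar fields, which disposes of the claim about every fiber dimension $d$.

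\textbf{Step 1 (kernel of $M$).} For scalar $x$ we use the energy identity
\[
x^\top Mx=\sum_{e=(i,j)}w_e(x_j-x_i)^2+\sum_i a_i^2x_i^2 .
\]
Both sums vanish exactly when $x$ is constant on each component and $x_i=0$ wherever $a_i>0$. Hence $\ker M=\operatorname{span}\{\mathbf 1_{V_r}: V_r \text{ unanchored}\}$. Part~1 follows at once: $B$ is injective iff $\ker M=0$, i.e.\ iff every component is anchored.

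\textbf{Step 2 (sparse criterion).} By Lemma~\ref{lem:finite-compact} (equivalently Equation~\eqref{eq:gamma-principal-laplacian}), $\gamma_k=0$ iff $\ker B$ contains a nonzero vector with group support at most $2k$.

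For sufficiency, if an unanchored $V_r$ has $|V_r|\le2k$, then $\mathbf 1_{V_r}\otimes f$ with any $f\ne0$ is such a vector.

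For necessity, take a nonzero $h\in\ker B$ and write $h=\sum_r \mathbf 1_{V_r}\otimes u_r$, summing over unanchored components with $u_r\in\RR^d$. Its group support is $\bigcup_{u_r\ne0}V_r$, a union of whole unanchored components. Since $h\ne0$, at least one $u_r$ is nonzero, so the support contains some unanchored component in full. If the support has size at most $2k$, that component has at most $2k$ nodes.

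\textbf{Main obstacle.} The step needing care is this last one: the group support of a kernel vector is a union of entire components, never a proper subset of one. This is exactly what the explicit description of $\ker M$ from Step~1, together with disjointness of components, provides. The argument must also not assume connectivity of the graph or $d=1$.

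Alternatively, one can argue through principal submatrices. For $|S|\le 2k$, $M_{SS}$ is singular iff some nonzero $x$ supported in $S$ has $x^\top Mx=0$, and the same energy identity then forces $S$ to contain a whole unanchored component. Both routes give the same conclusion, and I would present the kernel route as primary, citing the group-spark characterization for the equivalence with $\gamma_k=0$.
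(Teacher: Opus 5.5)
Your proof is correct and is essentially the paper's argument. In both, the energy identity forces every kernel vector to be constant on each component and zero on anchored ones, so the group support of any nonzero null vector contains a whole unanchored component, and Lemma~\ref{lem:finite-compact} turns this into the $\gamma_k$ criterion. The only difference is that you pass through the scalar matrix $L_w+Q$ and its Kronecker structure, whereas the paper applies the energy identity directly to the $d$-dimensional blocks; also, your general support formula for $\sum_\ell x_\ell\otimes f_\ell$ holds only when the $f_\ell$ are linearly independent, though your necessity step relies on disjointness of the $V_r$ rather than on that formula.
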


\begin{proof}
Because $\|Bx\|_2^2=\sum_{(i,j)\in E}w_{ij}\|x_i-x_j\|_2^2+
\sum_i a_i^2\|x_i\|_2^2$, a vector lies in $\ker B$ exactly when it is constant on each connected component and its constant is zero on every anchored component.  Thus the null space contains one copy of $\RR^d$ for every unanchored component, proving the first claim.

Every nonzero null vector is constant and nonzero on at least one complete unanchored component.  Its group support therefore contains all nodes of that component.  The smallest possible nonzero null support is the size of the smallest unanchored component.  By the exact identifiability equivalence, $\gamma_k=0$ exactly when that support size is at most $2k$.
\end{proof}

This proposition refines the informal slogan ``one anchor per component.''  One anchor per component is necessary and sufficient for recovery without a sparsity restriction.  For $k$-sparse recovery, a large unanchored component can have no $2k$-sparse null vector, so $\gamma_k$ can be positive even though $B$ is globally singular.  The field is then identifiable only over the declared sparse class.

\paragraph{A two-node field in closed form}
Consider two scalar nodes joined by one identity relation and anchor node 1 with strength $a\ge0$:
\[
  B=\begin{bmatrix}-1&1\\a&0\end{bmatrix},\qquad
  B^\top B=\begin{bmatrix}1+a^2&-1\\-1&1\end{bmatrix}.
\]
For $k=1$, supports of size at most $2$ include the full field, so
\begin{equation}
  \gamma_1(D,A)^2
  =\frac{2+a^2-\sqrt{a^4+4}}{2}.
  \label{eq:two-node-gamma}
\end{equation}
At $a=0$ the margin is zero: adding the same scalar to both responses changes neither relation residual.  For every $a>0$ the margin is positive.  As $a\downarrow0$, a Taylor expansion gives $\gamma_1^2=a^2/2+O(a^4)$; hence recovery is identifiable but badly conditioned under a weak anchor.  As $a\to\infty$, $\gamma_1^2\to1$, so relation strength becomes the bottleneck.  Merely making an already strong anchor stronger cannot push the margin beyond the information carried through the edge.

The $d$-dimensional isotropic case repeats each eigenvalue $d$ times and has the same margin.  If the transport is an orthogonal matrix $T$, change variables at node 2 by $x_2\mapsto T^{-1}x_2$; singular values are unchanged.  A nonorthogonal transport changes the fiber metric and must be normalized before margins across relation types are compared.

\subsection{Redundancy, path geometry, and measurement design}

\paragraph{Duplicate relations under fixed information budget}
Suppose one relation row block is $R$.  Repeating it $r$ times without normalization creates
\[
  D_{\mathrm{raw}}=\begin{bmatrix}R\\ \vdots\\R\end{bmatrix},
  \qquad D_{\mathrm{raw}}^\top D_{\mathrm{raw}}=rR^\top R.
\]
This increases singular values by $\sqrt r$, but it also assumes $r$ independent measurements each with the original precision.  Prompt duplication does not justify that assumption when copies are deterministic or perfectly correlated.

Under a fixed total family weight, each copy receives coefficient $r^{-1/2}$:
\[
  D_{\mathrm{norm}}=\frac1{\sqrt r}
  \begin{bmatrix}R\\ \vdots\\R\end{bmatrix}.
\]
Then
\begin{equation}
  D_{\mathrm{norm}}^\top D_{\mathrm{norm}}=R^\top R.
  \label{eq:duplicate-gram}
\end{equation}
For any fixed anchor $A$, the stacked Gram matrix and every restricted principal Gram matrix are identical to those with one copy.  Therefore
\[
  \gamma_k(D_{\mathrm{norm}},A)=\gamma_k(R,A)
\]
for every $k$.  The synthetic artifact constructs the repeated matrix, rescales its rows, and recomputes the margin; it does not insert the one-copy value into the result table.

If copies have independent random noise, whitening may legitimately yield more information.  If their noise correlation is $\rho$, the gain lies between the independent and identical-copy extremes and is determined by the inverse covariance, not the raw number of prompts.  This is why Appendix~\ref{app:probabilistic} defines $\gamma_{k,\Sigma}$.

\paragraph{A path graph with one anchor}
Let $V=\{1,\ldots,n\}$, unit-weight edges connect consecutive nodes, and node 1 has anchor strength $a>0$.  For a scalar vector $x$,
\[
  \|Bx\|_2^2=a^2x_1^2+\sum_{i=1}^{n-1}(x_{i+1}-x_i)^2.
\]
Write $\Delta_i=x_{i+1}-x_i$.  Since
\[
 x_j=x_1+\sum_{i=1}^{j-1}\Delta_i,
\]
Cauchy--Schwarz gives
\[
 |x_j|^2\le2|x_1|^2+2(j-1)\sum_{i=1}^{j-1}|\Delta_i|^2
 \le2|x_1|^2+2n\sum_{i=1}^{n-1}|\Delta_i|^2.
\]
Summing over $j$ yields
\begin{equation}
 \|x\|_2^2
 \le 2n|x_1|^2+2n^2\sum_{i=1}^{n-1}|\Delta_i|^2
 \le C_{n,a}\|Bx\|_2^2,
 \quad C_{n,a}=\max\{2n/a^2,2n^2\}.
 \label{eq:path-poincare}
\end{equation}
Hence the global smallest singular value satisfies
\[
  \sigma_{\min}(B)\ge C_{n,a}^{-1/2}.
\]
The $n^{-1}$ scaling of this elementary bound captures the fact that an error far from the anchor is communicated through a long chain.  Stronger discrete Poincare inequalities improve constants but not the qualitative lesson: connectivity establishes visibility, while path length controls conditioning.

For $k\ll n$, formula~\eqref{eq:gamma-principal-laplacian} can give a larger restricted margin because no admissible witness occupies the entire path.  This illustrates why $k$ belongs in the definition of the difficulty object.  A relation design may be poor for dense field reconstruction yet adequate for one localized response failure.

\paragraph{Adding measurements and comparing anchor placements}
Let $B$ be an existing operator and let $C$ be any additional normalized measurement block.  For every support $S$,
\[
  [B;C]_S^\top[B;C]_S=B_S^\top B_S+C_S^\top C_S\succeq B_S^\top B_S.
\]
Therefore
\begin{equation}
  \gamma_k([B;C])\ge\gamma_k(B).
  \label{eq:measurement-monotonicity}
\end{equation}
Equality is possible when $C$ is blind to a minimizing witness of $B$.  The theorem justifies greedy query acquisition but also shows why counting new rows is insufficient.

Suppose $h^\star$ is a unit $2k$-sparse witness attaining $\|Bh^\star\|_2=\gamma_k(B)$.  For a candidate scalar measurement $c^\top$, the energy of this witness becomes
\[
  \|Bh^\star\|_2^2+|c^\top h^\star|^2.
\]
A candidate with large $|c^\top h^\star|$ attacks the current weakest direction.  Recomputing the exact margin after each candidate remains necessary because the minimizing support or vector can switch.  The witness score is a principled screening rule, not an exact submodular guarantee.

Anchor placement has the same geometry.  In an identity-transport component, anchoring a node on which $h^\star$ has negligible mass barely changes the current bottleneck; anchoring a high-mass node can increase it sharply.  Once that direction is lifted, another component or support can become limiting, producing the piecewise-smooth phase curves observed in exact enumeration.

\subsection{Typed transports and audit procedures}

\paragraph{Typed transports and gauge transformations}
The Laplacian formulas above use identity transports only for transparency.  Let every edge transport $T_{ij}$ be invertible and suppose there exist node maps $U_i$ such that
\[
  T_{ij}=U_jU_i^{-1}.
\]
This is a flat transport system.  Under the change of variables $\widetilde x_i=U_i^{-1}x_i$, relation consistency becomes $\widetilde x_j-\widetilde x_i=0$.  If every $U_i$ is orthogonal in the declared fiber metric, the change is an isometry and the restricted margin is exactly that of an identity-transport graph with transformed anchors.

If the $U_i$ are merely invertible, let
\[
  \kappa_- =\min_i\sigma_{\min}(U_i),\qquad
  \kappa_+ =\max_i\sigma_{\max}(U_i).
\]
The block-diagonal change of coordinates bounds norms by $\kappa_-\|\widetilde x\|\le\|x\|\le\kappa_+\|\widetilde x\|$.  Margins in the two coordinate systems can therefore differ by condition-number factors.  Comparing relation types without declaring fiber metrics can mistake a change of units for easier recovery.

Non-flat transports have nontrivial cycle holonomy: transporting around a cycle need not return the original vector.  Then a globally consistent nonzero field may not exist, and the kernel of $D$ is governed jointly by graph topology and transport products.  The general restricted singular-value definition remains valid; the simple component criterion does not.  This is precisely why the framework begins with typed linear operators rather than assuming every relation is equality.

\paragraph{Audit recipe for a new design}
For a finite candidate design, an exact audit proceeds as follows.
\begin{enumerate}
  \item Fix fiber coordinates, norms, transport maps, row weights, anchor maps, and $k$ before observing repair outcomes.
  \item Assemble $B=[D;A]$ using the same parser and normalization used at inference.
  \item Enumerate every nonempty node support $S$ with $|S|\le2k$ and compute $\sigma_{\min}(B_S)$ with a documented numerical tolerance.
  \item Store the minimizing support and right singular vector as a witness.  Verify directly that its group support and Rayleigh quotient match the report.
  \item For zero margins, construct the indistinguishable pair from a sparse null vector.  For positive margins, perturb along the witness to check the predicted $1/\gamma_k$ sensitivity.
  \item If rows are estimated or stochastic, repeat the computation after whitening and report an operator-uncertainty lower certificate.
\end{enumerate}

This recipe turns $\gamma_k$ from an abstract symbol into a falsifiable certificate.  Every value in the experiment artifact is accompanied by enough operator information to rerun these checks without model weights.

\end{document}